\newtheorem{theorem}{Theorem}
\newtheorem{lemma}{Lemma}
\newtheorem{corollary}{Corollary}
\newtheorem{definition}{Definition}
\newtheorem{remark}{Remark}
\def\b{\ensuremath\boldsymbol}
\icmltitlerunning{Generative Adversarial Networks and Adversarial Autoencoders: Tutorial and Survey}
\begin{document}

\AddToShipoutPictureBG*{%
  \AtPageUpperLeft{%
    \setlength\unitlength{1in}%
    \hspace*{\dimexpr0.5\paperwidth\relax}
    \makebox(0,-0.75)[c]{\normalsize {\color{black} To appear as a part of an upcoming textbook on dimensionality reduction and manifold learning.}}
    }}

\twocolumn[
\icmltitle{Generative Adversarial Networks and Adversarial Autoencoders: \\Tutorial and Survey}

% It is OKAY to include author information, even for blind
% submissions: the style file will automatically remove it for you
% unless you've provided the [accepted] option to the icml2016
% package.
\icmlauthor{Benyamin Ghojogh}{bghojogh@uwaterloo.ca}
\icmladdress{Department of Electrical and Computer Engineering, 
\\Machine Learning Laboratory, University of Waterloo, Waterloo, ON, Canada}
\icmlauthor{Ali Ghodsi}{ali.ghodsi@uwaterloo.ca}
\icmladdress{Department of Statistics and Actuarial Science \& David R. Cheriton School of Computer Science, 
\\Data Analytics Laboratory, University of Waterloo, Waterloo, ON, Canada}
\icmlauthor{Fakhri Karray}{karray@uwaterloo.ca}
\icmladdress{Department of Electrical and Computer Engineering, 
\\Centre for Pattern Analysis and Machine Intelligence, University of Waterloo, Waterloo, ON, Canada}
\icmlauthor{Mark Crowley}{mcrowley@uwaterloo.ca}
\icmladdress{Department of Electrical and Computer Engineering, 
\\Machine Learning Laboratory, University of Waterloo, Waterloo, ON, Canada}

% You may provide any keywords that you
% find helpful for describing your paper; these are used to populate
% the "keywords" metadata in the PDF but will not be shown in the document
\icmlkeywords{Tutorial}

\vskip 0.3in
]

\begin{abstract}
This is a tutorial and survey paper on Generative Adversarial Network (GAN), adversarial autoencoders, and their variants. We start with explaining adversarial learning and the vanilla GAN. Then, we explain the conditional GAN and DCGAN. The mode collapse problem is introduced and various methods, including minibatch GAN, unrolled GAN, BourGAN, mixture GAN, D2GAN, and Wasserstein GAN, are introduced for resolving this problem. Then, maximum likelihood estimation in GAN are explained along with f-GAN, adversarial variational Bayes, and Bayesian GAN. Then, we cover feature matching in GAN, InfoGAN, GRAN, LSGAN, energy-based GAN, CatGAN, MMD GAN, LapGAN, progressive GAN, triple GAN, LAG, GMAN, AdaGAN, CoGAN, inverse GAN, BiGAN, ALI, SAGAN, Few-shot GAN, SinGAN, and interpolation and evaluation of GAN. Then, we introduce some applications of GAN such as image-to-image translation (including PatchGAN, CycleGAN, DeepFaceDrawing, simulated GAN, interactive GAN), text-to-image translation (including StackGAN), and mixing image characteristics (including FineGAN and MixNMatch). Finally, we explain the autoencoders based on adversarial learning including adversarial autoencoder, PixelGAN, and implicit autoencoder. 
\end{abstract}

\section{Introduction}

Suppose we have a generative model which takes a random noise as input and generates a data point. We want the generated data point to be of good quality; hence, we should somehow judge its quality. One way to judge it is to observe the generated sample and assess its quality visually. In this case, the judge is a human. However, we cannot take derivative of human's judgment for optimization. Generative Adversarial Network (GAN), proposed in \cite{goodfellow2014generative}, has the same idea but it can take derivative of the judgment. For that, it uses a classifier as the judge rather than a human. Hence, we have a generator generating a sample and a binary classifier (or discriminator) to classify the generated sample as a real or generated sample. 
This classifier can be a pre-trained network which is already trained by some real and generated (fake) data points. However, GAN puts a step ahead and lets the classifier be trained simultaneously with training the generator. This is the core idea of adversarial learning with the classifier, also called the discriminator, and the generator compete each other; hence, they make each other stronger gradually by this competition \cite{goodfellow2020generative}.  

It is noteworthy that the term ``adversarial" is used in two main streams of research in machine learning and they should not be confused. These two research areas are:
\begin{itemize}
\item Adversarial attack, also called learning with adversarial examples or adversarial machine learning. This line of research inspects some examples which can be changed slightly but wisely to fool a trained learning model. For example, perturbation of some specific pixels in the input image  may change the decision of learning model. The reason for this can be analyzed theoretically. Some example works in this area are \cite{huang2011adversarial,moosavi2016deepfool,kurakin2017adversarial,kurakin2017adversarial2,madry2018towards}.
\item Adversarial learning for generation. This line of research is categorized as generative models \cite{ng2002discriminative} and/or methods based on that. GAN is in this line of research. This paper focuses on this research area.
\end{itemize}

Another good tutorial on GAN is \cite{goodfellow2016nips} but it does not cover most recent methods in adversarial learning. Also, an honorary introduction of GAN, by several main contributors of GAN, is \cite{goodfellow2020generative}. 
Some other existing surveys on GAN are \cite{wang2017generative,creswell2018generative,gonog2019review,hong2019generative,pan2019recent}.
This paper is a tutorial and survey on GAN and its variants. 

\section*{Required Background for the Reader}

This paper assumes that the reader has general knowledge of calculus, probability, linear algebra, and basics of optimization. 

\section{Generative Adversarial Network (GAN)}

% \subsection{Unsupervised GAN}

\subsection{Adversarial Learning: The Adversarial Game}

The original GAN, also called the vanilla GAN, was proposed in \cite{goodfellow2014generative}.
Consider a $d$-dimensional dataset with $n$ data points, i.e., $\{\b{x}_i \in \mathbb{R}^d\}_{i=1}^n$.
In GAN, we have a generator $G$ which takes a $p$-dimensional random noise $\b{z} \in \mathbb{R}^p$ as input and outputs a $d$-dimensional generated point $\b{x} \in \mathbb{R}^d$. Hence, it is the mapping $G: \b{z} \rightarrow \b{x}$ where:
\begin{align}\label{equation_GAN_G}
G(\b{z}) = \b{x}.
\end{align}
The random noise can be seen as a latent factor on which the generated data point is conditioned. 
The probabilistic graphical model of generator is a variable $\b{x}$ conditioned on a latent variable $\b{z}$ (see {\citep[Fig. 13]{goodfellow2016nips}} for its visualization). 

Let the distribution of random noise be denoted by $\b{z} \sim p_z(\b{z})$.
We want the generated $\widehat{\b{x}}$ to be very similar to some original (or real) data point $\b{x}$ in the dataset. We need a module to judge the quality of the generated point to see how similar it is to the real point. 
This module can be a human but we cannot take derivative of human's judgment for optimization!
A good candidate for the judge is a classifier, also called the discriminator. The discriminator (also called the critic), denoted by $D: \b{x} \rightarrow [0,1]$, is a binary classifier which classifies the generated point as a real or generated point: 
\begin{align}\label{equation_GAN_D}
D(\b{x}) := 
\left\{
    \begin{array}{ll}
        1 & \mbox{if } \b{x} \text{ is real}, \\
        0 & \mbox{if } \b{x} \text{ is generated (fake)}.
    \end{array}
\right.
\end{align}
The perfect discriminator outputs one for real points and zero for generated points. 
The discriminator's output is in the range $[0,1]$ where the output for real data is closer to one and the output for fake data is closer to zero. 
If the generated point is very good and closely similar to a real data point, the classifier may make a mistake and outputs a value close to one for it. Therefore, if the classifier makes a mistake for the generated point, the generator has done a good job in generating a data point. 

The discriminator can be pre-trained but we can make the problem more sophisticated. Let us train the discriminator simultaneously while we are training the generator. This makes the discriminator $D$ and the generator $G$ stronger gradually while they compete each other. On one hand, the generator tries to generate realistic points to fool the discriminator and make it a hard time to distinguish the generated point from a real point. On the other hand, the discriminator tries to discriminate the fake (i.e., generated) point from a real point. When one of them gets stronger in training, the other one tries to become stronger to be able to compete. Therefore, there is an adversarial game between the generator and the discriminator. This game is zero-sum because whatever one of them loses, the other wins. 

\subsection{Optimization and Loss Function}\label{section_GAN_optimization_and_loss}

\begin{figure}[!t]
\centering
\includegraphics[width=3.2in]{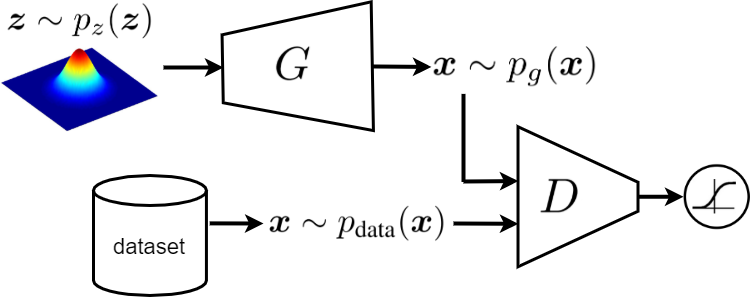}
\caption{The structure of GAN.}
\label{figure_GAN}
\end{figure}

We denote the probability distributions of dataset and noise by $p_\text{data}(\b{x})$ and $p_z(\b{z})$, respectively. 
The structure of GAN is depicted in Fig. \ref{figure_GAN}. As the figure shows, the discriminator is trained by real points from dataset as well as generated points from the generator. 
The discriminator and generator are trained simultaneously.
The optimization loss function for both the discriminator and generator is:
\begin{equation}\label{equation_GAN_loss}
\begin{aligned}
\min_G \max_D\,\,\,\, &V(D,G) := \mathbb{E}_{\b{x} \sim p_\text{data}(\b{x})}\Big[\log\!\big(D(\b{x})\big)\Big] \\
&+ \mathbb{E}_{\b{z} \sim p_z(\b{z})}\Big[\log\!\Big(1 - D\big(G(\b{z})\big)\Big)\Big],
\end{aligned}
\end{equation}
where $\mathbb{E}[.]$ denotes the expectation operator and the loss function $V(D,G)$ is also called the value function of the game. 
In practice, we can use the Monte Carlo approximation \cite{ghojogh2020sampling} of expectation where the expectations are replaced with averages over the mini-batch. 
This loss function is in the form of a cross-entropy loss. 

The first term in Eq. (\ref{equation_GAN_loss}) is expectation over the real data. This term is only used for the discriminator while it is a constant for the generator. According to Eq. (\ref{equation_GAN_D}), $D(\b{x})$ outputs one (the larger label) for the real data; therefore, the discriminator maximizes this term because it assigns the larger label to the real data. 

The second term in Eq. (\ref{equation_GAN_loss}) is expectation over noise. It inputs the noise $\b{z}$ to the generator to have $G(\b{z})$. The output of generator, which is the generated point, is fed as input to the discriminator (see Fig. \ref{figure_GAN}) to have $D\big(G(\b{z})\big)$. The discriminator wants to minimize $D\big(G(\b{z})\big)$ because the smaller label is assigned to the generated data, according to Eq. (\ref{equation_GAN_D}). In other words, the discriminator wants to maximize $1- D\big(G(\b{z})\big)$. As logarithm is a monotonic function, we can say that the discriminator wants to maximize $\mathbb{E}_{\b{z} \sim p_z(\b{z})}[\log(1 - D(G(\b{z})))]$ which is the second term in Eq. (\ref{equation_GAN_loss}). As opposed to the discriminator, the generator minimizes $\mathbb{E}_{\b{z} \sim p_z(\b{z})}[\log(1 - D(G(\b{z})))]$ which is the second term in Eq. (\ref{equation_GAN_loss}). This is because the generator wants to fool the discriminator to label the generated data as real data. 

The Eq. (\ref{equation_GAN_loss}) is a minimax optimization problem \cite{du2013minimax} and can be solved using alternating optimization \cite{ghojogh2021kkt} where we optimize over $D$ and over $G$ iteratively until convergence (i.e., Nash equilibrium). The original GAN \cite{goodfellow2014generative} uses a step of stochastic gradient descent \cite{ghojogh2021kkt} for updates of each variable in the alternating optimization. If we denote the loss function in Eq. (\ref{equation_GAN_loss}) by $V(D,G)$, the alternating optimization is done as:
\begin{align}
& D^{(k+1)} := D^{(k)} + \eta^{(k)} \frac{\partial }{\partial D} \Big(V(D,G^{(k)})\Big), \label{equation_GAN_alternating_opt_D} \\
& G^{(k+1)} := G^{(k)} - \eta^{(k)} \frac{\partial }{\partial G} \Big(V(D^{(k+1)},G)\Big), \label{equation_GAN_alternating_opt_G}
\end{align}
where $k$ is the index of iteration and $\eta^{(k)}$ is the learning rate at iteration $k$. 
Throughout this paper, derivatives w.r.t. $D$ and $G$ mean the derivatives w.r.t. the parameters (weights) of $D$ and $G$ networks, respectively. 
Eqs. (\ref{equation_GAN_alternating_opt_D}) and (\ref{equation_GAN_alternating_opt_G}) are one step of gradient ascent and gradient descent, respectively. 
Note that the gradients here are the average of gradients in the mini-batch. Every mini-batch includes both real and generated data. 
The paper \cite{goodfellow2014generative} suggests that Eq. (\ref{equation_GAN_alternating_opt_D}) can be performed for several times before performing Eq. (\ref{equation_GAN_alternating_opt_G}); however, the experiments of that paper perform Eq. (\ref{equation_GAN_alternating_opt_D}) for only one time before performing Eq. (\ref{equation_GAN_alternating_opt_G}). 
Also note that another way to solve the optimization problem in GAN is simultaneous optimization \cite{mescheder2017numerics} in which Eqs. (\ref{equation_GAN_alternating_opt_D}) and (\ref{equation_GAN_alternating_opt_G}) are performed at the same time and not one after the other.

\begin{remark}[Minimax versus maximin in GAN {\citep[Section 5]{goodfellow2016nips}}]
We saw in Eq. (\ref{equation_GAN_loss}) that the optimization of GAN is a minimax problem:
\begin{align}\label{equation_GAN_minimax}
\min_G \max_D\,\,\, V(D,G). 
\end{align}
By changing the order of optimization, one can see GAN as a maximin problem \cite{goodfellow2016nips}:
\begin{align}\label{equation_GAN_maximin}
\max_D \min_G\,\,\, V(D,G). 
\end{align}
In fact, under some conditions, Eqs. (\ref{equation_GAN_minimax}) and (\ref{equation_GAN_maximin}) are equivalent \cite{du2013minimax}. 
\end{remark}

\subsection{Network Structure of GAN}\label{section_GAN_structure}

In practice, the discriminator and generator are two (deep) neural networks. 
The structure of GAN is depicted in Fig. \ref{figure_GAN}.
The first layer of discriminator network is $d$-dimensional and its last layer is one dimensional with scalar output. 
In the original GAN, maxout activation function \cite{goodfellow2013maxout} is used for all layers except the last layer which has the sigmoid activation function to output a probability to model Eq. (\ref{equation_GAN_D}). The closer the output of $D$ to one, the more probable its input is to be real.

The generator network has a $p$-dimensional input layer for noise and a $d$-dimensional output layer for generating data. 
In the generator, a combination of ReLU \cite{nair2010rectified} and sigmoid activation functions are used. 
The space of noise as the input to the generator is called the latent space or the latent factor. 
Each of the Eqs. (\ref{equation_GAN_alternating_opt_D}) and (\ref{equation_GAN_alternating_opt_G}) are performed using backpropagation in the neural networks. 

\subsection{Optimal Solution of GAN}

\begin{theorem}[{\citep[Proposition 1]{goodfellow2014generative}}]\label{theorem_GAN_optimal_discriminator}
For a fixed generator $G$, the optimal discriminator is:
\begin{align}\label{equation_GAN_D_optimum}
D^*(\b{x}) = \frac{p_\text{data}(\b{x})}{p_\text{data}(\b{x}) + p_g(\b{x})},
\end{align}
where $p_\text{data}(\b{x})$ is the probability distribution of the real dataset evaluated at point $\b{x}$ and $p_g(\b{x})$ is the probability distribution of output of generator evaluated at point $\b{x}$.
\end{theorem}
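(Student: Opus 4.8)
The plan is to reduce the value function to a single integral over $\b{x}$ and then maximize its integrand pointwise. First I would rewrite the second term of Eq.~(\ref{equation_GAN_loss}) as an expectation over the generated data rather than over the noise. Since the generator induces the distribution $p_g$ on its outputs via $\b{x} = G(\b{z})$, the law-of-the-unconscious-statistician (change of variables) gives
\begin{align*}
\mathbb{E}_{\b{z} \sim p_z(\b{z})}\Big[\log\!\big(1 - D(G(\b{z}))\big)\Big]
= \mathbb{E}_{\b{x} \sim p_g(\b{x})}\Big[\log\!\big(1 - D(\b{x})\big)\Big].
\end{align*}
Writing both expectations as integrals and combining them over a common variable, the value function becomes
\begin{align*}
V(D,G) = \int \Big[\, p_\text{data}(\b{x})\log\!\big(D(\b{x})\big) + p_g(\b{x})\log\!\big(1 - D(\b{x})\big) \,\Big]\, d\b{x}.
\end{align*}

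The crucial observation is that, for a \emph{fixed} generator, $p_\text{data}$ and $p_g$ are fixed, and the discriminator may take an arbitrary value in $[0,1]$ at each point $\b{x}$ independently of its values elsewhere. Hence maximizing the integral over $D$ reduces to maximizing the integrand pointwise for each $\b{x}$. For fixed $\b{x}$ I would set $a := p_\text{data}(\b{x})$ and $b := p_g(\b{x})$ and study the scalar function $y \mapsto a\log(y) + b\log(1-y)$ on $[0,1]$. Differentiating and setting the derivative to zero gives $\frac{a}{y} - \frac{b}{1-y} = 0$, i.e. $a(1-y) = b\,y$, whose unique solution is $y^* = \tfrac{a}{a+b}$; this recovers Eq.~(\ref{equation_GAN_D_optimum}). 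To confirm it is a maximizer rather than a critical point of another type, I would note the second derivative $-\tfrac{a}{y^2} - \tfrac{b}{(1-y)^2}$ is strictly negative, so the integrand is concave in $y$ and $y^*$ is the global maximum on $(0,1)$.

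The main obstacle, and the step deserving the most care, is justifying the pointwise maximization and the change of variables on the correct domain. The reduction to pointwise optimization is legitimate only where both densities are defined; outside the union of the supports of $p_\text{data}$ and $p_g$ the integrand vanishes and the value of $D$ there is irrelevant, so $D^*$ is determined only on $\mathrm{supp}(p_\text{data}) \cup \mathrm{supp}(p_g)$. One should also note the degenerate cases: if exactly one of $a,b$ is zero the optimum pushes $y^*$ to the boundary ($1$ or $0$, respectively), consistent with the formula, and if both are zero the integrand is identically zero. I would therefore state the result as holding almost everywhere on the relevant support, which is the standard convention for this classical argument and suffices for the subsequent use of $D^*$ in analyzing the generator's objective.
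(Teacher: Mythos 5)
Your proof is correct and follows essentially the same route as the paper's: reduce $V(D,G)$ to a single integral over $\b{x}$ and maximize the integrand $p_\text{data}(\b{x})\log(D(\b{x})) + p_g(\b{x})\log(1 - D(\b{x}))$ pointwise via the first-order condition, yielding $D^*(\b{x}) = p_\text{data}(\b{x})/(p_\text{data}(\b{x}) + p_g(\b{x}))$. If anything, your version is more careful than the paper's: you justify the change of measure by the pushforward (law-of-the-unconscious-statistician) argument rather than the paper's explicit $G^{-1}$ Jacobian manipulation (which is not really valid since $G$ is generally not invertible), you check the second derivative so the critical point is genuinely a maximum, and you correctly restrict the conclusion to almost everywhere on $\mathrm{supp}(p_\text{data}) \cup \mathrm{supp}(p_g)$.
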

\begin{proof}
According to the definition of expectation, the loss function in Eq. (\ref{equation_GAN_loss}) can be stated as:
\begin{align*} 
V(D,G) = &\int_{\b{x}} p_\text{data}(\b{x}) \log(D(\b{x})) d\b{x} \\
&+ \int_{\b{z}} p_z(\b{z}) \log(1 - D(G(\b{z}))) d\b{z}.
\end{align*}
According to Eq. (\ref{equation_GAN_G}), we have:
\begin{align*}
& G(\b{z}) = \b{x} \implies \b{z} = G^{-1}(\b{x}) \implies d\b{z} = (G^{-1})'(\b{x}) d\b{x},
\end{align*}
where $(G^{-1})'(\b{x})$ is the derivative of $(G^{-1})(\b{x})$ with respect to (w.r.t.) $\b{x}$. Hence:
\begin{align*} 
&V(D,G) = \int_{\b{x}} p_\text{data}(\b{x}) \log(D(\b{x})) d\b{x} \\
&+ \int_{\b{z}} p_z(G^{-1}(\b{x})) \log(1 - D(\b{x})) (G^{-1})'(\b{x}) d\b{x}.
\end{align*}
The relation of distributions of input and output of generator is:
\begin{align}\label{equation_pg_pz_relation}
p_g(\b{x}) = p_z(\b{z}) \times G^{-1}(\b{x}) = p_z(G^{-1}(\b{x}))\, G^{-1}(\b{x}),
\end{align}
where $G^{-1}(\b{x})$ is the Jacobian of distribution at point $\b{x}$. Hence:
\begin{align} 
&V(D,G) = \int_{\b{x}} p_\text{data}(\b{x}) \log(D(\b{x})) d\b{x} \nonumber \\
&+ \int_{\b{z}} p_g(\b{x}) \log(1 - D(\b{x})) d\b{x} \nonumber \\
&= \int_{\b{x}} \Big( p_\text{data}(\b{x}) \log(D(\b{x})) + p_g(\b{x}) \log(1 - D(\b{x})) \Big) d\b{x}. \label{equation_GAN_loss_inTermsOf_x}
\end{align}
For optimization in Eq. (\ref{equation_GAN_loss}), taking derivative w.r.t. $D(\b{x})$ gives:
\begin{align*}
&\frac{\partial V(D,G)}{\partial D(\b{x})} \\
&\overset{(a)}{=} \frac{\partial }{\partial D(\b{x})} \Big( p_\text{data}(\b{x}) \log(D(\b{x})) + p_g(\b{x}) \log(1 - D(\b{x})) \Big) \\
&= \frac{p_\text{data}(\b{x})}{D(\b{x})} - \frac{p_g(\b{x})}{1 - D(\b{x})} \\
&= \frac{p_\text{data}(\b{x}) (1 - D(\b{x})) - p_g(\b{x}) D(\b{x})}{D(\b{x}) (1 - D(\b{x}))} \overset{\text{set}}{=} 0 \\
&\implies p_\text{data}(\b{x}) - p_\text{data}(\b{x}) D(\b{x}) - p_g(\b{x}) D(\b{x}) = 0 \\
&\implies D(\b{x}) = \frac{p_\text{data}(\b{x})}{p_\text{data}(\b{x}) + p_g(\b{x})},
\end{align*}
where $(a)$ is because taking derivative w.r.t. $D(\b{x})$ considers a specific $\b{x}$ and hence it removes the integral (summation). Q.E.D.
\end{proof}

\begin{theorem}[{\citep[Theorem 1]{goodfellow2014generative}}]
The optimal solution of GAN is when the distribution of generated data becomes equal to the distribution of data:
\begin{align}\label{equation_GAN_pg_equals_pdata}
p_{g^*}(\b{x}) = p_\text{data}(\b{x}).
\end{align}
\end{theorem}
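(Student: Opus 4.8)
The plan is to collapse the minimax problem into a single minimization over $G$ by plugging in the discriminator's optimum. Theorem~\ref{theorem_GAN_optimal_discriminator} already gives the optimal $D^*$ for any fixed $G$, so I would define the induced criterion $C(G) := \max_D V(D,G) = V(D^*,G)$ and substitute Eq.~(\ref{equation_GAN_D_optimum}) into the integral form of the value function in Eq.~(\ref{equation_GAN_loss_inTermsOf_x}). Using $D^*(\b{x}) = p_\text{data}(\b{x})/(p_\text{data}(\b{x}) + p_g(\b{x}))$ and its complement $1 - D^*(\b{x}) = p_g(\b{x})/(p_\text{data}(\b{x}) + p_g(\b{x}))$, this yields
\begin{align*}
C(G) = \mathbb{E}_{\b{x} \sim p_\text{data}}\!\left[\log\frac{p_\text{data}(\b{x})}{p_\text{data}(\b{x}) + p_g(\b{x})}\right] + \mathbb{E}_{\b{x} \sim p_g}\!\left[\log\frac{p_g(\b{x})}{p_\text{data}(\b{x}) + p_g(\b{x})}\right].
\end{align*}

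Next I would identify the candidate minimizer and its value. Setting $p_g = p_\text{data}$ forces $D^*(\b{x}) = 1/2$ everywhere, so both logarithms equal $\log(1/2)$ and $C(G) = -\log 4$. To prove this is in fact the global minimum, I would rewrite $C(G)$ in terms of a recognizable divergence: by adding and subtracting $\log 2$ inside each expectation so that the mixture density $(p_\text{data} + p_g)/2$ appears in the denominators, I get
\begin{align*}
C(G) = -\log 4 + \text{KL}\!\left(p_\text{data} \,\Big\|\, \frac{p_\text{data} + p_g}{2}\right) + \text{KL}\!\left(p_g \,\Big\|\, \frac{p_\text{data} + p_g}{2}\right),
\end{align*}
which is exactly $-\log 4 + 2\,\text{JSD}(p_\text{data} \,\|\, p_g)$, twice the Jensen--Shannon divergence between the two distributions.

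Finally I would invoke the non-negativity of the Kullback--Leibler divergence (equivalently, of the Jensen--Shannon divergence), which gives $C(G) \ge -\log 4$ with equality if and only if both KL terms vanish; since $\text{KL}(p \,\|\, q) = 0$ exactly when $p = q$ almost everywhere, equality holds if and only if $p_\text{data} = (p_\text{data} + p_g)/2$, i.e.\ $p_{g^*} = p_\text{data}$. This establishes Eq.~(\ref{equation_GAN_pg_equals_pdata}). I expect the main obstacle to be the algebraic bookkeeping that recasts $C(G)$ into the Jensen--Shannon form --- in particular tracking the $\log 2$ constants correctly --- together with the appeal to Gibbs' inequality for the non-negativity and the \emph{sharp} equality condition of KL, which is what actually pins down the uniqueness of the optimal generator.
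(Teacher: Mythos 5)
Your proposal is correct and follows essentially the same route as the paper: substitute the optimal discriminator $D^*$ from Eq.~(\ref{equation_GAN_D_optimum}) into Eq.~(\ref{equation_GAN_loss_inTermsOf_x}), massage the constants to expose the mixture $(p_\text{data}+p_g)/2$, recognize the result as $2\,\text{JSD}(p_\text{data}\,\|\,p_g) - \log 4$, and conclude by non-negativity of the divergence. If anything, you are slightly more careful than the paper in spelling out the \emph{sharp} equality condition of the KL divergence (Gibbs' inequality), which is what genuinely pins down $p_{g^*} = p_\text{data}$ rather than merely exhibiting it as a minimizer.
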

\begin{proof}
Putting the optimum $D^*(\b{x})$, i.e. Eq. (\ref{equation_GAN_D_optimum}), in Eq. (\ref{equation_GAN_loss_inTermsOf_x}) gives:
\begin{align*}
&V(D^*, G) \\
&= \int_{\b{x}} \Big( p_\text{data}(\b{x}) \log(D^*(\b{x})) + p_g(\b{x}) \log(1 - D^*(\b{x})) \Big) d\b{x} \\
&\overset{(\ref{equation_GAN_D_optimum})}{=} \int_{\b{x}} \Big[ p_\text{data}(\b{x}) \log\Big(\frac{p_\text{data}(\b{x})}{p_\text{data}(\b{x}) + p_g(\b{x})}\Big) \\
&~~~~~~~~~~~~~~ + p_g(\b{x}) \log\Big(\frac{p_g(\b{x})}{p_\text{data}(\b{x}) + p_g(\b{x})}\Big) \Big] d\b{x} \\
&= \int_{\b{x}} \Big[ p_\text{data}(\b{x}) \log\Big(\frac{p_\text{data}(\b{x})}{2 \times \frac{p_\text{data}(\b{x}) + p_g(\b{x})}{2}}\Big) \\
&~~~~~~~~~~~~~~ + p_g(\b{x}) \log\Big(\frac{p_g(\b{x})}{2 \times \frac{p_\text{data}(\b{x}) + p_g(\b{x})}{2}}\Big) \Big] d\b{x} \\
&= \int_{\b{x}} \Big[ p_\text{data}(\b{x}) \log\Big(\frac{p_\text{data}(\b{x})}{\frac{p_\text{data}(\b{x}) + p_g(\b{x})}{2}}\Big) \\
&~~~ + p_g(\b{x}) \log\Big(\frac{p_g(\b{x})}{\frac{p_\text{data}(\b{x}) + p_g(\b{x})}{2}}\Big) \Big] d\b{x} + \log(\frac{1}{2}) + \log(\frac{1}{2}) \\
&= \int_{\b{x}} \Big[ p_\text{data}(\b{x}) \log\Big(\frac{p_\text{data}(\b{x})}{\frac{p_\text{data}(\b{x}) + p_g(\b{x})}{2}}\Big) \\
&~~~~~~~~~~~~ + p_g(\b{x}) \log\Big(\frac{p_g(\b{x})}{\frac{p_\text{data}(\b{x}) + p_g(\b{x})}{2}}\Big) \Big] d\b{x} - \log(4) 
\end{align*}
\begin{align}
&\overset{(a)}{=} \text{KL}\Big(p_\text{data}(\b{x}) \Big\| \frac{p_\text{data}(\b{x}) + p_g(\b{x})}{2}\Big) \nonumber \\
&~~~~~~~~~~~ + \text{KL}\Big(p_g(\b{x}) \Big\| \frac{p_\text{data}(\b{x}) + p_g(\b{x})}{2}\Big) - \log(4), \label{equation_GAN_loss_KL}
\end{align}
where $(a)$ is because of the definition of KL divergence. 
The Jensen-Shannon Divergence (JSD) is defined as \cite{nielsen2010family}:
\begin{equation}\label{equation_GAN_JSD}
\begin{aligned}
&\text{JSD}(P \| Q) := \frac{1}{2} \text{KL}(P \| \frac{1}{2} (P + Q)) \\
&~~~~~~~~~~~~~~~~~~~~~+ \frac{1}{2} \text{KL}(Q \| \frac{1}{2} (P + Q)),
\end{aligned}
\end{equation}
where $P$ and $Q$ denote the probability densities. In contrast to KL divergence, the JSD is symmetric. The obtained $V(D^*, G)$ can be restated as:
\begin{align}\label{equation_GAN_loss_JSD}
V(D^*, G) = 2\, \text{JSD}\big(p_\text{data}(\b{x})\, \|\, p_g(\b{x})\big) - \log(4), 
\end{align}
According to Eq. (\ref{equation_GAN_loss}), the generator minimizes $V(D^*, G)$. As the JSD is non-negative, the above loss function is minimized if we have:
\begin{align*}
& \text{JSD}\big(p_\text{data}(\b{x})\, \|\, p_{g^*}(\b{x})\big) = 0 \implies p_\text{data}(\b{x}) = p_{g^*}(\b{x}).
\end{align*}
Q.E.D.
\end{proof}

\begin{corollary}[{\citep[Theorem 1]{goodfellow2014generative}}]
From Eqs. (\ref{equation_GAN_pg_equals_pdata}) and (\ref{equation_GAN_loss_JSD}), we conclude that the optimal loss function in GAN is:
\begin{align}
V(D^*, G^*) = -\log(4).
\end{align}
\end{corollary}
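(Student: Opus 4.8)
The plan is to obtain the claimed value by direct substitution, since the two preceding results have already done all the analytic work. The key observation is that Eq.~(\ref{equation_GAN_loss_JSD}) expresses $V(D^*, G)$ for an arbitrary generator $G$ in terms of the Jensen--Shannon divergence between $p_\text{data}$ and $p_g$, while Eq.~(\ref{equation_GAN_pg_equals_pdata}) identifies the optimal generator $G^*$ as the one for which $p_{g^*}(\b{x}) = p_\text{data}(\b{x})$. So the strategy is simply to evaluate the JSD-based expression at the optimal generator.

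First I would write down Eq.~(\ref{equation_GAN_loss_JSD}) and set $G = G^*$, which replaces $p_g$ by $p_{g^*} = p_\text{data}$, giving
\begin{align*}
V(D^*, G^*) = 2\, \text{JSD}\big(p_\text{data}(\b{x})\, \|\, p_\text{data}(\b{x})\big) - \log(4).
\end{align*}
Next I would argue that this self-divergence vanishes. Appealing to the definition of JSD in Eq.~(\ref{equation_GAN_JSD}) with $P = Q = p_\text{data}$, each averaging distribution $\tfrac{1}{2}(P + Q)$ collapses to $p_\text{data}$ itself, so both KL terms become $\text{KL}(p_\text{data} \| p_\text{data}) = 0$ and hence $\text{JSD}(p_\text{data} \| p_\text{data}) = 0$.

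Substituting this back yields $V(D^*, G^*) = 2 \cdot 0 - \log(4) = -\log(4)$, which is the desired identity. I do not expect any genuine obstacle here: the only point requiring care is justifying that the Jensen--Shannon divergence of a distribution with itself is zero, and this is immediate from the nonnegativity and definiteness of KL divergence (equivalently, from the fact that JSD is a statistical distance that vanishes exactly when its two arguments coincide), a property already implicitly invoked when the previous theorem set $\text{JSD}(p_\text{data} \| p_{g^*}) = 0$ in order to conclude $p_\text{data} = p_{g^*}$. In that sense the corollary is little more than reading off the minimum value that the earlier argument was driving toward.
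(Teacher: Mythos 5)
Your proposal is correct and matches the paper's reasoning exactly: the paper presents this as an immediate consequence of substituting $p_{g^*} = p_\text{data}$ from Eq.~(\ref{equation_GAN_pg_equals_pdata}) into Eq.~(\ref{equation_GAN_loss_JSD}), where the self-JSD term vanishes. Your additional justification that $\text{JSD}(p_\text{data} \| p_\text{data}) = 0$ via the definition in Eq.~(\ref{equation_GAN_JSD}) is a sound (and slightly more careful) spelling-out of the step the paper leaves implicit.
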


It is noteworthy that one can generalize Eq. (\ref{equation_GAN_JSD}) in GAN to \cite{huszar2015not}:
\begin{equation}\label{equation_GAN_JSD_interpolate}
\begin{aligned}
&\text{JSD}_\pi(P \| Q) := \pi\, \text{KL}\big(P \| \pi P + (1-\pi) Q\big) \\
&~~~~~~~~~~~~~~~~~~~~~+ (1-\pi)\, \text{KL}\big(Q \| \pi P + (1-\pi) Q\big),
\end{aligned}
\end{equation}
with $\pi \in (0,1)$. Its special case is Eq. (\ref{equation_GAN_JSD}) with $\pi = 0.5$.

\begin{corollary}
From Eqs. (\ref{equation_GAN_D_optimum}) and (\ref{equation_GAN_pg_equals_pdata}), we conclude that at convergence (i.e., Nash equilibrium), the discriminator cannot distinguish between generated and real data:
\begin{equation}
\begin{aligned}
& D^*(\b{x}) = 0.5, \quad \forall \b{x} \sim p_\text{data}(\b{x}), \\
& D^*(\b{x}) = 0.5, \quad \b{x} = G^*(\b{z}), \forall \b{z} \sim p_z(\b{z}).
\end{aligned}
\end{equation}
\end{corollary}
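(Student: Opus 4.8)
The plan is to treat this as a direct consequence of combining the two optimality conditions already established, namely the closed form of the optimal discriminator in Eq. (\ref{equation_GAN_D_optimum}) and the optimal-generator condition in Eq. (\ref{equation_GAN_pg_equals_pdata}). The key observation is that at a Nash equilibrium both players are simultaneously at their best responses, so both equations hold at once: the discriminator has the form $D^*(\b{x}) = p_\text{data}(\b{x}) / (p_\text{data}(\b{x}) + p_g(\b{x}))$, and the generator has driven the generated distribution to $p_{g^*}(\b{x}) = p_\text{data}(\b{x})$.

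First I would substitute the equilibrium condition $p_{g^*}(\b{x}) = p_\text{data}(\b{x})$ into the optimal-discriminator formula. Replacing $p_g(\b{x})$ by $p_\text{data}(\b{x})$ in the denominator collapses the expression to $p_\text{data}(\b{x}) / (2\, p_\text{data}(\b{x})) = 1/2$, which is valid at every $\b{x}$ for which $p_\text{data}(\b{x}) > 0$. This single computation yields the constant value $0.5$ that both displayed lines assert.

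Next I would dispatch the two cases. Since at equilibrium the real and generated distributions coincide, the same value $1/2$ is obtained whether the input is sampled as $\b{x} \sim p_\text{data}(\b{x})$ or produced as $\b{x} = G^*(\b{z})$ with $\b{z} \sim p_z(\b{z})$. In the latter case the pushforward of $p_z$ through $G^*$ is precisely $p_{g^*} = p_\text{data}$, so evaluating $D^*$ on a generated point reduces to the identical substitution, giving $D^*(G^*(\b{z})) = 1/2$ as well. Both lines therefore follow from one and the same algebraic simplification.

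There is essentially no analytic obstacle here, which is exactly why the result is stated as a corollary: the heavy lifting (the variational derivation of $D^*$ and the Jensen--Shannon argument yielding $p_{g^*} = p_\text{data}$) is already complete. The only step worth stating cleanly is the \emph{interpretive} one: a constant output of $0.5$ means the discriminator has been reduced to random guessing and hence cannot distinguish real from generated samples, which is the sense in which the corollary is phrased.
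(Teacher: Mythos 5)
Your proposal is correct and matches the paper's (implicit) argument exactly: the corollary is obtained by substituting the equilibrium condition $p_{g^*}(\b{x}) = p_\text{data}(\b{x})$ from Eq. (\ref{equation_GAN_pg_equals_pdata}) into the optimal-discriminator formula of Eq. (\ref{equation_GAN_D_optimum}), collapsing it to $1/2$ for both real and generated inputs. The paper states this as a corollary with no further proof, so your substitution argument is precisely the intended one.
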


\begin{lemma}[Label smoothing in GAN {\citep[Section 3.4]{salimans2016improved}}]
It is shown that replacing labels $0$ and $1$, respectively, with smoother values $0.1$ and $0.9$ \cite{szegedy2016rethinking} can improve neural network against adversarial attacks \cite{hazan2017adversarial}. If we smooth the labels of discriminator $D$ for real and generated data to be $\alpha$ and $\beta$, respectively, the optimal discriminator becomes \cite{salimans2016improved}:
\begin{align}\label{equation_GAN_D_optimum_smooth_labels}
D^*(\b{x}) = \frac{\alpha p_\text{data}(\b{x}) + \beta p_g(\b{x})}{p_\text{data}(\b{x}) + p_g(\b{x})},
\end{align}
which generalizes Eq. (\ref{equation_GAN_D_optimum}). 
The presence of $p_g(\b{x})$ causes a problem because, for an $\b{x}$ with small $p_\text{data}(\b{x})$ and large $p_g(\b{x})$, the point does not change generator well enough to get close to the real data. Hence, it is recommended to set $\beta=0$ to have one-sided label smoothing. In this case, the optimal discriminator is:
\begin{align}\label{equation_GAN_D_optimum_smooth_labels_2}
D^*(\b{x}) = \frac{\alpha p_\text{data}(\b{x})}{p_\text{data}(\b{x}) + p_g(\b{x})}.
\end{align}
\end{lemma}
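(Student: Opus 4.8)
The plan is to follow the same route as the proof of Theorem~\ref{theorem_GAN_optimal_discriminator}, adapting only the form of the value function to reflect the smoothed labels. The essential observation is that training the discriminator toward a soft target $t$ on a given sample replaces the hard cross-entropy term with $t\log D(\b{x}) + (1-t)\log(1-D(\b{x}))$: the original case $t=1$ on real data recovers $\log D(\b{x})$, and $t=0$ on fake data recovers $\log(1-D(\b{x}))$. Setting $t=\alpha$ on real data and $t=\beta$ on generated data, the value function that the discriminator maximizes becomes
\begin{align*}
V(D,G) &= \mathbb{E}_{\b{x}\sim p_\text{data}(\b{x})}\Big[\alpha\log\!\big(D(\b{x})\big) + (1-\alpha)\log\!\big(1-D(\b{x})\big)\Big] \\
&\quad + \mathbb{E}_{\b{z}\sim p_z(\b{z})}\Big[\beta\log\!\big(D(G(\b{z}))\big) + (1-\beta)\log\!\big(1-D(G(\b{z}))\big)\Big].
\end{align*}

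Next I would convert the noise expectation into an integral over $\b{x}$ exactly as in Eq.~(\ref{equation_pg_pz_relation}), using the change of variables $\b{z}=G^{-1}(\b{x})$ so that the noise density picks up the Jacobian and reduces to $p_g(\b{x})\,d\b{x}$. This collapses the two expectations into a single integral
\begin{align*}
V(D,G) = \int_{\b{x}} \Big[ &\big(\alpha\, p_\text{data}(\b{x}) + \beta\, p_g(\b{x})\big)\log\!\big(D(\b{x})\big) \\
&+ \big((1-\alpha) p_\text{data}(\b{x}) + (1-\beta) p_g(\b{x})\big)\log\!\big(1-D(\b{x})\big)\Big] d\b{x}.
\end{align*}

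Then, as in the earlier proof, I would take the pointwise derivative of the integrand with respect to $D(\b{x})$---valid because fixing $\b{x}$ removes the integral---set it to zero, and solve. Writing $a = \alpha p_\text{data}(\b{x}) + \beta p_g(\b{x})$ and $b = (1-\alpha) p_\text{data}(\b{x}) + (1-\beta) p_g(\b{x})$, the stationarity condition reads $a / D(\b{x}) - b / (1-D(\b{x})) = 0$, which yields $D^*(\b{x}) = a/(a+b)$. Since $a + b = p_\text{data}(\b{x}) + p_g(\b{x})$, this simplifies directly to Eq.~(\ref{equation_GAN_D_optimum_smooth_labels}). Finally, setting $\beta = 0$ recovers the one-sided smoothing formula in Eq.~(\ref{equation_GAN_D_optimum_smooth_labels_2}), while $\alpha=1,\,\beta=0$ recovers Theorem~\ref{theorem_GAN_optimal_discriminator}, confirming the claim of generalization.

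The main obstacle is not the calculus but correctly identifying the smoothed value function: one must recognize that label smoothing injects a $(1-\alpha)\log(1-D(\b{x}))$ term on the real data and a $\beta\log(D(\b{x}))$ term on the fake data, both of which are absent in the hard-label case. Once the integrand is assembled with the correct coefficients $a$ and $b$, the remainder of the argument is a near-verbatim repetition of the stationarity computation used for Theorem~\ref{theorem_GAN_optimal_discriminator}.
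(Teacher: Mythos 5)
Your proposal is correct and takes essentially the same approach as the paper: the paper's own proof is just a one-line sketch (``using $\alpha$ and $\beta$ in the proof of Theorem~\ref{theorem_GAN_optimal_discriminator}''), and your argument is precisely that route carried out in full, with the correctly smoothed value function and the same pointwise stationarity computation. Your key simplification, that $a + b = \big(\alpha p_\text{data}(\b{x}) + \beta p_g(\b{x})\big) + \big((1-\alpha) p_\text{data}(\b{x}) + (1-\beta) p_g(\b{x})\big) = p_\text{data}(\b{x}) + p_g(\b{x})$, is exactly what makes the claimed formula drop out, and your limiting cases ($\beta=0$; $\alpha=1,\beta=0$) confirm both the one-sided formula and the generalization claim.
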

\begin{proof}[Proof (sketch)]
Using $\alpha$ and $\beta$ in the proof of Theorem \ref{theorem_GAN_optimal_discriminator} results in Eq. (\ref{equation_GAN_D_optimum_smooth_labels}).
\end{proof}

\subsection{Convergence and Equilibrium Analysis of GAN}

\begin{theorem}[{\citep[Proposition 2]{goodfellow2014generative}}]
If the discriminator and generator have enough capacity and, at every iteration of the alternating optimization, the discriminator is allowed to reach its optimum value as in Eq. (\ref{equation_GAN_D_optimum}), and $p_g(\b{x})$ is updated to minimize $V(D^*, G)$ stated in Eq. (\ref{equation_GAN_loss_JSD}), $p_g(\b{x})$ converges to $p_\text{data}(\b{x})$ as stated in Eq. (\ref{equation_GAN_pg_equals_pdata}). 
\end{theorem}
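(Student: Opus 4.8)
The plan is to argue entirely in the space of probability densities, treating $p_g$ itself (rather than the parameters of the network $G$) as the optimization variable; this is precisely what the ``enough capacity'' hypothesis licenses. The crux is that the functional the generator minimizes, namely $V(D^*, G)$, is a \emph{convex} functional of $p_g$ with a unique global minimizer, so that a (sub)gradient descent scheme on it is guaranteed to reach that minimizer.

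First I would recall from Eq. (\ref{equation_GAN_loss_inTermsOf_x}) that, for a fixed discriminator $D$, the value function
\begin{align*}
U(p_g, D) := \int_{\b{x}} \Big( p_\text{data}(\b{x}) \log(D(\b{x})) + p_g(\b{x}) \log(1 - D(\b{x})) \Big)\, d\b{x}
\end{align*}
is \emph{affine} (hence convex) in $p_g$, since $p_g$ enters only linearly inside the integral. The generator's true objective is $f(p_g) := \sup_D U(p_g, D)$, and a supremum of a family of convex functions is itself convex, so $f$ is convex in $p_g$. By the previous theorem this supremum is attained at $D = D^*$ of Eq. (\ref{equation_GAN_D_optimum}) and equals $2\,\text{JSD}\big(p_\text{data}(\b{x}) \| p_g(\b{x})\big) - \log(4)$, whose unique global minimizer over densities is $p_g = p_\text{data}$, as established in Eq. (\ref{equation_GAN_pg_equals_pdata}).

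Next I would justify that the prescribed update is genuine (sub)gradient descent on the convex functional $f$. The standard fact is that if $f = \sup_\alpha f_\alpha$ with each $f_\alpha$ convex, and $\beta$ attains the supremum at a point, then the (sub)gradient of $f_\beta$ at that point is a subgradient of $f$ there. Applying this with the family $U(\cdot, D)$ and $\beta = D^*$, the gradient of $U(p_g, D^*)$ with respect to $p_g$ — which is exactly what the hypothesized update uses, since $D$ is driven to its optimum $D^*$ before each generator step — is a valid subgradient of $f$. Because $f$ is convex with the single global minimizer $p_g = p_\text{data}$, (sub)gradient descent with a suitable step size drives $p_g$ to $p_\text{data}$, which is the claim in Eq. (\ref{equation_GAN_pg_equals_pdata}).

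The main obstacle is really a caveat to be stated honestly rather than a technical hurdle: the gap between this idealized function-space picture and what is actually optimized. In practice one descends in the \emph{parameters} of the generator network, and $V(D^*, G)$ is in general highly non-convex in those parameters even though it is convex in $p_g$. The guarantee therefore rests entirely on the representational-capacity assumption that lets $p_g$ range over all densities, together with the assumption that $D$ is re-optimized to $D^*$ at every step. Relaxing either condition — finite capacity, or only partially optimizing $D$ — breaks the convexity argument, which is why convergence of GAN training is so delicate in practice.
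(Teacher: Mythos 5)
Your proposal is correct, and it is in essence the function-space convexity argument the paper also uses --- but yours is the complete version (it is, in fact, the original argument of Goodfellow et al.'s Proposition 2), whereas the paper's proof is a two-line sketch. The paper simply asserts that the KL divergences in Eq. (\ref{equation_GAN_loss_KL}) are convex in $p_g(\b{x})$ and concludes that sufficiently small updates converge, implicitly treating the generator update as descent directly on the convex functional $V(D^*,G) = 2\,\text{JSD}(p_\text{data}\,\|\,p_g) - \log(4)$. You instead establish convexity of $f(p_g) := \sup_D U(p_g, D)$ as a supremum of affine functionals, and --- crucially --- you supply the Danskin-type lemma that the gradient of $U(\cdot, D^*)$ with $D^*$ \emph{frozen} at the current maximizer is a valid subgradient of $f$. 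This is the step the paper glosses over: what is actually computed in the alternating scheme is the gradient with the discriminator held fixed, not the gradient of the JSD itself (which would require differentiating through the dependence of $D^*$ on $p_g$), and your subgradient argument is precisely what licenses identifying the two. Your closing caveat about parameter-space non-convexity versus density-space convexity is also the right honest framing of what the ``enough capacity'' hypothesis is doing; the paper states this hypothesis but does not comment on it.
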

\begin{proof}
The KL divergences in Eq. (\ref{equation_GAN_loss_KL}) are convex functions w.r.t. $p_g(\b{x})$. Hence, with sufficiently small updates of $p_g(\b{x})$, it converges to $p_\text{data}(\b{x})$. Note that Eq. (\ref{equation_GAN_loss_KL}), which we used here, holds if Eq. (\ref{equation_GAN_D_optimum}) holds, i.e., the discriminator is allowed to reach its optimum value. Q.E.D.
\end{proof}
% convexity of KL and JSD: 
% https://arxiv.org/pdf/1309.7109.pdf
% https://statproofbook.github.io/P/kl-conv.html

The GAN loss, i.e. Eq. (\ref{equation_GAN_loss}), can be restated as \cite{nagarajan2017gradient}:
\begin{equation}\label{equation_GAN_loss_2}
\begin{aligned}
\min_G \max_D\,\,\,\, &V(D,G) := \mathbb{E}_{\b{x} \sim p_\text{data}(\b{x})}\Big[f\big(D(\b{x})\big)\Big] \\
&+ \mathbb{E}_{\b{z} \sim p_z(\b{z})}\Big[f\!\Big(\!\!- \!\!D\big(G(\b{z})\big)\Big)\Big],
\end{aligned}
\end{equation}
where $f$ is the negative logistic function, i.e., $f(x) := -\log(1 + \exp(-x))$. In fact, the function $f(.)$ can be any concave function. This formulation is slightly different from the original GAN in the sense that, here, the discriminator $D$ outputs a real-valued scalar (without any activation function) while the discriminator of Eq. (\ref{equation_GAN_loss}) outputs values in the range $(0,1)$ after a sigmoid activation function. 
If $D$ outputs $0.5$ and $0$, it means that it is completely confused in Eqs. (\ref{equation_GAN_loss}) and (\ref{equation_GAN_loss_2}), respectively. 
The Eq. (\ref{equation_GAN_loss_2}) is a concave-concave loss function in most of the domain of discriminator {\citep[Proposition 3.1]{nagarajan2017gradient}}. 

\begin{theorem}[{\citep[Theorem 3.1]{nagarajan2017gradient}}]
After satisfying several reasonable assumptions (see \cite{nagarajan2017gradient} for details), a GAN with loss function of Eq. (\ref{equation_GAN_loss_2}) is locally exponentially stable. 
\end{theorem}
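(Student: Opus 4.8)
The plan is to treat the simultaneous gradient updates of Eqs.~(\ref{equation_GAN_alternating_opt_D})--(\ref{equation_GAN_alternating_opt_G}) as a continuous-time dynamical system and invoke Lyapunov's indirect (linearization) method: if the Jacobian of the update map at the equilibrium is Hurwitz, i.e.\ all its eigenvalues have strictly negative real part, the equilibrium is locally exponentially stable. Writing $\theta_D$ and $\theta_G$ for the weights of $D$ and $G$, the gradient flow is $\dot{\theta}_D = \nabla_{\theta_D} V$ and $\dot{\theta}_G = -\nabla_{\theta_G} V$, whose fixed point is exactly the Nash equilibrium of the earlier theorems, where $p_g(\b{x}) = p_\text{data}(\b{x})$ and the discriminator is maximally confused. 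First I would linearize this flow at the equilibrium and reduce the whole claim to showing that the linearization is (essentially) Hurwitz.

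Next I would compute the block structure of the Jacobian
\begin{align*}
J = \begin{bmatrix} \nabla^2_{\theta_D\theta_D} V & \nabla^2_{\theta_D\theta_G} V \\ -\nabla^2_{\theta_G\theta_D} V & -\nabla^2_{\theta_G\theta_G} V \end{bmatrix}
\end{align*}
at the equilibrium. Two structural facts drive everything. First, because the loss of Eq.~(\ref{equation_GAN_loss_2}) is concave in the discriminator on the relevant region, the top-left block $Q := \nabla^2_{\theta_D\theta_D} V$ is negative semidefinite, and the assumptions promote this to negative definiteness on the subspace that actually matters. Second, at the equilibrium $p_g = p_\text{data}$ with the confused optimal $D^*$ (so that $f'$ is constant there), the bottom-right block $\nabla^2_{\theta_G\theta_G} V$ vanishes; verifying this by differentiating $V$ twice through the dependence of $p_g$ on $\theta_G$ is the most delicate computation, and it is where the generator-side assumptions enter. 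The Jacobian therefore collapses to the canonical form $J = \left[\begin{smallmatrix} Q & P \\ -P^\top & 0 \end{smallmatrix}\right]$ with $Q \preceq 0$ and $P := \nabla^2_{\theta_D\theta_G} V$.

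The algebraic heart of the argument is then a lemma: any block matrix of this form with $Q \prec 0$ and $P$ of full column rank is Hurwitz. I would prove it directly by eigenvalue analysis. For an eigenpair $J(v,w)^\top = \lambda (v,w)^\top$, the two block equations read $Qv + Pw = \lambda v$ and $-P^\top v = \lambda w$. Left-multiplying these by $v^*$ and $w^*$ and combining (using $v^* P w = \overline{w^* P^\top v}$) yields $v^* Q v = \mathrm{Re}(\lambda)\,(\|v\|^2 + \|w\|^2)$, so $\mathrm{Re}(\lambda) \le 0$ because $Q \preceq 0$. If $\mathrm{Re}(\lambda) = 0$, then $v^* Q v = 0$ forces $v = 0$ by $Q \prec 0$, whence $Pw = \lambda v = 0$ and the full column rank of $P$ forces $w = 0$, contradicting that $(v,w)$ is an eigenvector. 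Hence every eigenvalue has strictly negative real part, and Lyapunov's linearization theorem closes the case for an isolated equilibrium.

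The main obstacle is not this clean lemma but matching its hypotheses to the genuine GAN Jacobian. The equilibrium is typically non-isolated: reparameterizations of $D$ that leave $D^*$ unchanged form a manifold of fixed points, so $Q$ and $P$ are only definite and full-rank after projecting onto the complement of these trivial directions. I would therefore restrict the flow to that complement via a reduction/center-manifold argument, show that the neglected directions are neutrally stable and do not destroy the exponential contraction transverse to the manifold, and only then apply the lemma to conclude exponential convergence to the equilibrium set. Precisely stating the assumptions that guarantee $Q \prec 0$ and full-rank $P$ on the correct subspace, together with the vanishing of the generator Hessian, is exactly the technical content deferred to \cite{nagarajan2017gradient}.
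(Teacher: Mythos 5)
This survey states the theorem without any proof, deferring entirely to \cite{nagarajan2017gradient}, so there is no in-paper argument to compare against; the relevant benchmark is the cited reference itself. Measured against that, your proposal is correct and is essentially a faithful reconstruction of the reference's argument: Nagarajan and Kolter analyze precisely the simultaneous gradient flow $\dot{\theta}_D = \nabla_{\theta_D} V$, $\dot{\theta}_G = -\nabla_{\theta_G} V$ for the loss of Eq.~(\ref{equation_GAN_loss_2}), linearize at an equilibrium where $p_g = p_\text{data}$ and the discriminator is identically confused, use concavity of $f$ to make the $(D,D)$ block of the Jacobian negative (semi)definite, show the $(G,G)$ block vanishes there, and then invoke exactly the lemma you prove, namely that $\left[\begin{smallmatrix} Q & P \\ -P^{\top} & 0 \end{smallmatrix}\right]$ with $Q \prec 0$ and $P$ of full column rank is Hurwitz; like you, they spend most of their technical effort on the non-isolated equilibria, projecting out the flat directions along the equilibrium manifold before applying the lemma. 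Your eigenvalue computation is sound: combining $Qv + Pw = \lambda v$ and $-P^{\top} v = \lambda w$ and taking real parts gives $v^{*} Q v = \mathrm{Re}(\lambda)\,(\|v\|^{2} + \|w\|^{2})$, so $\mathrm{Re}(\lambda) \leq 0$, and strict definiteness of $Q$ together with full column rank of $P$ rules out purely imaginary eigenvalues. Only two cosmetic caveats: Eqs.~(\ref{equation_GAN_alternating_opt_D})--(\ref{equation_GAN_alternating_opt_G}) as written in this survey are \emph{alternating} updates, whereas the stability theorem concerns the simultaneous (continuous-time) dynamics --- which is what you correctly analyze --- and the reference's handling of the equilibrium manifold is by a projection lemma on the transverse subspace rather than center-manifold theory proper, though the effect is the same as what you describe.
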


% \begin{theorem}[{\citep[Theorem 3.1]{nagarajan2017gradient}}]
% Let $\text{supp}(p)$ denote the support of distribution $p$ where it is non-zero and let $\nabla f$ denote the derivative of function $f$. 
% Assuming that:
% \begin{itemize}
% \item $p_{g^*}(\b{x}) = p_\text{data}(\b{x})$ and for an optimal generator, we have $D(\b{x}) = 0, \forall \b{x} \in \text{supp}(p_\text{data}(\b{x}))$ 
% \item $f''(0) < 0$ and $f'(0) \neq 0$
% \item For the functions $\mathbb{E}_{\b{x} \sim p_\text{data}(\b{x})}[D^2(\b{x})]$ and $\|\mathbb{E}_{\b{x} \sim p_\text{data}(\b{x})}[\nabla D(\b{x})] - \mathbb{E}_{\b{x} \sim p_g(\b{x})}[\nabla D(\b{x})]\|_2^2$ at the optimal $D^*$, we have: $\exists \epsilon > 0$
% \end{itemize}
% \end{theorem}

\begin{lemma}[Nash equilibrium in GAN \cite{farnia2020gans}]
Nash equilibrium is the state where no player can improve its gain by choosing a different strategy. At the Nash equilibrium of GAN, we have:
\begin{align}
V(D, G^*) \leq V(D^*, G^*) \leq V(D^*, G),
\end{align}
which is obvious because we are minimizing and maximizing $V(G, D)$ by the generator and discriminator, respectively, in Eq. (\ref{equation_GAN_loss}). 
\end{lemma}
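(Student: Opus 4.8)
The plan is to derive the two-sided inequality directly from the definition of a Nash equilibrium for the two-player zero-sum game whose value function is $V(D,G)$ in Eq. (\ref{equation_GAN_loss}). First I would fix the roles explicitly: the discriminator $D$ is the maximizing player, so its payoff is $V$ itself, while the generator $G$ is the minimizing player, so (because the game is zero-sum) its payoff is $-V$. A Nash equilibrium $(D^*,G^*)$ is, by definition, a pair of strategies at which neither player can improve its own payoff by deviating unilaterally while the opponent holds its equilibrium strategy fixed. With this setup the entire statement reduces to rewriting the ``no profitable deviation'' condition separately for each player.

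Next I would establish the right-hand inequality. Holding the discriminator fixed at $D^*$, the generator's payoff is $-V(D^*,G)$; since no deviation $G^*\rightarrow G$ can increase this payoff at equilibrium, we have $-V(D^*,G)\le -V(D^*,G^*)$, i.e. $V(D^*,G^*)\le V(D^*,G)$ for every admissible $G$. This simply records that $G^*$ is a best response to $D^*$ in the sense of minimizing $V(D^*,\cdot)$.

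Then I would establish the left-hand inequality symmetrically. Holding the generator fixed at $G^*$, the discriminator's payoff is $V(D,G^*)$; since no deviation $D^*\rightarrow D$ can increase it, we obtain $V(D,G^*)\le V(D^*,G^*)$ for every admissible $D$, meaning $D^*$ is a best response to $G^*$ in the sense of maximizing $V(\cdot,G^*)$. Concatenating the two bounds yields $V(D,G^*)\le V(D^*,G^*)\le V(D^*,G)$, which is precisely the claimed saddle-point characterization.

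The step that demands the most care — rather than genuine difficulty — is the sign bookkeeping in the generator's half: because the generator \emph{minimizes} $V$ whereas its Nash-sense ``gain'' is $-V$, one must translate ``the generator cannot improve its gain'' into the correct direction of the inequality on $V$ itself, and an inadvertent sign flip would reverse the claimed order. Once the maximize/minimize roles and the zero-sum structure are made explicit, everything else is an immediate restatement of the definitions, which is why the authors call the result ``obvious.'' Optionally I would remark that this inequality is exactly the saddle-point property, and that it is what underlies the equivalence of the $\min\max$ and $\max\min$ formulations discussed earlier in the remark on minimax versus maximin.
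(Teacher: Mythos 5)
Your proposal is correct and follows exactly the same route as the paper, which simply asserts the inequality is ``obvious'' from the fact that the discriminator maximizes and the generator minimizes $V(D,G)$ in Eq.~(\ref{equation_GAN_loss}); you have merely unpacked that one-line justification into the explicit no-profitable-deviation conditions for each player. The sign bookkeeping you flag is handled correctly, so nothing is missing.
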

Empirical experiments have shown that GAN may not reach its Nash equilibrium in practice \cite{farnia2020gans}. 
Regularization can help convergence of GAN to the Nash equilibrium \cite{mescheder2018training}. 
It is shown in \cite{mescheder2018training} that A effective regularization for GAN is noise injection \cite{ghojogh2019theory} in which independent Gaussian noise is added to the training data points. 

\begin{definition}[Proximal equilibrium \cite{farnia2020gans}]
We can use the proximal operator \cite{ghojogh2021kkt} in the loss function of GAN:
\begin{equation*}
\begin{aligned}
\min_G \max_D &\Big( V_\text{prox}(D,G) := \max_{\widetilde{D}} (V(\widetilde{D}, G) - \lambda \|\widetilde{D} - D\|_2^2) \Big),
\end{aligned}
\end{equation*}
where $V(D,G)$ is defined in Eq. (\ref{equation_GAN_loss}) and $\lambda>0$ is the regularization parameter. 
The equilibrium of the game having this loss function is called the proximal equilibrium. 
\end{definition}

\begin{theorem}[Convergence of GAN based on the Jacobian \cite{mescheder2017numerics}]
Let the updated solution of GAN optimization at every iteration be obtained by some operator $F(D,G)$, such as a step of gradient descent. The convergence of GAN can be explained based on the Jacobian of $F(D,G)$ with respect to $D$ and $G$. If the absolute values of some eigenvalues of the Jacobian are larger than one, GAN will not converge to the Nash equilibrium. If all eigenvalues have absolute values less than one, GAN will converge to the Nash equilibrium with a linear rate $\mathcal{O}(|\lambda_{\max}|^k)$ where $\lambda_{\max}$ is the eigenvalue with largest absolute eigenvalue and $k$ is the iteration index. If all eigenvalues have unit absolute value, GAN may or may not converge. 
\end{theorem}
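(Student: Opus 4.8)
The plan is to treat the GAN update as a fixed-point iteration of the map $F$ on the joint parameter space of $(D,G)$ and to analyze its local stability near a Nash equilibrium via the standard linearization theory for discrete dynamical systems. First I would observe that a Nash equilibrium $(D^*,G^*)$ is, by definition, a stationary point of the alternating (or simultaneous) optimization and hence a fixed point of the update operator, so that $F(D^*,G^*) = (D^*,G^*)$. Writing the iterate as a small perturbation $\theta^{(k)} = \theta^* + \epsilon^{(k)}$ of the fixed point $\theta^* := (D^*,G^*)$, I would Taylor-expand $F$ about $\theta^*$ and, using $F(\theta^*)=\theta^*$, obtain the linearized recursion
\begin{align*}
\epsilon^{(k+1)} = J(\theta^*)\, \epsilon^{(k)} + o\big(\|\epsilon^{(k)}\|\big),
\end{align*}
where $J(\theta^*)$ is the Jacobian of $F$ evaluated at the equilibrium.

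Next I would iterate the linear part to get $\epsilon^{(k)} \approx J^k \epsilon^{(0)}$ and diagonalize (or, in general, use the Jordan form of) $J$ so that $J^k$ is governed by the powers $\lambda_i^k$ of its eigenvalues. The three regimes of the theorem then follow directly from the asymptotics of these powers: if some $|\lambda_i|>1$ the corresponding mode of $\epsilon^{(k)}$ grows without bound and the iteration cannot converge to $\theta^*$; if every $|\lambda_i|<1$ then all modes decay geometrically, $\|\epsilon^{(k)}\|$ is dominated by the slowest-decaying mode, and the convergence rate is $\mathcal{O}(|\lambda_{\max}|^k)$ with $\lambda_{\max}$ the eigenvalue of largest modulus; and if all $|\lambda_i|=1$ the linear theory is indeterminate, since the perturbation neither grows nor contracts to leading order.

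The main obstacle is making the passage from the linearized recursion to the true nonlinear dynamics rigorous. For the two hyperbolic cases (where $|\lambda_i|\neq 1$ for all $i$) this is supplied by the Hartman--Grobman and stable-manifold theorems, which guarantee that the neglected $o(\|\epsilon^{(k)}\|)$ terms do not alter the qualitative conclusion in a sufficiently small neighborhood of $\theta^*$; one must also handle a non-diagonalizable $J$, where Jordan blocks introduce polynomial-in-$k$ prefactors that do not affect the geometric rate. The genuinely delicate case is the marginal one, $|\lambda_i|=1$, where the center-manifold behavior and the phases of the eigenvalues determine the outcome, so that no conclusion can be drawn from the spectrum alone -- which is precisely why the theorem asserts only that GAN \emph{may or may not} converge in that regime.
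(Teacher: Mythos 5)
The paper itself gives no proof of this theorem; it is stated as a survey item and deferred entirely to the cited source (Mescheder et al., \emph{The Numerics of GANs}). Your linearization argument is exactly the classical fixed-point-iteration theory (Ostrowski-type results) that the cited work invokes, so your proposal is correct and is essentially \emph{the} intended proof: Nash equilibrium as a fixed point of $F$, Taylor expansion giving $\epsilon^{(k+1)} = J\epsilon^{(k)} + o(\|\epsilon^{(k)}\|)$, and the three spectral regimes read off from $J^k$. Two refinements you already gesture at are worth making explicit if this were written out in full: in the expanding case ($|\lambda_i|>1$ for some $i$) the conclusion ``will not converge'' holds only for generic initializations, since the stable manifold still contains a measure-zero set of convergent starting points; and when $J$ has nontrivial Jordan blocks the rate is $\mathcal{O}\big(k^{m-1}|\lambda_{\max}|^k\big)$, i.e., linear with rate $|\lambda_{\max}|+\varepsilon$ for any $\varepsilon>0$ rather than exactly $\mathcal{O}(|\lambda_{\max}|^k)$ --- the same informality present in the theorem statement itself.
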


The readers can refer to \cite{farnia2018convex} for duality in GAN, which is not explained here for brevity. 
Moreover, some papers have specifically combined GAN with game theory. Interested readers can refer to \cite{oliehoek2017gangs,arora2017generalization,unterthiner2018coulomb,tembine2019deep}.

% game theory GAN \cite{tembine2019deep}, GANG \cite{oliehoek2017gangs}, Generalization and Nash equilibrium in GAN \cite{arora2017generalization}, spurious Nash equilibrium \cite{unterthiner2018coulomb}, 

\subsection{Conditional GAN}\label{section_conditional_GAN}

As was explained before, in the original GAN, we randomly draw noise $\b{z}$ from a prior distribution $p_z(\b{z})$ and feed it to the generator. The generator outputs a point $\b{x}$ from the noise $\b{z}$. Assume that the dataset with which GAN is trained has $c$ number of classes. The original GAN generates points from any class and we do not have control to generate a point from a specific class. Although, note that after GAN is trained, the latent space for $\b{z}$ is meaningful in the sense that every part of the latent space results in generation of some specific points from some class. However, the user cannot choose specifically what class to generate points from. 

Conditional GAN \cite{mirza2014conditional}, also called the conditional adversarial network, gives the user the opportunity to choose the class of generation of points. For the dataset $\{\b{x}_i \in \mathbb{R}^d\}_{i=1}^n$, let the one-hot encoded class labels be $\{\b{y}_i \in \mathbb{R}^c\}_{i=1}^n$. 
In conditional GAN, we use the following loss function instead of Eq. (\ref{equation_GAN_loss}):
\begin{equation}\label{equation_conditional_GAN_loss}
\begin{aligned}
\min_G \max_D\,\,\,\, &V_C(D,G) := \mathbb{E}_{\b{x} \sim p_\text{data}(\b{x})}\Big[\log\!\big(D(\b{x} | \b{y})\big)\Big] \\
&+ \mathbb{E}_{\b{z} \sim p_z(\b{z})}\Big[\log\!\Big(1 - D\big(G(\b{z} | \b{y})\big)\Big)\Big],
\end{aligned}
\end{equation}
where the discriminator and generator are both conditioned on the labels. 
In practice, for implementing the loss function (\ref{equation_conditional_GAN_loss}), we concatenate the one-hot encoded label $\b{y}$ to the point $\b{x}$ for the input to discriminator. We also concatenate the one-hot encoded label $\b{y}$ to the noise $\b{z}$ for the input to generator. For these, the input layers of discriminator and generator are enlarged to accept the concatenated inputs. 
In the test phase, the user choose the desired class label and the generator generates a new point from that class. 

\subsection{Deep Convolutional GAN (DCGAN)}\label{section_DCGAN}

Deep Convolutional GAN (DCGAN), proposed in \cite{radford2016unsupervised}, made GAN deeper and generated higher resolution images than GAN. It also showed that it is very hard to train a GAN. 

\subsubsection{Network Structure}

In DCGAN, we use an all-convolutional network \cite{springenberg2015striving} which replaces the pooling functions with strided convolutions. In this way, the network learns its own spatial downsampling. This network is used for both generator and discriminator. 
In DCGAN, we also have only convolutional layers in the input layer of generator and output layer of discriminator, without any fully-connected layer. This elimination of fully connected layers is inspired by \cite{mordvintsev2015inceptionism}.

We also apply batch normalization \cite{ioffe2015batch} to all layers except the last layer of generative and the first layer of discriminator. This is because batch normalization makes the mean of input to each neuron zero and its variance one; however, we should learn the mean and variance of data in the first layer of discriminator and the mean and variance of data should be reproduced by the last layer of generator. Batch normalization reduces the problem of mode collapse, which will be introduced later in Section \ref{section_mode_collapse} with the price of causing some fluctuations and instability \cite{radford2016unsupervised}. 

\begin{remark}[Virtual batch normalization {\citep[Section 3.5]{salimans2016improved}}]
Batch normalization has a problem; it makes the effect of every input $\b{x}$ on network dependent on other inputs in the mini-batch. To not have this problem, Virtual Batch Normalization (VBN) fixes the mini-batches initially once before start of training; these mini-batches are called the reference batches. Every reference batch is normalized by only its own statistics (i.e., mean and covariance). vbn has been found to be effective in training of generator \cite{salimans2016improved}. 
\end{remark}

\begin{figure*}[!t]
\centering
\includegraphics[width=5in]{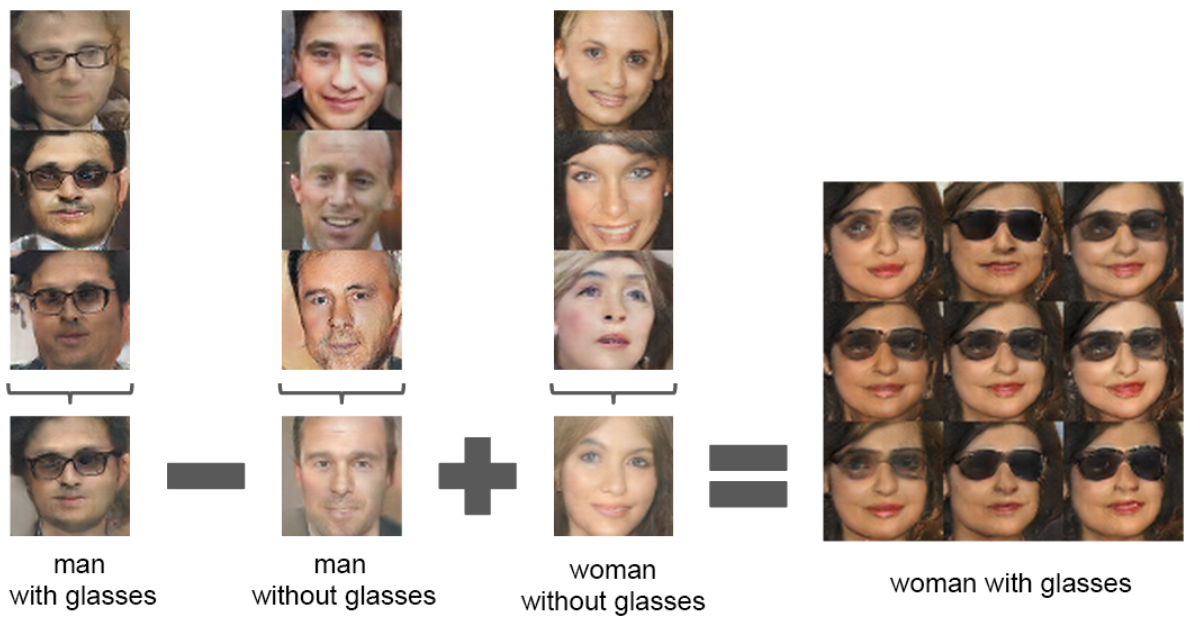}
\caption{Vector arithmetic in the latent space on images by DCGAN. Credt of image is for \cite{radford2016unsupervised}.}
\label{figure_vector_arithmetic}
\end{figure*}

In DCGAN, the last layer of generator has the hyperbolic tangent activation function and its other layers have the ReLU activation function \cite{nair2010rectified}. 
As in GAN, the one-to-last layer of discriminator is flattened and connected to one neuron with the sigmoid activation function. In contrast to GAN, which uses the maxout activation function \cite{goodfellow2013maxout} for discriminator layers (see Section \ref{section_GAN_structure}), DCGAN uses the leaky rectified action function for discriminator. 

\subsubsection{Vector Arithmetic in Latent Space}\label{section_vector_arithmetic_DCGAN}

DCGAN showed that we can generate images from a specific domain if we train GAN on that domain. for example, bedroom images were generated by DCGAN after being trained on a dataset of bedroom images. 
DCGAN also showed that the learned latent space is meaningful and we can do vector arithmetic in the latent space. Vector arithmetic in the latent space was previously used for showing the ability of Word2Vec \cite{mikolov2013distributed}. 
DCGAN made it possible to do vector arithmetic in the latent space for images. 
An example of vector arithmetic by DCGAN is shown in Fig. \ref{figure_vector_arithmetic}. 
In the latent space, the latent variables corresponding to man with glasses, man without glasses, and woman without glasses are used. Using one latent point for each of these does not work very well. An average of three latent vectors for each has worked properly in practice.  
As Fig. \ref{figure_vector_arithmetic} shows, vector arithmetic works because ``man with glasses" minus ``man without glasses" plus ``woman without glasses" results in ``woman with glasses". 

% \section{Convergence and Nash Equilibrium in GAN}

\section{Mode Collapse Problem in GAN}\label{section_mode_collapse}

\subsection{Mode Collapse Problem}

We expect from a GAN to learn a meaningful latent space of $\b{z}$ so that every specific value of $\b{z}$ maps to a specific generated data point $\b{x}$. Also, nearby $\b{z}$ values in the latent space should be mapped to similar but a little different generations.  
The mode collapse problem \cite{metz2017unrolled}, also known as the Helvetica scenario \cite{goodfellow2016nips}, is a common problem in GAN models. 
It refers to when the generator cannot learn a perfectly meaningful latent space as was explained. Rather, it learns to map several different $\b{z}$ values to the same generated data point. 
Mode collapse usually happens in GAN when the distribution of training data, $p_\text{data}(\b{x})$, has multiple modes.

An example of mode collapse is illustrated in Fig. \ref{figure_Mode_collapse} which shows training steps of a GAN model when the training data is a mixture of Gaussians \cite{metz2017unrolled}. In different training steps, GAN learns to map all $\b{z}$ values to one of the modes of mixture. When the discriminator learns to reject generation of some mode, the generator learns to map all $\b{z}$ values to another mode.
However, it never learns to generate all modes of the mixture. We expect GAN to map some part, and not all parts, of the latent space to one of the modes so that all modes are covered by the whole latent space. 

Another statement of the mode collapse is as follows {\citep[Fig. 1]{xiao2018bourgan}}. 
Assume $p_\text{data}(\b{x})$ is multi-modal while the latent space $p_z(\b{z})$ has only one mode. Consider two points $\b{x}_1$ and $\b{x}_2$ from two modes of data whose corresponding latent noises are $\b{z}_1$ and $\b{z}_2$, respectively. According to the mean value theorem, there is a latent noise with the absolute gradient value $\|\b{x}_2 - \b{x}_1\| / \|\b{z}_2 - \b{z}_1\|$ where $\|.\|$ is a norm. As this gradient is Lipschitz continuous, when the two modes are very far resulting in a large $\|\b{x}_2 - \b{x}_1\|$, we face a problem. In this case, the latent noises between $\b{z}_1$ and $\b{z}_2$ generate data points between $\b{x}_1$ and $\b{x}_2$ which are not in the modes of data and thus are not valid. 

There exist various methods which resolve the mode collapse problem in GAN and adversarial learning. Some of them make the latent space a mixture distribution to imitate generation of the multi-modal training data. Some of them, however, have other approaches. In the following, we introduce the methods which tackle the mode collapse problem. 

\begin{figure*}[!t]
\centering
\includegraphics[width=6.5in]{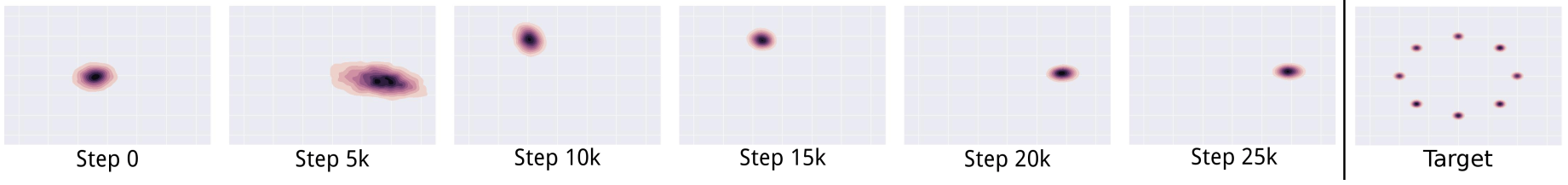}
\caption{An example of mode collapse in GAN. Image is from \cite{metz2017unrolled}.}
\label{figure_Mode_collapse}
\end{figure*}

\subsection{Minibatch GAN}

One way to resolve the mode collapse problem is mini-batch discrimination {\citep[Section 3.2]{salimans2016improved}}. In this method, the discriminator considers multiple data points in combination rather than separately. This avoids the mode collapse in generator. Suppose $\b{f}(\b{x}_i) \in \mathbb{R}^a$ is the feature vector of one of the intermediate layers, with $a$ neurons, in the discriminator for the data point $\b{x}_i$. The data point $\b{x}_i$ is either real or generated (fake). We multiply $\b{f}(\b{x}_i)$ by a tensor $\b{T} \in \mathbb{R}^{a \times b \times c}$ to obtain $\mathbb{R}^{b \times c} \ni \b{M}_i := (\b{T}^\top \b{f}(\b{x}_i))^\top$. If there are $|\mathcal{B}|$ points in the mini-batch $\mathcal{B}$, we can calculate $\{\b{M}_i\}_{i=1}^{|\mathcal{B}|}$. Let $(\b{M}_i)_{l:}$ denote the $l$-th row of $\b{M}_i$. We define \cite{salimans2016improved}:
\begin{equation}
\begin{aligned}
& \mathbb{R} \ni c_l(\b{x}_i, \b{x}_j) := \exp(-\|(\b{M}_i)_{l:} - (\b{M}_j)_{l:}\|_1), \\
&~~~~~~~~~~~~~~~~~~~~~~~~~~~~~~~~~~~~~~~~~~~~~~~~~~~~~~~~~ \forall l \in \{1, \dots, b\}, \\
& \mathbb{R} \ni o(\b{x}_i)_l := \sum_{j=1}^{|\mathcal{B}|} c_l(\b{x}_i, \b{x}_j), \quad \forall i \in \{1, \dots, |\mathcal{B}|\}, \\
& \mathbb{R}^b \ni \b{o}(\b{x}_i) := [o(\b{x}_i)_1, \dots, o(\b{x}_i)_b]^\top, \quad \forall i, \\
& \mathbb{R}^{|\mathcal{B}| \times b} \ni \b{o}(\b{X}) := [\b{o}(\b{x}_1), \dots, \b{o}(\b{x}_{|\mathcal{B}|})]^\top. 
\end{aligned}
\end{equation}
For every point $\b{x}_i$ within the mini-batch, we concatenate $\b{o}(\b{x}_i)$ with $\b{f}(\b{x}_i)$ and feed it to the next layer, rather than feeding merely $\b{f}(\b{x}_i)$. 
In other words, the additional features $\b{o}(\b{x}_i)$ are side information for better training of discriminator (which makes the generator also stronger in the game). 
This procedure, in the discriminator, is performed for both mini-batches of real and generated data. 

\subsection{Unrolled GAN}

Unrolled GAN \cite{metz2017unrolled} uses $\Delta$ levels of unrolling of discriminator when updating the generator. 
The alternating optimization in unrolled GAN is:
\begin{align}
& D^{(k+1)} := D^{(k)} + \eta \frac{\partial }{\partial D} \Big(V(D,G^{(k)})\Big), \label{equation_unrolled_GAN_alternating_opt_D} \\
& 
\left\{
    \begin{array}{ll}
        D_0 := D^{(k+1)}, \\
        D_{(\delta+1)} := D_{(\delta)} + \eta^{(\delta)} \frac{\partial }{\partial D} \Big(V(D,G^{(k)})\Big), \\
        ~~~~~~~~~~~~~~~~~~~~~~~~~~~~~~~~~~~~\forall \delta \in \{0, \dots, \Delta-1\}, \\
        V_\Delta(D^{(k+1)},G) := V(D_{(\Delta)},G), 
    \end{array}
\right. \label{equation_unrolled_GAN_alternating_opt_D_unrolling}
\\
& G^{(k+1)} := G^{(k)} - \eta \frac{\partial }{\partial G} \Big(V_\Delta(D^{(k+1)},G)\Big), \label{equation_unrolled_GAN_alternating_opt_G}
\end{align}
where Eq. (\ref{equation_unrolled_GAN_alternating_opt_D_unrolling}) unrolls the parameters of discriminator for $\Delta$ times before using it for updating the generator. 
The loss function $V(D_{(\Delta)},G)$ is called the surrogate loss. 
As was mentioned in Section \ref{section_GAN_optimization_and_loss}, the original GAN can update the discriminator itself for several time before updating the generator \cite{goodfellow2014generative}. Note that Eq. (\ref{equation_unrolled_GAN_alternating_opt_D_unrolling}) is different from updating the discriminator for several times, as done in GAN, because it does not update the discriminator but it is used in updating the generator in Eq. (\ref{equation_unrolled_GAN_alternating_opt_G}).

The gradient for generator in Eq. (\ref{equation_unrolled_GAN_alternating_opt_G}) can be simplified as:
\begin{align*}
&\frac{\partial }{\partial G} \Big(V_\Delta(D^{(k+1)},G)\Big) \overset{(\ref{equation_unrolled_GAN_alternating_opt_D_unrolling})}{=} \\
&\frac{\partial }{\partial G} \Big(V_\Delta(D_{(\Delta)},G)\Big) + \frac{\partial }{\partial D_{(\Delta)}} \Big(V_\Delta(D_{(\Delta)},G)\Big) \frac{\partial D_{(\Delta)}}{\partial G}.
\end{align*}
The gradient for generator in the original GAN has only the first term. The second term in the above equation captures the information of changes of discriminator w.r.t. changes in the generator. This reduces the problem of mode collapse which exists in GAN. 

\subsection{Bourgain GAN (BourGAN)}\label{section_BourGAN}

Bourgain GAN (BourGAN) \cite{xiao2018bourgan} learns a mixture distribution \cite{ghojogh2019fitting} in the latent space to resolve the problem of mode collapse and learn to generate multi-modal data. 
If the size of dataset, $n$, is large, it samples $m$ points from dataset $\{\b{x}_i\}_{i=1}^n$ where $m \ll n$. Let $\{\widetilde{\b{x}}_i\}_{i=1}^m$ denote the sampled data.
The idea of BourGAN is based on the Bourgain embedding \cite{bourgain1985lipschitz} which enables embedding a dataset of size $m$ into a $\mathcal{O}(\log^2(m))$-dimensional $\ell_2$ norm embedding space with high probability. 
An improved Bourgain embedding is as follows. 

\begin{theorem}[{\citep[Corollary 2]{xiao2018bourgan}}]
For a dataset $\{\widetilde{\b{x}}_i\}_{i=1}^m$ in a space with norm $\|.\|$, there exists a mapping $f$ from the data space to a $\mathcal{O}(\log(m))$-dimensional embedding space which preserves the local distances:
\begin{align*}
\|\widetilde{\b{x}}_i - \widetilde{\b{x}}_j\| \leq \|f(\widetilde{\b{x}}_i) - f(\widetilde{\b{x}}_j)\| \leq \alpha\, \|\widetilde{\b{x}}_i - \widetilde{\b{x}}_j\|,
\end{align*}
where $\alpha \leq \mathcal{O}(\log(m))$. 
This embedding is achieved by Bourgain embedding \cite{bourgain1985lipschitz} followed by random projection \cite{johnson1984extensions,ghojogh2021johnson}. This combined embedding can be found in {\citep[Appendix A]{xiao2018bourgan}}. 
\end{theorem}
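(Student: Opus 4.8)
The plan is to obtain $f$ as the composition of two established embeddings: first Bourgain's embedding of the finite metric space $\{\widetilde{\b{x}}_i\}_{i=1}^m$ into a Euclidean space, and then a Johnson--Lindenstrauss random projection that reduces the dimension from $\mathcal{O}(\log^2(m))$ down to $\mathcal{O}(\log(m))$ while distorting all pairwise distances by only a constant factor. The improvement over the raw Bourgain guarantee---which already attains distortion $\mathcal{O}(\log(m))$ but lives in $\mathcal{O}(\log^2(m))$ dimensions---is purely dimensional and comes entirely from this second step.

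First I would invoke Bourgain's theorem \cite{bourgain1985lipschitz}: there is a map $g$ of the $m$ points into $\ell_2$ of dimension $\mathcal{O}(\log^2(m))$ whose distortion is at most $\mathcal{O}(\log(m))$, so that after a suitable global rescaling one has
\begin{align*}
\|\widetilde{\b{x}}_i - \widetilde{\b{x}}_j\| \leq \|g(\widetilde{\b{x}}_i) - g(\widetilde{\b{x}}_j)\| \leq \alpha_1\,\|\widetilde{\b{x}}_i - \widetilde{\b{x}}_j\|,
\end{align*}
with $\alpha_1 = \mathcal{O}(\log(m))$ (the rescaling normalizes the contraction side to be exactly the identity). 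Second, I would apply the Johnson--Lindenstrauss lemma \cite{johnson1984extensions,ghojogh2021johnson} to the $m$ image points $\{g(\widetilde{\b{x}}_i)\}$: for a \emph{fixed} accuracy $\epsilon \in (0,1)$, a random linear projection $P$ into dimension $\mathcal{O}(\epsilon^{-2}\log(m)) = \mathcal{O}(\log(m))$ satisfies, with high probability simultaneously over all $\binom{m}{2}$ pairs,
\begin{align*}
(1-\epsilon)\,\|g(\widetilde{\b{x}}_i) - g(\widetilde{\b{x}}_j)\| \leq \|P g(\widetilde{\b{x}}_i) - P g(\widetilde{\b{x}}_j)\| \leq (1+\epsilon)\,\|g(\widetilde{\b{x}}_i) - g(\widetilde{\b{x}}_j)\|.
\end{align*}
Setting $f := \tfrac{1}{1-\epsilon}\,P \circ g$ and chaining the two displays yields the claimed two-sided bound with $\alpha = \tfrac{1+\epsilon}{1-\epsilon}\,\alpha_1 = \mathcal{O}(\log(m))$, while the output dimension is $\mathcal{O}(\log(m))$ as required.

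The routine part is the bookkeeping of the multiplicative distortion constants and the rescaling that normalizes the lower bound to exactly $\|\widetilde{\b{x}}_i - \widetilde{\b{x}}_j\|$. The point that needs care---and which I would flag as the main obstacle---is that the Johnson--Lindenstrauss distortion must be held to a \emph{constant} factor $(1\pm\epsilon)$ rather than allowed to grow with $m$; only then does the composed distortion remain $\mathcal{O}(\log(m))$ rather than degrading to $\mathcal{O}(\log^2(m))$, and only then does the projected dimension $\mathcal{O}(\epsilon^{-2}\log(m))$ collapse to $\mathcal{O}(\log(m))$. Since Bourgain's theorem and the JL lemma are both cited as external results, the substantive content reduces to verifying that fixing $\epsilon$ to a constant simultaneously delivers the target dimension and preserves the $\mathcal{O}(\log(m))$ distortion. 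The last item to check is that the high-probability guarantee of the random projection holds for all $\binom{m}{2}$ distances at once, which is controlled by a union bound over the pairs against the exponentially small per-pair failure probability supplied by the JL concentration estimate.
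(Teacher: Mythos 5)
Your proposal is correct and takes essentially the same route as the paper: the paper offers no proof of its own beyond stating that the map is the Bourgain embedding \cite{bourgain1985lipschitz} composed with a Johnson--Lindenstrauss random projection \cite{johnson1984extensions} (deferring details to the cited Appendix A of \cite{xiao2018bourgan}), which is precisely the composition you construct and chain. The point you flag as the crux---fixing $\epsilon$ to a constant so that the projected dimension is $\mathcal{O}(\log(m))$ while the composed distortion remains $\mathcal{O}(\log(m))$ rather than $\mathcal{O}(\log^2(m))$---is exactly the content of the cited corollary.
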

This embedding requires computing pairwise distances.
The sampling of $m$ points from the $n$ data points is to make this embedding feasible. 
This sampling does not affect the characteristics of pairwise distances in data if $m$ is sufficiently large {\citep[Theorem 4]{xiao2018bourgan}}. 
We embed all sampled points to obtain $\{f(\widetilde{\b{x}}_i)\}_{i=1}^m$.
The distance characteristics of data is preserved by this embedding {\citep[Theorem 5]{xiao2018bourgan}}. 

As the next step in BourGAN, we sample from the embedding points uniformly, i.e., $\b{\mu} \sim \{f(\widetilde{\b{x}}_i)\}_{i=1}^m$. 
Then, we sample the latent noise from a multivariate Gaussian distribution with mean $\b{\mu}$, i.e., $\b{z} \sim \mathcal{N}(\b{\mu}, 0.01 \b{I})$. This procedure models a multi-modal mixture of Gaussians in the latent space of GAN. Note that it does not imply a mixture of $m$ modes because if several $f(\widetilde{\b{x}}_i)$'s are close to each other, they can be considered as one mode. The multi-modality of the latent space is preserved to be the same as the multi-modality of data {\citep[Eq. 6]{xiao2018bourgan}}.

The loss function of BourGAN regularizes the cost of generator so that the generator preserves the distances of generated points compared to the distances of corresponding noises. In this way, the multi-modality of latent space will also appear in the distribution of generated data $p_g(\b{x})$. 
\begin{equation}\label{equation_BourGAN_loss}
\begin{aligned}
&\max_{D}\,\,\,\, V_\text{BourGAN}(D) := \,V(D,G), \\
&\min_{G}\,\,\,\, V_\text{BourGAN}(G) := \, V(D,G) \\
&~~~~~~~~~~~~~~~~~~ + \lambda\, \mathbb{E}_{\b{z}_i, \b{z}_j \sim p_z(\b{z})}\Big[\big(\log(\|G(\b{z}_i) - G(\b{z}_j)\|) \\
&~~~~~~~~~~~~~~~~~~~~~~~~~~~~~~~~~~~~~ - \log(\|\b{z}_i - \b{z}_j\|)\big)^2\Big].
\end{aligned}
\end{equation}
where $V(D,G)$ is defined in Eq. (\ref{equation_GAN_loss}), $\lambda >0$ is the regularization parameter, and $\|.\|$ is some norm such as $\ell_2$ norm. 

\subsection{Mixture GAN (MGAN)}\label{section_MGAN}

% $\b{\pi}$
% $G_1$ $G_2$ $G_k$
% $\b{z} \sim p_z(\b{z})$
% $j \sim \text{Mult}(\b{\pi})$
% $\b{x} \sim p_\text{data}(\b{x})$

Mixture GAN (MGAN) \cite{hoang2018mgan} overcomes the mode collapse issue by assuming that the distribution of latent space, from which noise is sampled, is a mixture distribution \cite{ghojogh2019fitting} rather than a single distribution. 
It also enlarges the divergence of latent distributions in the mixture so that each of them covers a different mode for generation of data. 
The structure of MGAN is shown in Fig. \ref{figure_MGAN}. It has $k$ generators $\{G_j\}_{j=1}^k$, a discriminator $D$, and a classifier $C$. 
In terms of having a classifier, it is similar to triple GAN \cite{li2017triple} (see Section \ref{section_triple_GAN}).
Every generator $G_j$ takes care of the $j$-th latent distribution in the mixture. 
The difference of MGAN from BourGAN (see Section \ref{section_BourGAN}) is that MGAN associates a generator to every mode while BourGAN has one generator with a mixture latent distribution.

Let $\pi_j$ be the mixing probability for the $j$-th component in the mixture. 
We denote the distribution of generated data from the mixture latent distribution and the $j$-th component in the mixture by $p_g(\b{x})$ and $p_{g_j}(\b{x})$, respectively. 
The discriminator tries to judge whether a data point is real, $\b{x} \sim p_\text{data}(\b{x})$, or generated from one of the modes in the mixture, i.e., $\b{x} = G_j(\b{z})$. 
At every iteration of the alternating optimization, the index $j$ of the selected $G_j$ for feeding to discriminator $D$ is sampled from a multinomial distribution Mult($\b{\pi}$) where $\b{\pi} := [\pi_1, \dots, \pi_k]^\top$. 
The classifier $C$ classifies which one of the $k$ generators has generated the generated data. 
The loss function of MGAN, as a multi-player minimax game, is:
\begin{equation}\label{equation_MGAN_loss}
\begin{aligned}
&\min_{\{G_j\}_{j=1}^k, C} \max_{D}\,\,\,\, V(D, C, G_1, \dots, G_k) := \\
&\mathbb{E}_{\b{x} \sim p_\text{data}(\b{x})}\Big[\log\!\big(D(\b{x})\big)\Big] + \mathbb{E}_{\b{x} \sim p_g(\b{x})}\Big[\log\!\big(1 - D(\b{x})\big)\Big] \\
&- \lambda \sum_{j=1}^k \pi_j\, \mathbb{E}_{\b{x} \sim p_{g_j}(\b{x})}\Big[\log\!\big(C_j(\b{x})\big)\Big],
\end{aligned}
\end{equation}
where $\lambda > 0$ is the regularization parameter and $C_j(\b{x})$ is the probability that $\b{x}$ is generated by $G_j$. The last layer of the classifier $C$ has $k$ neurons with softmax activation function where $C_j$ is the output of $j$-th neuron in classifier after the activation function. The last term in the loss function maximizes the entropy of classification so that, by competition of generators and classifier, the generated data from various generators become separated gradually. In this way, generators will cover different modes of data, resolving the mode collapse problem.

\begin{figure}[!t]
\centering
\includegraphics[width=3.2in]{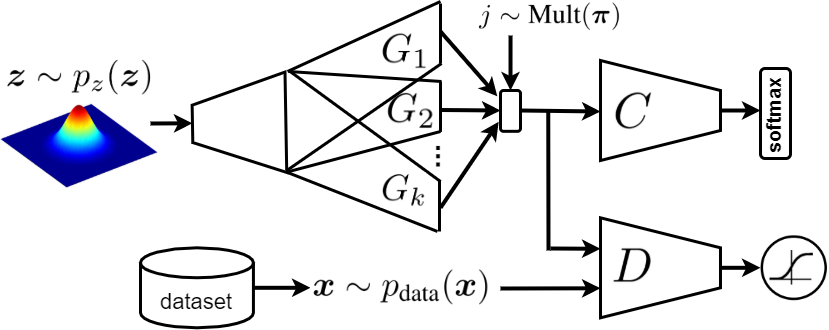}
\caption{The structure of MGAN.}
\label{figure_MGAN}
\end{figure}

\begin{theorem}[{\citep[Theorem 2]{hoang2018mgan}}]
After convergence (i.e., the Nash equilibrium) of MGAN, we have:
\begin{equation}
\begin{aligned}
&p_{g^*}(\b{x}) = p_\text{data}(\b{x}), \\
&D^*(\b{x}) = \frac{p_\text{data}(\b{x})}{p_\text{data}(\b{x}) + p_g(\b{x})}, \\
&C_j^*(\b{x}) = \frac{\pi_j\, p_{g^*_j}(\b{x})}{\sum_{l=1}^k \pi_l\, p_{g^*_l}(\b{x})}, \\
&G^* := \arg \min_G \big( 2\, \text{JSD}(p_\text{data} \| p_g) \\
&~~~~~~~~~~~~~~~~~~~~~~~~~~~~~ - \lambda\, \text{JSD}_\pi(p_{G_1}, \dots, p_{G_k}) \big),
\end{aligned}
\end{equation}
where:
\begin{equation}
\begin{aligned}
&\text{JSD}_\pi(p_{G_1}, \dots, p_{G_k}) := \\
&\sum_{j=1}^k \pi_j \mathbb{E}_{\b{x} \sim p_{g_j}}[\log (\frac{\pi_j\, p_{g_j}(\b{x})}{\sum_{l=1}^k \pi_l\, p_{g_l}(\b{x})})] - \sum_{j=1}^k \pi_j\, \log(\pi_j). 
\end{aligned}
\end{equation}
\end{theorem}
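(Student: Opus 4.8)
The plan is to follow the same three-stage strategy used for the vanilla GAN, first finding the optimal $D$, then the optimal $C$, and finally reducing the game to an objective over the generators. Throughout I write $p_g(\b{x}) = \sum_{j=1}^k \pi_j\, p_{g_j}(\b{x})$ for the mixture density. I would first hold the generators and the classifier fixed and optimize over $D$. The only $D$-dependent part of Eq. (\ref{equation_MGAN_loss}) is exactly the two-term cross-entropy of the vanilla GAN with $p_g$ playing the role of the generator density, so Theorem \ref{theorem_GAN_optimal_discriminator} applies verbatim and yields the stated $D^*(\b{x}) = p_\text{data}(\b{x}) / (p_\text{data}(\b{x}) + p_g(\b{x}))$.

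Next I would fix the generators and $D$ and optimize over $C$. The only $C$-dependent part is $-\lambda \sum_{j=1}^k \pi_j \mathbb{E}_{\b{x} \sim p_{g_j}}[\log C_j(\b{x})]$, which after interchanging the sum and integral equals $-\lambda \int \big(\sum_j \pi_j p_{g_j}(\b{x}) \log C_j(\b{x})\big)\, d\b{x}$. Since the softmax output obeys the pointwise constraint $\sum_j C_j(\b{x}) = 1$, I would minimize this (equivalently, maximize $\sum_j \pi_j p_{g_j}(\b{x}) \log C_j(\b{x})$) separately for each $\b{x}$ using one Lagrange multiplier per point. Setting the derivative with respect to $C_j(\b{x})$ to zero gives $C_j(\b{x}) \propto \pi_j p_{g_j}(\b{x})$, and normalizing by $\sum_l \pi_l p_{g_l}(\b{x})$ produces the stated $C_j^*(\b{x})$.

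With $D^*$ and $C^*$ in hand, I would substitute both back into Eq. (\ref{equation_MGAN_loss}). The first two terms collapse, exactly as in the vanilla-GAN computation leading to Eq. (\ref{equation_GAN_loss_JSD}), to $2\,\text{JSD}(p_\text{data} \| p_g) - \log(4)$. For the classifier term, substituting $C_j^*$ turns $-\lambda \sum_j \pi_j \mathbb{E}_{p_{g_j}}[\log C_j^*]$ into $-\lambda\big(\text{JSD}_\pi(p_{G_1}, \dots, p_{G_k}) + \sum_j \pi_j \log(\pi_j)\big)$, directly from the definition of $\text{JSD}_\pi$ quoted in the statement. Discarding the two additive constants $-\log(4)$ and $-\lambda \sum_j \pi_j \log(\pi_j)$, which are independent of the generators, leaves precisely the claimed generator objective $\min_G\big(2\,\text{JSD}(p_\text{data}\|p_g) - \lambda\,\text{JSD}_\pi(p_{G_1},\dots,p_{G_k})\big)$.

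The hard part will be the remaining claim $p_{g^*} = p_\text{data}$, because the two divergence terms pull in opposite directions: the first is minimized by matching the mixture to the data, while $-\lambda\,\text{JSD}_\pi$ rewards driving the individual components apart. The key observation I would use is that $\text{JSD}_\pi$ equals the mutual information $I(X;J)$ between a sample $X$ and its component index $J \sim \text{Mult}(\b{\pi})$ (one checks $\text{JSD}_\pi = \sum_j \pi_j\,\text{KL}(p_{g_j}\|p_g)$), so it is bounded above by the entropy $H(\b{\pi}) = -\sum_j \pi_j \log(\pi_j)$, with equality iff the components have disjoint supports. Combined with $\text{JSD}(p_\text{data}\|p_g) \geq 0$, this yields the global lower bound $2\,\text{JSD}(p_\text{data}\|p_g) - \lambda\,\text{JSD}_\pi \geq \lambda \sum_j \pi_j \log(\pi_j)$. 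I would then argue this bound is attainable under the enough-capacity assumption: partition the support of $p_\text{data}$ into $k$ regions carrying masses $\pi_1,\dots,\pi_k$ and let each $p_{g_j}$ be $p_\text{data}$ restricted and renormalized to region $j$; then $\sum_j \pi_j p_{g_j} = p_\text{data}$ (the first term vanishes) and the supports are disjoint (the second attains $H(\b{\pi})$). Since the lower bound is achieved, any minimizer must simultaneously force $\text{JSD}(p_\text{data}\|p_{g^*}) = 0$, i.e.\ $p_{g^*} = p_\text{data}$, which completes the argument.
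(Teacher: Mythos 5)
Your proposal is correct, but note that the paper never actually proves this theorem---it states it and defers to {\citep[Theorem 2]{hoang2018mgan}}---so the fair comparison is with the machinery the paper \emph{does} develop for the vanilla GAN. Your first three steps are exactly that machinery, transplanted: pointwise maximization over $D(\b{x})$ (the argument of Theorem \ref{theorem_GAN_optimal_discriminator}, which applies verbatim since the second term of Eq. (\ref{equation_MGAN_loss}) is already an expectation over $p_g$, so no change of variables is even needed), a pointwise Lagrange-multiplier maximization of $\sum_j \pi_j p_{g_j}(\b{x})\log C_j(\b{x})$ over the simplex for $C^*$, and back-substitution reproducing the $2\,\text{JSD}(p_\text{data}\|p_g)-\log 4$ collapse of Eq. (\ref{equation_GAN_loss_JSD}) plus the $-\lambda(\text{JSD}_\pi + \sum_j \pi_j\log\pi_j)$ term; all of these computations check out. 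Where your proposal genuinely adds value is the last claim, $p_{g^*}=p_\text{data}$, which does not follow mechanically because the two divergence terms compete. Your resolution---identifying $\text{JSD}_\pi = \sum_j \pi_j\,\text{KL}(p_{g_j}\|p_g) = I(X;J) \leq H(\b{\pi})$, deducing the global lower bound $\lambda\sum_j \pi_j\log\pi_j$, and exhibiting a configuration (restrict $p_\text{data}$ to a partition of masses $\pi_j$ and renormalize) that attains it---is sound, and it correctly forces every minimizer to satisfy both $\text{JSD}(p_\text{data}\|p_{g^*})=0$ and disjoint component supports. This is a clean, self-contained argument for a part of the statement that the paper leaves entirely to the citation.

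Two small caveats would make it airtight. First, the attainability construction needs the support of $p_\text{data}$ to admit a partition into pieces of masses \emph{exactly} $\pi_1,\dots,\pi_k$; this holds whenever $p_\text{data}$ is non-atomic (implicit in the paper's density notation) but can fail for discrete distributions whose atoms are incompatible with $\b{\pi}$. Second, the argument presupposes that the generators have enough capacity to realize the restricted-and-renormalized densities; you invoke this informally, and it should be stated as an explicit assumption, just as the paper does for its own convergence result on the vanilla GAN.
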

The above theorem means that the JSD between the mixture distribution $p_g$ and the data distribution $p_\text{data}$ is minimized; however, the JSD between the the components of mixture is maximized so that the components capture various modes of data. 

The below theorem shows that if the distribution of data is actually a mixture distribution itself, the optimal generation distribution at the Nash equilibrium becomes exactly that mixture. 
\begin{theorem}[{\citep[Theorem 3]{hoang2018mgan}}]
If the data distribution is a mixture $p_\text{data} = \sum_{j=1}^k \pi_j\, q_j(\b{x})$ where the components $q_j(\b{\b{x}})$'s are well-separated, the optimal generation distribution in MGAN is:
\begin{equation}
\begin{aligned}
& p_{g^*_j}(\b{x}) = q_j(\b{x}), \quad \forall j = 1, \dots, k, \\
& p_{g^*}(\b{x}) = p_\text{data} = \sum_{j=1}^k \pi_j\, q_j(\b{x}).
\end{aligned}
\end{equation}
\end{theorem}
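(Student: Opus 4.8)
The plan is to work directly with the variational characterization of $G^*$ supplied by the previous theorem (Theorem 2 in \cite{hoang2018mgan}), namely
\[
G^* = \arg\min_G \big( 2\, \text{JSD}(p_\text{data} \,\|\, p_g) - \lambda\, \text{JSD}_\pi(p_{g_1}, \dots, p_{g_k}) \big),
\]
and to show that the mixture hypothesis on $p_\text{data}$ makes the choice $p_{g_j} = q_j$ a global minimizer of this objective. First I would split the objective into its two competing pieces: the term $2\,\text{JSD}(p_\text{data}\,\|\,p_g)$, which is nonnegative and vanishes exactly when the mixture $p_g = \sum_{j} \pi_j p_{g_j}$ equals $p_\text{data}$; and the term $-\lambda\,\text{JSD}_\pi(p_{g_1},\dots,p_{g_k})$, which is minimized precisely when the generalized Jensen-Shannon divergence among the components is as large as possible.

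The key lemma I would establish is a tight upper bound on this multi-distribution divergence. Rewriting the expression given in the previous theorem as $\text{JSD}_\pi(p_{g_1},\dots,p_{g_k}) = \sum_{j} \pi_j\, \text{KL}(p_{g_j} \,\|\, \sum_l \pi_l p_{g_l})$ and interpreting it as the mutual information $I(\b{x};J)$ between a generated point $\b{x}$ and the latent component index $J \sim \text{Mult}(\b{\pi})$, I obtain $\text{JSD}_\pi \leq H(J) = -\sum_{j} \pi_j \log(\pi_j)$, with equality exactly when $\b{x}$ determines $J$, i.e.\ when the components have essentially disjoint supports. This is where the ``well-separated'' hypothesis enters: it is precisely the condition that turns this inequality into an equality for the true components $q_j$.

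Next I would evaluate the objective at the candidate $p_{g_j}=q_j$. Then $p_g = \sum_j \pi_j q_j = p_\text{data}$, so the first term is $0$; and well-separation of the $q_j$ makes $\text{JSD}_\pi(q_1,\dots,q_k) = -\sum_j \pi_j \log(\pi_j)$, so the second term attains its minimum value $\lambda \sum_j \pi_j \log(\pi_j)$. Combining with the lower bound of the previous paragraph, the candidate simultaneously attains the infimum of each piece and hence the global minimum of the whole objective. This forces $p_{g^*_j} = q_j$ and, summing over the components, $p_{g^*} = \sum_j \pi_j q_j = p_\text{data}$, as claimed.

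The main obstacle I anticipate is pinning down the equality case of the mutual-information bound and matching it rigorously to the informal ``well-separated'' assumption: I would need to argue that disjointness of the supports of the $q_j$, together with the constraint $\sum_j \pi_j p_{g_j} = p_\text{data}$ forced by vanishing of the first term, leaves $p_{g_j} = q_j$ as the only solution (up to relabeling the components). Making the notion of separation quantitative enough to rule out mass ``leaking'' between components, while keeping the argument compatible with the continuous densities used throughout, is the delicate step; everything else reduces to the nonnegativity of $\text{JSD}$ and the entropy bound on mutual information.
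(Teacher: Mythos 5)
The paper you were asked to match contains no proof of this theorem: it states the result and cites the original MGAN paper for it. So there is nothing to compare against line by line, and your proposal has to be judged on its own merits. On those merits, it is correct as an achievability argument, and it is essentially the argument one would expect. Your key lemma checks out: expanding the logarithm in the paper's definition of $\text{JSD}_\pi$, the $-\sum_{j}\pi_j\log(\pi_j)$ term cancels and one obtains $\text{JSD}_\pi(p_{g_1},\dots,p_{g_k}) = \sum_{j=1}^k \pi_j\, \text{KL}\big(p_{g_j} \,\|\, \textstyle\sum_{l=1}^k \pi_l\, p_{g_l}\big)$, which is exactly the mutual information $I(\b{x};J)$ for $J \sim \text{Mult}(\b{\pi})$ and $\b{x}\,|\,J=j \sim p_{g_j}$. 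Hence $\text{JSD}_\pi \leq H(J) = -\sum_j \pi_j \log(\pi_j)$, with equality precisely when the posterior over $J$ is degenerate for almost every $\b{x}$, i.e., the components have essentially disjoint supports. The objective in the variational characterization is therefore bounded below by $-\lambda H(J)$, and the candidate $p_{g_j}=q_j$ attains the lower bound of each term simultaneously under the well-separation hypothesis, so it is a global minimizer; summing gives $p_{g^*} = p_\text{data}$.

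The obstacle you flag in your last paragraph, however, is not a delicate step to be made rigorous --- it is false at the level of this objective, and you should not attempt it. Concretely: let $A_1,\dots,A_k$ be any measurable partition of the support of $p_\text{data}$ with $\int_{A_j} p_\text{data}(\b{x})\, d\b{x} = \pi_j$, and set $p_{g_j}(\b{x}) := \pi_j^{-1}\, p_\text{data}(\b{x})\, \mathbb{I}(\b{x} \in A_j)$. Then $\sum_j \pi_j p_{g_j} = p_\text{data}$ and the $p_{g_j}$ have pairwise disjoint supports, so this configuration also attains the global minimum $-\lambda H(J)$, even though the sets $A_j$ may slice across the true components $q_j$. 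No quantitative sharpening of ``well-separated'' (which is a hypothesis on the $q_j$, not on the candidate $p_{g_j}$) rules such minimizers out; extra structural constraints on the generators would be needed. The theorem should therefore be read as asserting that $p_{g_j} = q_j$ \emph{is} an optimal configuration --- i.e., the mixture of generators can recover $p_\text{data}$ exactly, one component per generator --- not that it is the unique one. Under that reading, the first three paragraphs of your proposal already constitute a complete proof, and the fourth paragraph's program should simply be dropped.
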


\subsection{Dual Discriminator GAN (D2GAN)}\label{section_D2GAN}

It is empirically observed \cite{theis2016note,huszar2015not,goodfellow2016nips} that the JSD used in GAN (see Eq. (\ref{equation_GAN_loss_JSD})) has the same effect as reverse KL divergence $\text{KL}(p_g(\b{x}) \| p_\text{data}(\b{x}))$. This reverse KL divergence has the problem of mode collapse because it covers a single mode very well but cannot cover multiple modes well. 
That is while the KL divergence $\text{KL}(p_\text{data}(\b{x}) \| p_g(\b{x}))$ can cover multiple modes and does not have the mode collapse problem; however, it can include potentially undesirable samples \cite{nguyen2017dual}.
Dual Discriminator GAN (D2GAN) \cite{nguyen2017dual} combines the advantages of both KL divergence and reverse KL divergence by having both in its formulation. Therefore, it does not face a mode collapse while it prevents undesirable samples. 

% The structure of D2GAN is illustrated in Fig. ....
In D2GAN, we have two discriminators $D_1$ and $D_2$ and one generator $G$. The discriminators do not share their weights. In contrast to the original GAN, the outputs of discriminators are non-negative rather than being in range $[0,1]$. 
The discriminator $D_1$ gives high and low scores to real and generated (fake) data, respectively. Conversely, the discriminator $D_2$ gives high and low scores to generated (fake) and real data, respectively. The generator $G$ tries to fool both discriminators. 
D2GAN plays a three-player game whole loss function is:
\begin{equation}\label{equation_D2GAN_loss}
\begin{aligned}
&\min_G \max_{D_1, D_2}\,\,\,\, V(D_1, D_2, G) := \\
&\alpha \mathbb{E}_{\b{x} \sim p_\text{data}(\b{x})}\Big[\log\!\big(D_1(\b{x})\big)\Big] + \mathbb{E}_{\b{z} \sim p_z(\b{z})}\Big[\!-\! D_1\big(G(\b{z})\big)\Big] \\
&+ \mathbb{E}_{\b{x} \sim p_\text{data}(\b{x})}\Big[\!-\! D_2(\b{x})\Big] + \beta \mathbb{E}_{\b{z} \sim p_z(\b{z})}\Big[\log\!\big(D_2\big(G(\b{z})\big)\big)\Big],
\end{aligned}
\end{equation}
where $\alpha, \beta \in (0,1]$ are hyperparameters. 
% These parameters should be small because $D_1(G(\b{z}))$ and $D_2(\b{x})$ can become very large (recall that the outputs of discriminators in D2GAN are unbounded non-negative values).
Alternating optimization \cite{ghojogh2021kkt} between $D_1$, $D_2$, and $G$ solves the problem. 

\begin{theorem}[{\citep[Proposition 1 and Theorem 2]{nguyen2017dual}}]
After convergence (i.e., Nash equilibrium) of D2GAN, we have:
\begin{align}
& D_1^*(\b{x}) = \frac{\alpha p_\text{data}(\b{x})}{p_g(\b{x})}, \quad D_2^*(\b{x}) = \frac{\beta p_g(\b{x})}{p_\text{data}(\b{x})}, 
\end{align}
The loss function at the optimal discriminators is:
\begin{align}
&V(D^*_1, D^*_2, ,G) := \alpha (\log(\alpha)-1) + \beta (\log(\beta)-1) \nonumber \\
&~~~~~~~~~~ + \alpha \text{KL}(p_\text{data}(\b{x}) \| p_g(\b{x})) + \beta \text{KL}(p_g(\b{x}) \| p_\text{data}(\b{x})). \label{equation_D2GAN_V_alpha_beta} \\
&\!\implies\!\!\! V(D^*_1, D^*_2, ,G^*) := \alpha (\log(\alpha)-1) + \beta (\log(\beta)-1). \nonumber
\end{align}
Therefore,
\begin{align}
& p_{g^*}(\b{x}) = p_\text{data}(\b{x}), \quad D_1^*(\b{x}) = \alpha, \quad D_2^*(\b{x}) = \beta.
\end{align}
\end{theorem}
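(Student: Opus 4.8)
The plan is to mirror the two-step argument used for the vanilla GAN (Theorem \ref{theorem_GAN_optimal_discriminator} and the theorem following it): first fix the generator and optimize the two discriminators pointwise, then substitute the optimal discriminators back and minimize the resulting expression over $G$. First I would rewrite the value function in Eq. (\ref{equation_D2GAN_loss}) as a single integral over $\b{x}$. Applying the same change of variables $\b{z} = G^{-1}(\b{x})$ and the relation $p_g(\b{x}) = p_z(G^{-1}(\b{x}))\,(G^{-1})'(\b{x})$ used in the proof of Theorem \ref{theorem_GAN_optimal_discriminator}, the two noise expectations turn into integrals against $p_g(\b{x})$, giving
\begin{align*}
V(D_1, D_2, G) = \int_{\b{x}} \Big[ &\alpha\, p_\text{data}(\b{x}) \log D_1(\b{x}) - p_g(\b{x}) D_1(\b{x}) \\
&- p_\text{data}(\b{x}) D_2(\b{x}) + \beta\, p_g(\b{x}) \log D_2(\b{x}) \Big] d\b{x}.
\end{align*}

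Next, since the integrand decouples into a $D_1$-part and a $D_2$-part, I would maximize each pointwise for fixed $\b{x}$. Each summand has the form $a \log t - b t$ with $a, b > 0$, which is concave in $t$ with a unique stationary point at $t = a/b$; setting the derivatives $\frac{\alpha p_\text{data}}{D_1} - p_g$ and $\frac{\beta p_g}{D_2} - p_\text{data}$ to zero immediately yields $D_1^*(\b{x}) = \alpha p_\text{data}(\b{x}) / p_g(\b{x})$ and $D_2^*(\b{x}) = \beta p_g(\b{x}) / p_\text{data}(\b{x})$. These are admissible precisely because D2GAN allows non-negative discriminator outputs rather than values in $[0,1]$.

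Substituting these optima back, each logarithmic term splits as $\log(\text{coefficient}) + \log(\text{density ratio})$ and each linear term integrates to a constant since $\int p_\text{data}\,d\b{x} = \int p_g\,d\b{x} = 1$. Collecting terms, the $\log\alpha$ and $\log\beta$ pieces give $\alpha\log\alpha + \beta\log\beta$, the two linear terms give $-\alpha - \beta$, and the density-ratio pieces assemble into $\alpha\,\text{KL}(p_\text{data}\|p_g) + \beta\,\text{KL}(p_g\|p_\text{data})$, reproducing Eq. (\ref{equation_D2GAN_V_alpha_beta}). Finally, to minimize over $G$ I would note that the constant $\alpha(\log\alpha-1)+\beta(\log\beta-1)$ is independent of $G$, while both KL terms are non-negative and vanish simultaneously exactly when $p_g = p_\text{data}$; hence $p_{g^*} = p_\text{data}$ is the unique minimizer, and substituting it into the optimal discriminators collapses them to $D_1^* = \alpha$ and $D_2^* = \beta$.

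The computation is almost entirely routine given the vanilla-GAN proof. The one point requiring care is the change of variables for the noise integrals, which (as in Theorem \ref{theorem_GAN_optimal_discriminator}) tacitly assumes $G$ is invertible with a well-defined Jacobian; more rigorously, one would instead define $p_g$ as the pushforward of $p_z$ under $G$ and invoke the law of the unconscious statistician to write $\mathbb{E}_{\b{z}}[\phi(G(\b{z}))] = \mathbb{E}_{\b{x}\sim p_g}[\phi(\b{x})]$ for any integrand $\phi$, sidestepping the invertibility assumption altogether. The only other thing to verify is that the two KL terms cannot trade off against each other, but since both are minimized at the same point $p_g = p_\text{data}$, their positively-weighted sum is minimized there as well, so no genuine obstacle arises.
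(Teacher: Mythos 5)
Your proof is correct. In fact, the paper states this theorem without proof, simply citing \citep[Proposition 1 and Theorem 2]{nguyen2017dual}, so there is no internal proof to compare against; what you have written is exactly the template the paper applies to the vanilla GAN (Theorem \ref{theorem_GAN_optimal_discriminator} and the theorem following it) and to LSGAN: rewrite the value function as a single integral against $p_\text{data}$ and $p_g$, maximize the integrand pointwise (here each discriminator term has the concave form $a\log t - bt$ with unique maximizer $t = a/b$, admissible because D2GAN's discriminators output non-negative reals rather than values in $[0,1]$), back-substitute to obtain a $G$-independent constant plus non-negative KL terms, and minimize over $G$ by noting that both KL terms vanish exactly at $p_g = p_\text{data}$. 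Your closing remark about defining $p_g$ as the pushforward of $p_z$ under $G$ and invoking the law of the unconscious statistician is also a sound fix for the invertibility assumption that the paper's change-of-variables step, Eq. (\ref{equation_pg_pz_relation}), makes tacitly.
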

According to Eq. (\ref{equation_D2GAN_V_alpha_beta}), the parameters $\alpha$ and $\beta$ are for the KL divergence $\text{KL}(p_\text{data}(\b{x}) \| p_g(\b{x}))$ and the reverse KL divergence $\text{KL}(p_g(\b{x}) \| p_\text{data}(\b{x}))$, respectively. 
Therefore, increasing $\alpha$ results in generation several modes, resolving the mode collapsing issue, but may include some undesirable samples. Increasing $\beta$ results in generation of a single mode but might miss several modes. A balance should be kept between the parameters $\alpha$ and $\beta$. 

\subsection{Wasserstein GAN (WGAN)}\label{section_WGAN}

Wasserstein GAN (WGAN) was proposed in \cite{arjovsky2017wasserstein}, developed from \cite{arjovsky2017towards}.
The Wasserstein-1 or Earth-Mover distance between two distributions $p_\text{data}(\b{x})$ and $p_g(\b{x})$ is defined as:
\begin{align}
&W(p_\text{data}(\b{x}), p_g(\b{x})) \nonumber\\
&~~~~~~~~ := \inf_{\gamma \in \Pi(p_\text{data}(\b{x}), p_g(\b{x}))} \mathbb{E}_{(\b{x}_i,\b{x}_j) \sim \gamma} [\|\b{x}_i - \b{x}_j\|], 
\end{align}
where $\Pi(p_\text{data}(\b{x}), p_g(\b{x}))$ is the set of all joint distributions whose marginals are $p_\text{data}(\b{x})$ and $p_g(\b{x})$. 
By the Kantorovich-Rubinstein duality \cite{villani2009optimal}, the Wasserstein-1 distance is equivalent to:
\begin{align}
&W(p_\text{data}(\b{x}), p_g(\b{x})) \nonumber\\
&= \sup_{\|D\|_L \leq 1} \big(\mathbb{E}_{\b{x} \sim p_\text{data}(\b{x})} [D(\b{x})] - \mathbb{E}_{\b{x} \sim p_g(\b{x})} [D(\b{x})]\big), \label{equation_WGAN_D}
\end{align}
where $\|D\|_L \leq 1$ is the 1-Lipschitz functions $D: \mathcal{X} \rightarrow \mathbb{R}$. 
The gradient of the Wasserstein-1 distance w.r.t. the parameters of generator $G$ is {\citep[Theorem 3]{arjovsky2017wasserstein}}:
\begin{align}\label{equation_WGAN_G}
\frac{\partial W(p_\text{data}(\b{x}), p_g(\b{x}))}{\partial G} = -\mathbb{E}_{\b{z} \sim p_z(\b{z})}\Big[\frac{\partial D(G(\b{\b{z}}))}{\partial G}\Big].
\end{align}
The function $D(.)$ plays the role of discriminator in WGAN. 
In an alternating optimization, we maximize the loss in Eq. (\ref{equation_WGAN_D}) for the discriminator and minimize in a gradient descent step by the gradient of Eq. (\ref{equation_WGAN_G}) for generator. 
In other words, the loss function of WGAN is:
\begin{align}
\min_G \max_{\|D\|_L \leq 1} \mathbb{E}_{\b{x} \sim p_\text{data}(\b{x})} [D(\b{x})] - \mathbb{E}_{\b{x} \sim p_g(\b{x})} [D(\b{x})],
\end{align}
where $\b{x} \sim p_g(\b{x})$ is generated from the generator, i.e., $\b{x} = G(\b{z})$ in which $\b{z}$ is the latent noise.
The weights of discriminator are clipped to $[-0.01, 0.01]$. 
The constraint $\|D\|_L \leq 1$ can be implemented by regularization with a gradient penalty \cite{gulrajani2017improved}:
\begin{equation}\label{equation_WGAN_loss}
\begin{aligned}
\min_G \max_{\|D\|_L \leq 1} &\mathbb{E}_{\b{x} \sim p_\text{data}(\b{x})} [D(\b{x})] - \mathbb{E}_{\b{x} \sim p_g(\b{x})} [D(\b{x})] \\
& -\lambda ( \nabla_{\widehat{\b{x}}} D(\widehat{\b{x}})\|_2 - 1 )^2,
\end{aligned}
\end{equation}
where $\lambda>0$ is the regularization parameter and $\widehat{\b{x}}$ is uniformly sampled as an interpolation between the real data and the generated data:
\begin{align}\label{equation_WGAN_x_hat}
\widehat{\b{x}} := \rho\, \b{x} + (1 - \rho) G(\b{z}),
\end{align}
in which $\rho \sim U(0,1)$.
Experiments show that WGAN resolves the mode collapse problem \cite{arjovsky2017wasserstein}. 

\section{Maximum Likelihood Estimation in GAN}

In the following, we introduce the methods which relate GAN and Maximum Likelihood Estimation (MLE).

\subsection{Comparison of MLE and GAN}

GAN is related to Noise-Contrastive Estimation (NCE) \cite{gutmann2010noise} and MLE, in the sense that they all optimize a distinguishing game value function \cite{goodfellow2015distinguishability}:
\begin{equation}
\begin{aligned}
& V(p_c, p_g) := \mathbb{E}_{\b{x} \sim p_\text{data}}[\log(p_c(y=1|\b{x}))] \\
&~~~~~~~~~~~~~~~~~ + \mathbb{E}_{\b{x} \sim p_g}[\log(p_c(y=0|\b{x}))],
\end{aligned}
\end{equation}
where $p_g$ and $p_\text{data}$ denote the distributions of generated and real data, respectively, and $p_c(y|\b{x})$ is the output probability of a classifier which judges whether a data point $\b{x}$ is generated or real. 
Let $\b{\theta}$ denote the parameters of $p_g(\b{x})$ distribution. If $f(\b{x}) := \log(p_c(y=0|\b{x}))$, GAN performs the following optimization for its generator \cite{goodfellow2015distinguishability}:
\begin{align}
&\min_{\b{\theta}}\, \mathbb{E}_{\b{x} \sim p_g}[f(\b{x})] \label{equation_MLE_GAN_min_E_f}\\
&\implies \frac{\partial }{\partial \b{\theta}} \mathbb{E}_{\b{x} \sim p_g}[f(\b{x})] = \frac{\partial }{\partial \b{\theta}} \int f(\b{x}) p_g(\b{x}) d\b{x} \nonumber\\
&= \int f(\b{x}) \frac{\partial }{\partial \b{\theta}} p_g(\b{x}) d\b{x} \nonumber\\
&= \int f(\b{x}) p_g(\b{x}) \frac{1}{p_g(\b{x})} \frac{\partial }{\partial \b{\theta}} p_g(\b{x}) d\b{x} \nonumber\\
&= \int f(\b{x}) p_g(\b{x}) \frac{\partial }{\partial \b{\theta}} \log(p_g(\b{x})) d\b{x}. \label{equation_GAN_G_MLE}
\end{align}
On the other hand, MLE has the following optimization:
\begin{align}
&\max_{\b{\theta}}\, \mathbb{E}_{\b{x} \sim p_g}[p_\text{data}(\b{x})] \nonumber \\
&\implies\!\! \frac{\partial }{\partial \b{\theta}} \mathbb{E}_{\b{x} \sim p_g}[p_\text{data}(\b{x})] = \int p_\text{data}(\b{x}) \frac{\partial }{\partial \b{\theta}} \log(p_g(\b{x})) d\b{x}. \label{equation_MLE}
\end{align}
Eqs. (\ref{equation_GAN_G_MLE}) and (\ref{equation_MLE}) are for minimization in GAN and maximization in MLE, respectively. By their comparison, we can have MLE in GAN if we set:
\begin{align}\label{equation_MLE_GAN_f}
f(\b{x}) = - \frac{p_\text{data}(\b{x})}{p_g(\b{x})}.
\end{align}
The discriminator of GAN is modeled as a classifier for $\b{x}$ being real and not generated (fake); hence:
\begin{align}\label{equation_MLE_GAN_D_sigma}
D(\b{x}) = p_c(y=1|\b{x}) = \sigma(D'(\b{x})), 
\end{align}
where $\sigma(.)$ is the sigmoid activation function and $D'(\b{x})$ denotes the discriminator network except the sigmoid function at its last layer.

\begin{theorem}[\cite{goodfellow2015distinguishability,goodfellow2016nips}]
The loss function for the generator of GAN can be stated as any of the following loss functions:
\begin{align}
& \min_G \mathbb{E}_{\b{z} \sim p_z(\b{z})}\Big[\log\!\Big(1 - D\big(G(\b{z})\big)\Big)\Big], \label{equation_MLE_GAN_loss_G_1} \\
& \min_G -\mathbb{E}_{\b{z} \sim p_z(\b{z})}\Big[\log\!\Big(D\big(G(\b{z})\big)\Big)\Big], \label{equation_MLE_GAN_loss_G_2} \\
& \min_G -\mathbb{E}_{\b{z} \sim p_z(\b{z})}\Big[\sigma^{-1}\Big(D\big(G(\b{z})\big)\Big)\Big], \label{equation_MLE_GAN_loss_G_3}
\end{align}
where $\sigma(.)$ is the sigmoid function.
\end{theorem}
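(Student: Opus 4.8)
The three displayed objectives are not equal as functions of $G$; the claim I would prove is rather that each is a legitimate generator loss because all three drive the generator in the same direction, namely toward increasing the discriminator's response $D(G(\b{z}))$ on generated samples (i.e.\ fooling $D$). The plan is therefore to show that minimizing each of Eqs.~(\ref{equation_MLE_GAN_loss_G_1})--(\ref{equation_MLE_GAN_loss_G_3}) is equivalent to pushing $D(G(\b{z})) \to 1$, so that the three share the same optimizer target, and then to use the earlier MLE derivation to pin down the meaning of the third form. In every case the generator gradient factors as a common term $\partial D(G(\b{z}))/\partial G$ multiplied by a scalar function of $D(G(\b{z}))$, so it suffices to track the sign of that scalar.

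First I would treat Eq.~(\ref{equation_MLE_GAN_loss_G_1}) as the baseline, since it is exactly the generator's half of the value function $V(D,G)$ in Eq.~(\ref{equation_GAN_loss}). Differentiating $\log(1 - D)$ with respect to $D$ gives $-1/(1-D)$, which is negative everywhere, so minimizing pushes $D(G(\b{z})) \to 1$; this gradient, however, saturates precisely when $D(G(\b{z})) \approx 0$, i.e.\ early in training when the discriminator easily rejects fakes. For Eq.~(\ref{equation_MLE_GAN_loss_G_2}), the derivative $\frac{\partial}{\partial D}(-\log D) = -1/D$ is again negative, so minimizing $-\log D$ drives $D(G(\b{z})) \to 1$ with the same fixed point as before, but with a gradient that is large exactly when $D(G(\b{z}))$ is small, curing the saturation. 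Hence Eqs.~(\ref{equation_MLE_GAN_loss_G_1}) and (\ref{equation_MLE_GAN_loss_G_2}) are interchangeable as generator objectives.

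For Eq.~(\ref{equation_MLE_GAN_loss_G_3}) I would invoke the sigmoid parametrization $D(\b{x}) = \sigma(D'(\b{x}))$ of Eq.~(\ref{equation_MLE_GAN_D_sigma}), so that $\sigma^{-1}(D(G(\b{z}))) = D'(G(\b{z}))$ is simply the pre-activation logit. Because $\sigma$ is strictly increasing, minimizing $-D'$ maximizes the logit and therefore maximizes $D(G(\b{z}))$, giving the same direction as the other two. To attach the maximum-likelihood interpretation of this section I would evaluate the logit at the optimal discriminator of Eq.~(\ref{equation_GAN_D_optimum}): $\sigma^{-1}(D^*) = \log\big(D^*/(1-D^*)\big) = \log\big(p_\text{data}(\b{x})/p_g(\b{x})\big)$, so the logit directly reports the log density ratio, consistent with the MLE-matching choice $f(\b{x}) = -p_\text{data}(\b{x})/p_g(\b{x})$ identified in Eq.~(\ref{equation_MLE_GAN_f}).

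The main obstacle I anticipate is conceptual rather than computational: the word \emph{equivalent} must be read as ``sharing the same generator gradient direction and the same optimal target,'' not as pointwise equality of the three functions, and making this precise rests on the observation that each generator gradient passes through the common factor $\partial D(G(\b{z}))/\partial G$ with a scalar of fixed negative sign. The third form is the most delicate link, since its equivalence to the first two is heuristic away from the optimal discriminator and sharpens into the clean density-ratio statement above only when $D = D^*$, via the logit identity.
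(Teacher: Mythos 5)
Your handling of Eqs. (\ref{equation_MLE_GAN_loss_G_1}) and (\ref{equation_MLE_GAN_loss_G_2}) is essentially the paper's own argument (both objectives push $D(G(\b{z}))$ toward one, hence share the same target), and the saturation remark is a reasonable addition. The genuine gap is in your treatment of Eq. (\ref{equation_MLE_GAN_loss_G_3}). The paper's proof of that form is not a ``same gradient direction'' argument: it is an MLE calibration. Starting from the requirement in Eq. (\ref{equation_MLE_GAN_f}) that the per-sample generator loss must be $f(\b{x}) = -p_\text{data}(\b{x})/p_g(\b{x})$ for the generator update (\ref{equation_GAN_G_MLE}) to reproduce the MLE gradient (\ref{equation_MLE}), and using the optimal discriminator (\ref{equation_GAN_D_optimum}) together with $D = \sigma(D')$ from Eq. (\ref{equation_MLE_GAN_D_sigma}), the paper obtains $p_g(\b{x})/p_\text{data}(\b{x}) = \exp(-D'(\b{x}))$ and hence $f(\b{x}) = -\exp\big(\sigma^{-1}(D(\b{x}))\big)$: the loss is the negative \emph{exponential} of the logit, not the negative logit.

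Your claim that minimizing $-\sigma^{-1}(D(G(\b{z})))$ is ``consistent with the MLE-matching choice $f(\b{x}) = -p_\text{data}(\b{x})/p_g(\b{x})$'' is therefore false. At the optimal discriminator, $-\sigma^{-1}(D^*(\b{x})) = -\log\big(p_\text{data}(\b{x})/p_g(\b{x})\big)$, which corresponds to $f(\b{x}) = -\log(p_\text{data}/p_g)$; substituting this into Eq. (\ref{equation_GAN_G_MLE}) yields the gradient of the reverse KL divergence $\text{KL}(p_g \| p_\text{data})$, not the MLE (forward) gradient --- the density ratio and its logarithm produce genuinely different updates. The entire content of the third form is that the exponential converts the logit back into the density ratio; without it the MLE connection collapses. (The displayed Eq. (\ref{equation_MLE_GAN_loss_G_3}) does omit the $\exp$, but the paper's proof actually derives $-\mathbb{E}_{\b{z}}\big[\exp\big(\sigma^{-1}(D(G(\b{z})))\big)\big]$, which is also the form in Goodfellow's tutorial; the missing exponential should be treated as a typo to be corrected, not as a statement to be justified by a direction-of-gradient argument.)
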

\begin{proof}
Eq. (\ref{equation_MLE_GAN_loss_G_1}) is the generator part of loss function (\ref{equation_GAN_loss}). The generator wants to fool the discriminator so it wants $D(G(\b{z}))$ to be close to one (see Eq. (\ref{equation_GAN_D})). Hence, rather than minimizing $\log(1 - D(G(\b{z})))$ in Eq. (\ref{equation_MLE_GAN_loss_G_1}), we can maximize $\log(D(G(\b{z})))$, or minimize its negation, in Eq. (\ref{equation_MLE_GAN_loss_G_2}) \cite{goodfellow2016nips}. 
The proof of Eq. (\ref{equation_MLE_GAN_loss_G_3}) is as follows \cite{goodfellow2015distinguishability}. 
Assume the discriminator is optimal for a given generator; hence, according to Eq. (\ref{equation_GAN_D_optimum}), we have:
\begin{align*}
p_c(y=1|\b{x}) &\overset{(\ref{equation_GAN_D_optimum})}{=} \frac{p_\text{data}(\b{x})}{p_\text{data}(\b{x}) + p_g(\b{x})} = \frac{1}{1 + \frac{p_g(\b{x})}{p_\text{data}(\b{x})}} \\
&\overset{(\ref{equation_MLE_GAN_D_sigma})}{=} \sigma(D'(\b{x})) = \frac{1}{1 + \exp(-D'(\b{x}))} 
\end{align*}
\begin{align*}
&\implies \frac{p_g(\b{x})}{p_\text{data}(\b{x})} = \exp(-D'(\b{x})) \\
&\overset{(\ref{equation_MLE_GAN_f})}{\implies} f(\b{x}) = -\exp(D'(\b{x})) \overset{(\ref{equation_MLE_GAN_D_sigma})}{=} -\exp\big(\sigma^{-1}(D(\b{x}))\big).
\end{align*}
Hence, for the generated data $\b{x} = G(\b{z})$, from the latent noise sample $\b{z} \sim p_z(\b{z})$, the Eq. (\ref{equation_MLE_GAN_min_E_f}) becomes Eq. (\ref{equation_MLE_GAN_loss_G_3}). Q.E.D.
\end{proof}

\subsection{f-GAN}

f-GAN \cite{nowozin2016f} uses f-divergence in the formulation of GAN. The f-GAN computes divergence between two distributions $p(\b{x})$ and $q(\b{x})$ by \cite{liese2006divergences}:
\begin{align}
D_f(P \| Q) := \int q(\b{x}) f\big(\frac{p(\b{x})}{q(\b{x})}\big) d\b{x}, 
\end{align}
where the convex so-called generator function $f: \mathbb{R}_+ \rightarrow \mathbb{R}$ satisfies $f(1)=0$. 
Two special cases of f-GAN are KL-divergence and JL-divergence. 
We denote the space of data by $\mathcal{X}$.

\begin{lemma}[{\citep[Lemma 1]{nguyen2010estimating}}]
A lower-bound on the f-divergence is as follows:
\begin{align}\label{equation_f_divergence_lower_bound}
D_f(P \| Q) \geq \sup_{T \in \mathcal{T}}(\mathbb{E}_{\b{x} \sim p(\b{x})}[T(\b{x})] - \mathbb{E}_{\b{x} \sim q(\b{x})}[f^*(T(\b{x}))]),
\end{align}
where $f^*$ is the convex conjugate of $f$ and $\mathcal{T}: \mathcal{X} \rightarrow \mathbb{R}$ is an arbitrary class of functions.
\end{lemma}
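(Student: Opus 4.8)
The plan is to derive the bound from Fenchel--Young duality applied pointwise to the convex generator function $f$. The only tool I need is the variational representation of a convex function through its conjugate: since $f:\mathbb{R}_+ \rightarrow \mathbb{R}$ is convex (and lower semicontinuous), the biconjugate equals the function itself, $f = f^{**}$, so that
\begin{align*}
f(u) = \sup_{t \in \text{dom}(f^*)} \big(t\, u - f^*(t)\big), \quad \forall u,
\end{align*}
where $f^*(t) := \sup_u (t u - f(u))$ is the convex conjugate. This is precisely the ingredient that lets a supremum over scalars $t$ re-emerge as a supremum over functions $T$ once $u$ is replaced by a likelihood ratio that varies with $\b{x}$.

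First I would substitute the likelihood ratio $u = p(\b{x})/q(\b{x})$ into the defining integral $D_f(P \| Q) = \int q(\b{x}) f(p(\b{x})/q(\b{x})) d\b{x}$ and replace $f$ by its variational form, obtaining
\begin{align*}
D_f(P \| Q) = \int q(\b{x}) \sup_{t} \Big(t\, \frac{p(\b{x})}{q(\b{x})} - f^*(t)\Big) d\b{x}.
\end{align*}
Next, for any fixed measurable $T \in \mathcal{T}$, the pointwise supremum dominates the particular choice $t = T(\b{x})$, so the integrand is bounded below by $T(\b{x}) p(\b{x})/q(\b{x}) - f^*(T(\b{x}))$ at each $\b{x}$. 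Integrating this pointwise inequality against $q$, cancelling $q(\b{x})$ in the first term, and recognizing the two integrals as expectations under $p$ and $q$ yields
\begin{align*}
D_f(P \| Q) \geq \mathbb{E}_{\b{x} \sim p(\b{x})}[T(\b{x})] - \mathbb{E}_{\b{x} \sim q(\b{x})}[f^*(T(\b{x}))].
\end{align*}
Since $T$ was an arbitrary element of $\mathcal{T}$, taking the supremum over $\mathcal{T}$ gives Eq. (\ref{equation_f_divergence_lower_bound}).

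The delicate point—and where the main obstacle sits—is the two supremum/integral interchanges, though the inequality direction turns out to be the \emph{easy} one. Dropping from the pointwise supremum to a fixed $T$ only loses ground, so no exchange of $\sup$ and $\int$ is needed to establish the lower bound itself; mere monotonicity of the integral suffices, and convexity of $f$ enters only through the elementary Fenchel--Young inequality $t u \leq f(u) + f^*(t)$. What genuinely requires justification is \emph{tightness}: the pointwise maximizer is $t^*(\b{x}) = f'(p(\b{x})/q(\b{x}))$, a subgradient of $f$ at the likelihood ratio, and one must argue that some $T \in \mathcal{T}$ can realize, or approximate arbitrarily closely, this optimal choice so that the supremum recovers $D_f(P\|Q)$ exactly. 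I would close by observing that this optimal-$T$ characterization is exactly what makes the right-hand side an implementable GAN objective: choosing $T$ as a discriminator network and $f^*$ as the corresponding output nonlinearity turns the variational lower bound into the adversarial training loss of f-GAN.
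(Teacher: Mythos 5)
Your proposal is correct and follows essentially the same route as the paper's proof: both invoke the Fenchel biconjugation $f(u) = \sup_t (tu - f^*(t))$, substitute the likelihood ratio $u = p(\b{x})/q(\b{x})$ into the defining integral of $D_f$, and then pass from the pointwise supremum to a fixed $T \in \mathcal{T}$ before taking the supremum over $\mathcal{T}$ (the paper compresses this into the remark that the integral of maximums dominates the maximum of integrals, which you spell out more explicitly). Your additional discussion of tightness and the optimal choice $t^*(\b{x}) = f'(p(\b{x})/q(\b{x}))$ is beyond what the lemma asserts but is consistent with the paper's subsequent use of the bound in VDM and f-GAN.
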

\begin{proof}
The convex conjugate of function $f$ is defined as \cite{ghojogh2021kkt}:
\begin{align}
&f^*(t) := \sup_{u \in \text{dom}(f)} (ut - f(u)) \nonumber\\
&\implies f(u) := \sup_{t \in \text{dom}(f^*)} (tu - f^*(t)). \label{equation_f_convex_conjugate}
\end{align}
We have:
\begin{align*}
&D_f(P \| Q) \overset{(\ref{equation_f_convex_conjugate})}{=} \int q(\b{x}) \sup_{t \in \text{dom}(f^*)} \big(t\, \frac{p(\b{x})}{q(\b{x})} - f^*(t)\big)\, d\b{x} \\
&\overset{(a)}{\geq} \sup_{T \in \mathcal{T}} \big(\int p(\b{x}) T(\b{x}) d\b{x} - \int q(\b{x}) f^*(T(\b{x})) d\b{x}\big) \\
&= \sup_{T \in \mathcal{T}}(\mathbb{E}_{\b{x} \sim p(\b{x})}[T(\b{x})] - \mathbb{E}_{\b{x} \sim q(\b{x})}[f^*(T(\b{x}))]),
\end{align*}
where $(a)$ is because the summation of maximums is greater than or equal to the maximum of summations. Q.E.D.
\end{proof}

Variational Divergence Minimization (VDM) \cite{nowozin2016f} optimizes the f-divergence by optimizing the bound in Eq. (\ref{equation_f_divergence_lower_bound}). In this sense, it is similar to variational inference \cite{ghojogh2021factor}. 
Suppose $p(\b{x}) = p_\text{data}(\b{x})$ and $q(\b{x}) = p_g(\b{x})$ in Eq. (\ref{equation_f_divergence_lower_bound}), where $p_\text{data}(\b{x})$ and $p_g(\b{x})$ are the distributions of real and generated data, respectively. Let $T(\b{x}) = o_f(V(\b{x}))$ where $V: \mathcal{X} \rightarrow \mathbb{R}$ is the mapping of network from its input to one output neuron (before activation) and $o_f: \mathbb{R} \rightarrow \text{dom}(f^*)$ is the output of activation function. 
VDM can be used for optimization of various f-divergences. Its loss function is:
\begin{equation}\label{equation_VDM_loss}
\begin{aligned}
\min_G \max_V\,\,\,\, &\mathbb{E}_{\b{x} \sim p_\text{data}(\b{x})}\Big[o_f(V(\b{x}))\Big] \\
&~~~~~~~~~~~ + \mathbb{E}_{\b{x} \sim p_g(\b{x})}\Big[\!-\!f^*\big(o_f(V(\b{x}))\big)\Big].
\end{aligned}
\end{equation}
The reader can refer to {\citep[Table 2]{nowozin2016f}} for a complete list of expressions for $o_f(v)$ and $f^*$ in different special cases of f-divergence.
A special case of VDM is f-GAN in which $o_f(v) = -\log(1 + \exp(-v))$, $f^*(t) = -\log(1-\exp(t))$ and the discriminator is the sigmoid function of $V(\b{x})$, i.e., $D(\b{x}) = 1 / (1 + \exp(-V(\b{x})))$. 
Hence, in f-GAN, we have:
\begin{align*}
& o_f(V(\b{x})) = -\log(1 + \exp(-V(\b{x}))) = \log (D(\b{x})), \\
& f^*\big(o_f(V(\b{x}))\big) = -\log(1-\exp(\log (D(\b{x})))) \\
&~~~~~~~~~~~~~~~~~~~~~~~~ = -\log(1 - D(\b{x})).
\end{align*}
Putting these in Eq. (\ref{equation_VDM_loss}) gives the loss of f-GAN:
\begin{equation}\label{equation_f_GAN_loss}
\begin{aligned}
\min_G \max_D\,\,\,\, &\mathbb{E}_{\b{x} \sim p_\text{data}(\b{x})}\Big[\log\!\big(D(\b{x})\big)\Big] \\
&+ \mathbb{E}_{\b{x} \sim p_g(\b{x})}\Big[\log\!\big(1 - D\big(\b{x}\big)\big)\Big].
\end{aligned}
\end{equation}

\subsection{Adversarial Variational Bayes (AVB)}

Adversarial Variational Bayes (AVB) \cite{mescheder2017adversarial} combines the ideas of variational and adversarial training. Variational inference \cite{ghojogh2021factor} maximizes an evidence lower bound defined as:
\begin{align}\label{equation_AVB_ELBO_optimization}
\max_{\theta} \max_{\phi}\, \mathbb{E}_{p(\b{x})} \mathbb{E}_{q_{\phi}(\b{z}|\b{x})} \big[&\log(p(\b{z})) - \log(q_{\phi}(\b{z}|\b{x})) \nonumber\\
&+ \log(p_{\theta}(\b{x}|\b{z}))\big],
\end{align}
where $\theta$ and $\phi$ are parameters corresponding to $p_{\theta}(\b{x}|\b{z})$ and $q_{\phi}(\b{z}|\b{x})$, respectively. 
On the other hand, adversarial learning uses a discriminator in training.
AVB uses a discriminator $D(\b{x},\b{z})$ with one output neuron having a sigmoid activation function in variational inference. The loss function for the discriminator is:
\begin{align}
\max_D\,\, &\mathbb{E}_{p(\b{x})} \mathbb{E}_{q_{\phi}(\b{z}|\b{x})} \big[\log(D(\b{x},\b{z}))\big] \nonumber\\
&+ \mathbb{E}_{p(\b{x})} \mathbb{E}_{p_{\b{z}}(\b{z})} \big[ \log(1 - D(\b{x},\b{z}))\big],
\end{align}
whose solution is $D^*(\b{x}, \b{z}) = -\log(p(\b{z})) + \log(q_{\phi}(\b{z}|\b{x}))$ {\citep[Proposition 1]{mescheder2017adversarial}}. Therefore, Eq. (\ref{equation_AVB_ELBO_optimization}) becomes:
\begin{align}\label{equation_AVB_ELBO_optimization_2}
\max_{\theta} \max_{\phi}\, \mathbb{E}_{p(\b{x})} \mathbb{E}_{q_{\phi}(\b{z}|\b{x})} \big[&-D^*(\b{x}, \b{z}) + \log(p_{\theta}(\b{x}|\b{z}))\big],
\end{align}
which is optimized by backpropagation, after the reparameterization trick \cite{ghojogh2021factor}.

\subsection{Bayesian GAN (BGAN)}

Bayesian GAN (BGAN) \cite{saatci2017bayesian} models GAN using Bayesian analysis. Let $G$ and $D$ denote the parameters of generator and discriminator, respectively, $\b{x}$ be the real data, $\b{z}$ be the noise sample, and $b$ be the mini-batch size.
\begin{align*}
& p(G|\b{z}, D) \propto \Big(\prod_{i=1}^b D(G(\b{z}_i))\Big)\, p(G), \\
& p(D|\b{z}, \b{x}) \propto \Big(\prod_{i=1}^b D(\b{x}_i)\Big) \Big(\prod_{i=1}^b \big(1-D(G(\b{z}_i))\big)\Big)\, p(D).
\end{align*}
We can marginalize these distributions:
\begin{align}
&p(G|D) = \int p(G,\b{z}|D)\, d\b{z} = \int p(G|\b{z},D)\, p(\b{z}|D)\, d\b{z} \nonumber\\
&~~~~~\overset{(a)}{=} \int p(G|\b{z},D)\, p_z(\b{z})\, d\b{z} \overset{(b)}{\approx} \frac{1}{b} \sum_{i=1}^b p(G|\b{z}_i), \label{equation_BGAN_p_G_D}
\end{align}
where $\b{z}_i \sim p_z(\b{z})$, $(a)$ is because the noise $\b{z}$ is independent of the discriminator $D$, and $(b)$ is because of the Monte Carlo approximation \cite{ghojogh2020sampling}. 
Similarly, we have:
\begin{align}\label{equation_BGAN_p_D_G}
&p(D|G) = \frac{1}{b} \sum_{i=1}^b p(D|\b{z}_i, \b{x}_i).
\end{align}
Sampling from the distributions in Eqs. (\ref{equation_BGAN_p_G_D}) and (\ref{equation_BGAN_p_D_G}) will converge to the joint distribution of generator and discriminator, based on Gibbs sampling \cite{ghojogh2020sampling}.
Stochastic Gradient Hamiltonian Monte Carlo (SGHMC) is a technique for training a neural network using the posteriors. The discriminator and generator of BGAN are trained alternatively using this technique and the posteriors in Eqs. (\ref{equation_BGAN_p_G_D}) and (\ref{equation_BGAN_p_D_G}).
Note that another GAN model with variational inference and Bayesian analysis is the variational Bayesian GAN \cite{chien2019variational}.

% \subsection{Variational Bayesian GAN}

\section{Other Variants of GAN}

\subsection{Feature Matching in GAN}

During training, the layers of discriminator $D$ are trained to have discriminative features between the real and generated data. Therefore, for better training of the generator $G$ and fooling the discriminator by it, we can use the features of an intermediate layer of discriminator {\citep[Section 3.1]{salimans2016improved}}. 
We train the generator to match the expected values of the intermediate features for inputs of real and generated data. Hence, the optimization of generator can be:
\begin{equation}\label{equation_featureMatching_GAN_loss}
\begin{aligned}
\min_G\,\,\,\, & \big\|\mathbb{E}_{\b{x} \sim p_\text{data}(\b{x})} [\b{f}(\b{x})] - \mathbb{E}_{\b{z} \sim p_z(\b{z})}\big[\b{f}\big(G(\b{z})\big)\big] \big\|_2^2,
\end{aligned}
\end{equation}
where $\b{f}(\b{x})$ and $\b{f}(G(\b{z}))$ are the features of an intermediate layer of discriminator for inputs $\b{x}$ (real data) and $G(\b{z})$ (generated data), respectively. 
The discriminator is trained as in the original GAN, i.e., maximization in Eq. (\ref{equation_GAN_loss}). 

\subsection{InfoGAN}

Information maximizing GAN (InfoGAN), proposed in \cite{chen2016infogan}, is an information-theoretic approach to GAN. 
It maximizes the mutual information between latent variables and generated data. 
In InfoGAN, we have two sets of latent variables, i.e., $\b{z}$ and $\b{c}$. The generator gets these two latent variables as input and outputs $G(\b{z}, \b{c})$. 
The optimization problem in InfoGAN is a regularized problem as:
\begin{equation}\label{equation_InfoGAN_loss}
\begin{aligned}
\min_G \max_D\,\,\,\, &V_I(D,G) := V(D,G) - \lambda I(\b{c}; G(\b{z}, \b{c})),
\end{aligned}
\end{equation}
where $V(D,G)$ is defined in Eq. (\ref{equation_GAN_loss}), the $\lambda > 0$ is the regularization parameter and $I(.;.)$ is the mutual information defined as $I(\b{c}; G(\b{z}, \b{c})) := H(\b{c}) - H(\b{c} | G(\b{z}, \b{c}))$ in which $H(.)$ is the entropy. 
Note that the added regularization term depends only on $G$ and not $D$. The generator maximizes the mutual information $I(\b{c}; G(\b{z}, \b{c}))$.

Computing this mutual information is difficult in practice. 
The mutual information can be simplified as the following by introducing an auxiliary distribution $Q(\b{c}| \b{x})$.
\begin{align*}
& I(\b{c}; G(\b{z}, \b{c})) := H(\b{c}) - H(\b{c} | G(\b{z}, \b{c})) \\
&\overset{(a)}{=} H(\b{c}) - \big(\!-\mathbb{E}_{\b{x} \sim G(\b{z}, \b{c})}[\log P(\b{c}| \b{x})]\big) \\
&= H(\b{c}) + \mathbb{E}_{\b{x} \sim G(\b{z}, \b{c})}\big[\mathbb{E}_{\b{c}' \sim P(\b{c}| \b{x})} [\log P(\b{c}'| \b{x})]\big] \\
&\overset{(b)}{=} H(\b{c}) + \mathbb{E}_{\b{x} \sim G(\b{z}, \b{c})}\big[\text{KL}(P(.|\b{x}) \| Q(.|\b{x})) \\
&~~~~~~~~~~~~~~~+ \mathbb{E}_{\b{c}' \sim P(\b{c}| \b{x})} [\log Q(\b{c}'| \b{x})]\big] \\
&\overset{(c)}{\geq} H(\b{c}) + \mathbb{E}_{\b{x} \sim G(\b{z}, \b{c})}\big[\mathbb{E}_{\b{c}' \sim P(\b{c}| \b{x})} [\log Q(\b{c}'| \b{x})]\big] \\
&\overset{(d)}{=} H(\b{c}) + \mathbb{E}_{\b{c} \sim P(\b{c}),\, \b{x} \sim G(\b{z}, \b{c})}[\log Q(\b{c}'| \b{x})] \overset{(e)}{=} L_I(G,Q),
\end{align*}
where $(a)$ is because of definition of entropy, $(b)$ is because of the definition of KL divergence, $(c)$ is because the KL divergence is non-negative, $(d)$ is because $\mathbb{E}_{\b{x} \sim X, \b{y} \sim Y|X}[f(\b{x}, \b{y})] = \mathbb{E}_{\b{x} \sim X, \b{y} \sim Y|X, \b{x}' \sim X|Y}[f(\b{x}', \b{y})]$ (see {\citep[Lemma 5.1]{chen2016infogan}}), and $(e)$ is because we define $L_I(G,Q)$ as that expression. Hence, $L_I(G,Q)$ is a lower-bound for $I(\b{c}; G(\b{z}, \b{c}))$. Using this lower-bound in Eq. (\ref{equation_InfoGAN_loss}) gives:
\begin{equation}\label{equation_InfoGAN_loss_2}
\begin{aligned}
\min_{G,Q} \max_D\,\,\,\, &V_I(D,G) := V(D,G) - \lambda L_I(G,Q),
\end{aligned}
\end{equation}
where:
\begin{align*}
L_I(G,Q) := H(\b{c}) + \mathbb{E}_{\b{c} \sim P(\b{c}),\, \b{x} \sim G(\b{z}, \b{c})}[\log Q(\b{c}'| \b{x})],
\end{align*}
can be calculated by Monte Carlo approximation \cite{ghojogh2020sampling}.

\subsection{Generative Recurrent Adversarial Network (GRAN)}

Generative Recurrent Adversarial Network (GRAN) \cite{im2016generating} has been inspired by the Deep Recurrent Attentive Writer (DRAW) \cite{gregor2015draw}. DRAW uses variational inference for drawing images gradually on canvas by passing time. GRAN does the same but using adversarial learning. Therefore, it is a combination of GAN and recurrent networks. 
In GRAN, the generator $G$ has a recurrent feedback loop whose inputs are a sequence of noise samples $\{\b{z}_t\}_{t=1}^T$. The recurrent loop of generator generates a sequence of drawings on canvas, i.e., $\{\Delta C_1, \Delta C_2, \dots, \Delta C_T\}$. 
Every recurrent loop, at time $t \in \{1, \dots, T\}$, is like an autoencoder with encoder $f(.)$ and decoder $g(.)$. The coding layer between the encoder and decoder gives the concatenation of latent coding $\b{h}_{z,t}$ and canvas coding $\b{h}_{c,t}$. This coding concatenation is fed to the decoder $f$ to result the canvas drawing $\Delta C_t$. 
In every recurrent loop, at time $t \in \{1, \dots, T\}$, we have:
\begin{equation}
\begin{aligned}
&\b{z}_t \sim p_z(\b{z}), \\
&\b{h}_{c,t} := g(\Delta C_{t-1}), \\
&\b{h}_{z,t} := \text{tanh}(\b{W} \b{z}_t + \b{b}), \\
&\Delta C_t := f([\b{h}_{z,t}^\top, \b{h}_{c,t}^\top]^\top),
\end{aligned}
\end{equation}
where $\b{W}$ and $\b{b}$ are the layer weights and bias weights for the latent variable $\b{z}_t$. 
We use DCGAN \cite{radford2016unsupervised} (see Section \ref{section_DCGAN}) for the encoder $f$ and decoder $g$ at every recurrent loop, where the canvas drawings $\{\Delta C_t\}_{t=1}^T$ are generated. 
The total canvas drawing is the summation of drawings at the time slots. We use a logistic function $\sigma(.)$ to scale the drawing to $(0,1)$ for the sake of pixel visualization:
\begin{align*}
C = \sigma\Big(\sum_{t=1}^T \Delta C_t\Big). 
\end{align*}

\subsection{Least Squares GAN (LSGAN)}

The GAN loss function has a problem. In the discriminator, the gradient vanishes for the generated data points which fall on the correct side of decision boundary but are still different from the real data. Least Squares GAN (LSGAN) \cite{mao2017least,mao2019effectiveness} resolves this issue by using least squares cost in the adversarial loss function.
For the discriminator $D$ of LSGAN, we use two scalar labels $a$ and $b$ for generated (fake) and real data points. 
For the generator $G$ of LSGAN, we use the scalar label $c$ which the generator wants the discriminator to believe for in classification. 
The loss functions in LSGAN are:
\begin{equation}\label{equation_LSGAN_loss}
\begin{aligned}
\min_{D}\,\,\,\, V_\text{LSGAN}(D) := &\,\frac{1}{2} \mathbb{E}_{\b{x} \sim p_\text{data}(\b{x})}\big[(D(\b{x}) - b)^2\big] \\
&+ \frac{1}{2} \mathbb{E}_{\b{z} \sim p_z(\b{z})}\big[(D(G(\b{z})) - a)^2\big], \\
\min_{G}\,\,\,\, V_\text{LSGAN}(G) := &\, \frac{1}{2} \mathbb{E}_{\b{z} \sim p_z(\b{z})}\big[(D(G(\b{z})) - c)^2\big].
\end{aligned}
\end{equation}

\begin{lemma}[{\citep[Proposition 1]{mao2019effectiveness}}]
For a fixed generator $G$, the optimal discriminator in LSGAN is:
\begin{align}\label{equation_LSGAN_D_optimum}
D^*(\b{x}) = \frac{b p_\text{data}(\b{x}) + a p_g(\b{x})}{p_\text{data}(\b{x}) + p_g(\b{x})},
\end{align}
where $p_\text{data}(\b{x})$ is the probability distribution of real dataset evaluated at point $\b{x}$ and $p_g(\b{x})$ is the probability distribution of output of generator evaluated at point $\b{x}$.
\end{lemma}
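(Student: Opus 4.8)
The plan is to follow the same strategy as the proof of Theorem \ref{theorem_GAN_optimal_discriminator}, since the LSGAN discriminator objective is again a pointwise-separable functional of $D(\b{x})$. First I would expand the definition of expectation and rewrite the discriminator loss $V_\text{LSGAN}(D)$ as integrals over the data and noise domains. Using the change of variables $G(\b{z}) = \b{x}$ exactly as in Eq. (\ref{equation_pg_pz_relation})---which converts $p_z(\b{z})\,d\b{z}$ into $p_g(\b{x})\,d\b{x}$---the noise expectation becomes an integral against $p_g(\b{x})$, so that
\begin{align*}
V_\text{LSGAN}(D) = \frac{1}{2}\int_{\b{x}} \Big[ p_\text{data}(\b{x})\big(D(\b{x})-b\big)^2 + p_g(\b{x})\big(D(\b{x})-a\big)^2 \Big]\, d\b{x}.
\end{align*}

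Next, since the integrand depends on $\b{x}$ only through $D(\b{x})$ and the two densities evaluated at $\b{x}$, I would minimize pointwise: for each fixed $\b{x}$, differentiate the bracketed expression with respect to $D(\b{x})$ (this removes the integral, as in step $(a)$ of Theorem \ref{theorem_GAN_optimal_discriminator}) and set the result to zero. This yields the linear stationarity condition
\begin{align*}
p_\text{data}(\b{x})\big(D(\b{x})-b\big) + p_g(\b{x})\big(D(\b{x})-a\big) = 0,
\end{align*}
whose unique solution is $D^*(\b{x}) = \big(b\,p_\text{data}(\b{x}) + a\,p_g(\b{x})\big)/\big(p_\text{data}(\b{x}) + p_g(\b{x})\big)$, which is exactly the claimed Eq. (\ref{equation_LSGAN_D_optimum}).

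The final step is to confirm this stationary point is a minimizer rather than a maximizer or saddle, since LSGAN \emph{minimizes} $V_\text{LSGAN}(D)$, in contrast to the maximization in the original GAN. The integrand is a nonnegative-weighted sum of two squared terms in $D(\b{x})$, hence convex in $D(\b{x})$; its second derivative equals $2\big(p_\text{data}(\b{x}) + p_g(\b{x})\big) > 0$ wherever $p_\text{data}(\b{x}) + p_g(\b{x}) > 0$, so the stationary point is the global minimizer there.

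I do not anticipate a genuine obstacle: the result is a routine weighted-least-squares minimization whose structure is identical to that of Theorem \ref{theorem_GAN_optimal_discriminator}, with the logarithmic cross-entropy replaced by a quadratic penalty. The only points deserving care are the implicit invertibility/Jacobian assumption on $G$ needed to justify the change of variables (the same tacit assumption made in the earlier proof) and the switch from maximization to minimization, which is precisely why I would include the convexity check rather than treat the vanishing derivative as automatically optimal.
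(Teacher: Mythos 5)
Your proof takes essentially the same route as the paper's: expand the expectations into integrals, use Eq. (\ref{equation_pg_pz_relation}) to rewrite the noise term as an integral against $p_g(\b{x})$, differentiate pointwise in $D(\b{x})$ to drop the integral, and solve the resulting linear stationarity condition. Your added second-order check that the stationary point is a genuine minimizer (second derivative $2(p_\text{data}(\b{x}) + p_g(\b{x})) > 0$) is a sound refinement that the paper silently omits, but it does not change the argument's structure.
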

\begin{proof}
\begin{align*}
&V_\text{LSGAN}(D) \overset{(\ref{equation_LSGAN_loss})}{=} \frac{1}{2} \mathbb{E}_{\b{x} \sim p_\text{data}(\b{x})}\big[(D(\b{x}) - b)^2\big] \\
&~~~~~~~~~~~~~~~~~~~~~~~~ + \frac{1}{2} \mathbb{E}_{\b{z} \sim p_z(\b{z})}\big[(D(G(\b{z})) - a)^2\big] \\
&= \int_{\b{x}} \frac{1}{2} p_\text{data}(\b{x}) (D(\b{x}) - b)^2 d\b{x} \\
&~~~~~~~~~~~~~~~~~~~~~~~~+ \int_{\b{x}} \frac{1}{2} p_z(\b{z}) (D(G(\b{z})) - a)^2 d\b{x} \\
&\overset{(\ref{equation_pg_pz_relation})}{=} \int_{\b{x}} \frac{1}{2} \Big( p_\text{data}(\b{x}) (D(\b{x}) - b)^2 + p_g(\b{x}) (D(\b{x}) - a)^2 \Big) d\b{x}
\end{align*}
For optimization in Eq. (\ref{equation_LSGAN_loss}), taking derivative w.r.t. $D(\b{x})$ gives:
\begin{align*}
&\frac{\partial V_\text{LSGAN}(D)}{\partial D(\b{x})} \\
&\overset{(a)}{=} \frac{\partial }{\partial D(\b{x})} \Big( \frac{1}{2} \big( p_\text{data}(\b{x}) (D(\b{x}) - b)^2 \\
&~~~~~~~~~~~~~~~~~~~~~~~~ + p_g(\b{x}) (D(\b{x}) - a)^2 \big) \Big) \\
&= p_\text{data}(\b{x}) (D(\b{x}) - b) + p_g(\b{x}) (D(\b{x}) - a) \overset{\text{set}}{=} 0 \\
&\implies D(\b{x}) = \frac{b p_\text{data}(\b{x}) + a p_g(\b{x})}{p_\text{data}(\b{x}) + p_g(\b{x})},
\end{align*}
where $(a)$ is because taking derivative w.r.t. $D(\b{x})$ considers a specific $\b{x}$ and hence it removes the integral (summation). Q.E.D.
\end{proof}

\begin{theorem}[{\citep[Theorem 1]{mao2019effectiveness}}]
Optimization of LSGAN is equivalent to minimizing the Pearson $\chi^2$ divergence between $p_\text{data}(\b{x}) + p_g(\b{x})$ and $2p_g(\b{x})$, if we have:
\begin{equation}\label{equation_LSGAN_a_b_c_conditions}
\begin{aligned}
& b-c=1, \quad b-a=2.
\end{aligned}
\end{equation}
\end{theorem}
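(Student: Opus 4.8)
The plan is to reduce the generator's objective, evaluated at the optimal discriminator, to an integral that coincides with the Pearson $\chi^2$ divergence. The starting observation is that I may freely add to $V_\text{LSGAN}(G)$ any term that does not depend on $G$, since this leaves $\arg\min_G$ unchanged. Concretely, I would add the real-data term $\frac{1}{2}\mathbb{E}_{\b{x}\sim p_\text{data}(\b{x})}[(D(\b{x})-c)^2]$, which is constant in $G$ once $D$ is frozen at its optimum. This symmetrizes the objective into $2V_\text{LSGAN}(G) = \mathbb{E}_{\b{x}\sim p_\text{data}(\b{x})}[(D^*(\b{x})-c)^2] + \mathbb{E}_{\b{x}\sim p_g(\b{x})}[(D^*(\b{x})-c)^2]$, where $D^*$ is the optimal discriminator supplied by the preceding lemma, Eq. (\ref{equation_LSGAN_D_optimum}).

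Next I would convert both expectations to integrals over $\b{x}$ (using the input/output density relation of Eq. (\ref{equation_pg_pz_relation}), exactly as in the lemma's proof), so that the two integrands share the common factor $(D^*(\b{x})-c)^2$ and merge under a single integral with weight $p_\text{data}(\b{x}) + p_g(\b{x})$. Substituting $D^*(\b{x}) = (b\,p_\text{data}(\b{x}) + a\,p_g(\b{x}))/(p_\text{data}(\b{x})+p_g(\b{x}))$ and clearing the denominator, one power of $p_\text{data}+p_g$ cancels, leaving
\[
2V_\text{LSGAN}(G) = \int_{\b{x}} \frac{\big((b-c)\,p_\text{data}(\b{x}) + (a-c)\,p_g(\b{x})\big)^2}{p_\text{data}(\b{x})+p_g(\b{x})}\,d\b{x}.
\]

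The decisive algebraic step is to regroup the numerator so the constraints become useful: I would write $(b-c)p_\text{data} + (a-c)p_g = (b-c)(p_\text{data}+p_g) - (b-a)\,p_g$. Imposing the stated conditions $b-c=1$ and $b-a=2$ collapses this to $(p_\text{data}+p_g) - 2p_g$, whence $2V_\text{LSGAN}(G) = \int \big((p_\text{data}+p_g) - 2p_g\big)^2/(p_\text{data}+p_g)\,d\b{x}$, which is precisely the Pearson $\chi^2$ divergence between $p_\text{data}+p_g$ and $2p_g$. Minimizing $V_\text{LSGAN}(G)$ over $G$ is therefore equivalent to minimizing this divergence, as claimed.

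I expect the only real obstacle to be bookkeeping rather than anything conceptual. Two points need care: first, the added constant term is constant in $G$ only because $D$ is held fixed at $D^*$, so the equivalence is an assertion about the inner problem at the optimal discriminator; second, one must choose the grouping $(b-c)(p_\text{data}+p_g)-(b-a)p_g$ rather than the naive split, since it is exactly this regrouping that makes the two linear constraints $b-c=1$ and $b-a=2$ land on the coefficients of $p_\text{data}+p_g$ and $p_g$, respectively. Everything else is the same integral manipulation already used to derive Eq. (\ref{equation_LSGAN_D_optimum}).
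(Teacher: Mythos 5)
Your proposal is correct and follows essentially the same route as the paper's own proof: adding the $G$-independent real-data term to symmetrize the objective, converting to integrals, substituting the optimal discriminator from Eq. (\ref{equation_LSGAN_D_optimum}), and regrouping the numerator as $(b-c)(p_\text{data}+p_g)-(b-a)p_g$ before applying the constraints. The only cosmetic difference is the sign inside the squared numerator, which is immaterial.
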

\begin{proof}
\begin{align*}
&2V_\text{LSGAN}(G) \overset{(\ref{equation_LSGAN_loss})}{=} \mathbb{E}_{\b{x} \sim p_z(\b{z})}\big[(D^*(G(\b{z})) - c)^2\big] \\
&\overset{(a)}{=} \mathbb{E}_{\b{x} \sim p_\text{data}(\b{x})}\big[(D^*(\b{x}) - c)^2\big] \\
&~~~~~~~~~~~~~~~~~~~~~~ + \mathbb{E}_{\b{z} \sim p_z(\b{z})}\big[(D^*(G(\b{z})) - c)^2\big] \\
&\overset{(\ref{equation_pg_pz_relation})}{=} \mathbb{E}_{\b{x} \sim p_\text{data}(\b{x})}\big[(D^*(\b{x}) - c)^2\big] \\
&~~~~~~~~~~~~~~~~~~~~~~ + \mathbb{E}_{\b{x} \sim p_g(\b{x})}\big[(D^*(\b{x}) - c)^2\big] \\
&\overset{(\ref{equation_LSGAN_D_optimum})}{=} \mathbb{E}_{\b{x} \sim p_\text{data}(\b{x})}\Big[(\frac{b p_\text{data}(\b{x}) + a p_g(\b{x})}{p_\text{data}(\b{x}) + p_g(\b{x})} - c)^2\Big] \\
&~~~~~~~~~~~~~~ + \mathbb{E}_{\b{x} \sim p_g(\b{x})}\Big[(\frac{b p_\text{data}(\b{x}) + a p_g(\b{x})}{p_\text{data}(\b{x}) + p_g(\b{x})} - c)^2\Big] 
\end{align*}
\begin{align*}
&\overset{(b)}{=} \int_{\b{x}} p_\text{data}(\b{x}) \Big(\frac{(b-c) p_\text{data}(\b{x}) + (a-c) p_g(\b{x})}{p_\text{data}(\b{x}) + p_g(\b{x})}\Big)^2 d\b{x} \\
&~~~~~~~~~~ + \int_{\b{x}} p_g(\b{x}) \Big(\frac{(b-c) p_\text{data}(\b{x}) + (a-c) p_g(\b{x})}{p_\text{data}(\b{x}) + p_g(\b{x})}\Big)^2 d\b{x} \\
&\overset{(c)}{=} \int_{\b{x}} \frac{\big((b-c) p_\text{data}(\b{x}) + (a-c) p_g(\b{x})\big)^2}{p_\text{data}(\b{x}) + p_g(\b{x})} d\b{x} \\
&= \int_{\b{x}} \frac{\big((b-c) (p_\text{data}(\b{x}) + p_g(\b{x})) - (b-a) p_g(\b{x})\big)^2}{p_\text{data}(\b{x}) + p_g(\b{x})} d\b{x} \\
&\overset{(\ref{equation_LSGAN_a_b_c_conditions})}{=} \int_{\b{x}} \frac{\big(2 p_g(\b{x}) - (p_\text{data}(\b{x}) + p_g(\b{x}))\big)^2}{p_\text{data}(\b{x}) + p_g(\b{x})} d\b{x} \\
&\overset{(d)}{=} \chi^2(p_\text{data}(\b{x}) + p_g(\b{x})\, \|\, 2p_g(\b{x})),
\end{align*}
where $(a)$ is because $\mathbb{E}_{\b{x} \sim p_\text{data}(\b{x})}\big[(D^*(\b{x}) - c)^2\big]$ is constant w.r.t. $G$, $(b)$ is because of the definition of expectation, $(c)$ is because of simplification of terms, and $(d)$ is because of definition of Pearson $\chi^2$ divergence. Q.E.D.
\end{proof}
As we saw, the labels $a$, $b$, and $c$ in LSGAN should satisfy Eq. (\ref{equation_LSGAN_a_b_c_conditions}). An options for satisfying these conditions is:
\begin{align}
& a=-1, \quad b=1, \quad c=0, 
\end{align}
which means the real and fake labels for discriminator are $+1$ and $-1$, respectively, while the generator fools the discriminator by label $0$. In other words, the generator does not take it very hard on the discriminator and sets the fake label to $0$ (some moderate value) rather than $1$. 
Another possible option for the labels is:
\begin{align}
& a=0, \quad b=c=1, 
\end{align}
which does not satisfy Eq. (\ref{equation_LSGAN_a_b_c_conditions}) but fools the discriminator completely (with more power) by the generator. 
Experiments have shown both of these options perform equally well in practice \cite{mao2017least}. 

\subsection{Energy-based GAN (EBGAN)}

In energy-based learning \cite{lecun2006tutorial}, a function is learned which maps data points to some energy values where the incorrectly labeled data points are assigned higher energy values. In unsupervised energy-based learning, higher energy is assigned to data points away from the data manifold or data cloud. 
Energy-based GAN (EBGAN) \cite{zhao2017energy} uses energy-based learning in adversarial learning. 
The loss functions in EBGAN are:
\begin{equation}\label{equation_EBGAN_loss}
\begin{aligned}
\min_{D}\,\,\,\, V_\text{EBGAN}(D) := &\,D(\b{x}) + [m - D(G(\b{z}))]_+, \\
\min_{G}\,\,\,\, V_\text{EBGAN}(G) := &\, D(G(\b{z})),
\end{aligned}
\end{equation}
where $[.]_+ := \max(.,0)$ is the standard Hinge loss and $m > 0$ is the margin. 
The discriminator minimizes the error of $D(\b{x})$ while maximizing $D(G(\b{z}))$ not to be fooled by the generator. 
The generator minimizes $D(G(\b{z}))$ to fool the discriminator. 

\begin{theorem}[{\citep[Theorem 1]{zhao2017energy}}]
Let:
\begin{align}
Q(D, G) := \int_{\b{x}, \b{z}} V_\text{EBGAN}(D)\, p_\text{data}(\b{x})\, p_z(\b{z})\, d\b{x}\, d\b{z}.
\end{align}
Optimization of EBGAN results in $p_g(\b{x}) = p_\text{data}(\b{x})$ and $Q(D^*, G^*) = m$ after convergence (i.e., Nash equilibrium). 
\end{theorem}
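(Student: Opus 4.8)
The plan is to mirror the two-stage structure used for the vanilla GAN (first fix $G$ and optimize $D$ pointwise, then analyze the induced generator problem), but taking care that EBGAN is \emph{not} a pure zero-sum game: the discriminator minimizes $Q(D,G)$, while by Eq.~(\ref{equation_EBGAN_loss}) the generator separately minimizes $\mathbb{E}_{\b{z}\sim p_z}[D(G(\b{z}))]$. First I would rewrite both objectives as single integrals over $\b{x}$. Using the change of variables in Eq.~(\ref{equation_pg_pz_relation}) to convert the noise integral into a $p_g$ integral, one obtains
\begin{align*}
Q(D,G) &= \int_{\b{x}} \Big( p_\text{data}(\b{x})\, D(\b{x}) + p_g(\b{x})\, [m - D(\b{x})]_+ \Big)\, d\b{x},
\end{align*}
and the generator cost becomes $\int_{\b{x}} p_g(\b{x})\, D(\b{x})\, d\b{x}$.

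Next I would compute the optimal discriminator for a fixed $G$ by minimizing the integrand pointwise over $t := D(\b{x}) \geq 0$. The map $t \mapsto p_\text{data}(\b{x})\,t + p_g(\b{x})\,[m-t]_+$ is piecewise linear with a kink at $t=m$, so I would split into $t\in[0,m]$ and $t\geq m$ and compare slopes. This yields the three-case minimizer: $D^*(\b{x})=0$ where $p_\text{data}(\b{x})>p_g(\b{x})$, $D^*(\b{x})=m$ where $p_\text{data}(\b{x})<p_g(\b{x})$, and $D^*(\b{x})$ free in $[0,m]$ where the densities agree, with pointwise minimum value $m\,\min(p_\text{data}(\b{x}),p_g(\b{x}))$.

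The heart of the argument is then the generator's equilibrium condition applied to $\int p_g D^*$. Assume for contradiction that $p_{g^*}\neq p_\text{data}$; then the sets $A=\{p_\text{data}>p_{g^*}\}$ and $B=\{p_\text{data}<p_{g^*}\}$ both have positive measure. On $A$ we have $D^*=0$ and on $B$ we have $D^*=m$, so $\int p_{g^*}\,D^* \geq m\int_B p_{g^*} > 0$. But a generator (assumed to have enough capacity) that relocates all its mass onto $A$ achieves cost $0$, strictly undercutting $G^*$ and violating the generator's equilibrium condition. Hence $p_{g^*}=p_\text{data}$ almost everywhere. The main obstacle I anticipate is precisely this step: because the two players optimize different functionals, I cannot invoke the JSD-style argument of the earlier theorems and must instead reason directly from the pointwise form of $D^*$ and the capacity assumption on $G$, while handling the non-differentiable hinge and the indeterminacy of $D^*$ on the equal set carefully.

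Finally, substituting $p_{g^*}=p_\text{data}$ collapses $Q$ to $\int p_\text{data}(\b{x})\big(D^*(\b{x})+[m-D^*(\b{x})]_+\big)\,d\b{x}$; since $D^*\in[0,m]$ on the support, the identity $D^*+[m-D^*]_+=m$ gives $Q(D^*,G^*)=m\int p_\text{data}(\b{x})\,d\b{x}=m$, completing the proof. As a consistency check on the value, one can also verify $Q(G^*,D)\le m$ directly by testing the constant discriminator $D\equiv m$.
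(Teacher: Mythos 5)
Your proposal is correct, and it reaches both conclusions by a genuinely different route than the paper's proof. The shared part is the first step: like the paper, you rewrite $Q(D,G^*)$ as $\int_{\b{x}}\big(p_\text{data}(\b{x})D(\b{x})+p_{g^*}(\b{x})[m-D(\b{x})]_+\big)\,d\b{x}$ and minimize the hinge-linear integrand pointwise in $D(\b{x})$. From there the paper (following the original EBGAN argument) runs a sandwich: $Q(D^*,G^*)\le m$ comes from the pointwise-minimum formula, $Q(D^*,G^*)\ge m$ comes from the generator's equilibrium inequality $\int p_{g^*}D^*\le\int p_\text{data}D^*$ together with $[m-D^*]_+\ge m-D^*$, and the equality case is then unwound to give $p_{g^*}=p_\text{data}$; the value $m$ is obtained first, the distribution match second. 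You reverse the order: the three-case form of the pointwise minimizer forces $D^*=0$ on $A=\{p_\text{data}>p_{g^*}\}$ and $D^*\ge m$ on $B=\{p_\text{data}<p_{g^*}\}$, so if $p_{g^*}\neq p_\text{data}$ (both sets then carry positive mass) the generator's equilibrium cost is at least $m\int_B p_{g^*}>0$ while relocating all generated mass onto $A$ costs $0$, contradicting equilibrium; $Q(D^*,G^*)=m$ then follows by substitution using $D^*+[m-D^*]_+=m$ for $D^*\in[0,m]$. Your route buys two things: it treats the non-zero-sum structure of EBGAN (discriminator minimizes $Q$, generator separately minimizes $\int p_g D$) explicitly, and it sidesteps the paper's shaky justification of $\int p_{g^*}D^*\le\int p_\text{data}D^*$ via the pointwise claim $p_{g^*}(\b{x})\le p_\text{data}(\b{x})$, which cannot hold for two distinct densities that both integrate to one --- that inequality is really just the generator's best response tested against $p_\text{data}$, which is exactly the mechanism your relocation argument makes transparent. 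What the paper's sandwich buys in exchange is economy: it needs only the pointwise minimum value $m\min(p_\text{data},p_{g^*})$, never the full characterization of where $D^*$ sits. Two minor notes on your write-up: on $B\cap\{p_\text{data}=0\}$ the pointwise minimizer is any $t\ge m$ rather than exactly $m$ (harmless, since your bound only needs $D^*\ge m$ there), and your relocation step relies on the nonparametric capacity assumption on $G$, which the paper's implicit comparison against $p_\text{data}$ requires equally.
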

\begin{proof}
\begin{align}
&Q(D, G^*) \nonumber\\
&\overset{(\ref{equation_EBGAN_loss})}{=}\!\!\! \int_{\b{x}, \b{z}}\!\! \big(D(\b{x}) + [m - D(G^*(\b{z}))]_+\big)\, p_\text{data}(\b{x})\, p_z(\b{z})\, d\b{x}\, d\b{z} \nonumber\\
&= \int_{\b{x}} D(\b{x})\, p_\text{data}(\b{x})\, d\b{x} \underbrace{\int_{\b{z}} p_z(\b{z})\, d\b{z}}_{=1} \nonumber\\
&+ \underbrace{\int_{\b{x}} p_\text{data}(\b{x})\, d\b{x}}_{=1} \int_{\b{z}} [m - D(G^*(\b{z}))]_+\, p_z(\b{z})\, d\b{z} \nonumber\\
&\overset{(\ref{equation_pg_pz_relation})}{=} \int_{\b{x}} \big( p_\text{data}(\b{x})\, D(\b{x})\, + p_{g^*}(\b{x})\, [m - D(\b{x})]_+ \big)\, d\b{x}. \label{equation_EBGAN_Q}
\end{align}
The function inside the integral is $a t + b[m - t]_+$ whose minimum occurs if $a < b$. Hence, the minimum of $Q(D,G^*)$ is:
\begin{align}
&Q(D^*, G^*) = m \int_{\b{x}} \mathbb{I}(p_\text{data}(\b{x}) < p_{g^*}(\b{x}))\, p_\text{data}(\b{x}) d\b{x} \nonumber\\
&+ m \int_{\b{x}} \mathbb{I}(p_\text{data}(\b{x}) \geq p_g(\b{x}))\, p_{g^*}(\b{x}) d\b{x} \nonumber\\
&= m \int_{\b{x}} \Big( \mathbb{I}(p_\text{data}(\b{x}) < p_{g^*}(\b{x}))\, p_\text{data}(\b{x}) \nonumber\\
&+ \big( 1 - \mathbb{I}(p_\text{data}(\b{x}) < p_{g^*}(\b{x}))\big)\, p_{g^*}(\b{x}) \Big) d\b{x} \nonumber\\
&= m \underbrace{\int_{\b{x}} p_{g^*}(\b{x}) d\b{x}}_{=1} \nonumber\\
&+ m \int_{\b{x}} \mathbb{I}(p_\text{data}(\b{x}) < p_{g^*}(\b{x}))\, \big(p_\text{data}(\b{x}) - p_{g^*}(\b{x})\big) d\b{x}. \label{equation_EBGAN_Q_2}
\end{align}
As the probability of generated data $p_{g^*}(\b{x})$ is upper-bounded by the probability of data $p_\text{data}(\b{x})$, the second term is non-positive. Hence, $Q(D,G) \leq m$. On the other hand, as $p_{g^*}(\b{x}) \leq p_\text{data}(\b{x})$, we have:
\begin{align*}
\int_{\b{x}} p_{g^*}(\b{x}) D^*(\b{x}) d\b{x} \leq \int_{\b{x}} p_\text{data}(\b{x}) D^*(\b{x}) d\b{x}.
\end{align*}
Using this in Eq. (\ref{equation_EBGAN_Q}) gives:
\begin{align*}
&Q(D^*,G^*) \\
&\geq \int_{\b{x}} \big( p_{g^*}(\b{x}) D^*(\b{x}) + p_{g^*}(\b{x})\, [m - D^*(\b{x})]_+ \big)\, d\b{x} \\
&\overset{(a)}{=} \int_{\b{x}} p_{g^*}(\b{x}) D^*(\b{x}) d\b{x} + \int_{\b{x}} p_{g^*}(\b{x})\, (m - D^*(\b{x}))\, d\b{x} \\
&= m \underbrace{\int_{\b{x}} p_{g^*}(\b{x})\, d\b{x}}_{=1} = m,
\end{align*}
where $(a)$ is because $D^*(\b{x}) \leq m$ almost everywhere at the Nash equilibrium (since discriminator is trained at the convergence to not violate the margin). 
We showed that $m \leq Q(D^*,G^*) \leq m$ so $Q(D^*,G^*) = m$. From $Q(D^*,G^*) = m$ and Eq. (\ref{equation_EBGAN_Q_2}), we have $\int_{\b{x}} \mathbb{I}(p_\text{data}(\b{x}) < p_{g^*}(\b{x})) d\b{x} = 0$. As we have $p_{g^*}(\b{x}) \leq p_\text{data}(\b{x})$, this only holds when $p_\text{data}(\b{x}) = p_{g^*}(\b{x})$. Q.E.D.
\end{proof}

\subsection{Semi-supervised GAN}

In the following, we introduce the semi-supervised methods in the GAN literature. 

\subsubsection{Categorical GAN (CatGAN)}

\textbf{-- Unsupervised CatGAN:}
In Categorical GAN (CatGAN) \cite{springenberg2016unsupervised}, the discriminator classifies $c$ classes (i.e., categories) rather than a binary classification which we had in GAN's discriminator. Hence, the last layer of discriminator has $c$ neurons with softmax activation functions. Let $D_k(\b{x})$ denote the $k$-th logit, i.e., softmax output. The conditional probabilities, for the categories, are modeled as follows based on the logits of discriminator:
\begin{align}
p(y = k\, |\, \b{x}) = \frac{\exp(D_k(\b{x}))}{\sum_{k=1}^c \exp(D_k(\b{x}))},\,\, \forall k = \{1, \dots, c\}.
\end{align}
Note that the dataset is unlabeled (unsupervised) and the categories are just made by our model in the logits of discriminator. 
The discriminator wants to be certain about classification of real data into the $c$ categories; hence, it should minimize the entropy $H$ of conditional probabilities of real data which is:
\begin{align}
&\mathbb{E}_{\b{x} \sim p_\text{data}(\b{x})}[H(p(y = k\, |\, \b{x}))] \overset{(a)}{\approx} \frac{1}{n} \sum_{i=1}^n H(p(y = k\, |\, \b{x}_i)) \nonumber\\
&\overset{(b)}{=} \frac{1}{n} \sum_{i=1}^n \Big(\!\!-\sum_{k=1}^c p(y=k | \b{x}_i) \log \big(p(y=k | \b{x}_i)\big) \Big),
\end{align}
where $n$ is number of real data points, $(a)$ is because of the Monte Carlo approximation \cite{ghojogh2020sampling}, and $(b)$ is because of the definition of entropy. 
We draw $n$ noise samples, $\b{z} \sim p_z(\b{z})$, and feed to generator to generate data points $G(\b{z})$.
The discriminator wants to be uncertain about classification of generated (fake) data into the $c$ categories; hence, it should maximize its corresponding entropy:
\begin{align}
&\mathbb{E}_{\b{z} \sim p_z(\b{z})}[H(p(y = k\, |\, G(\b{z})))] \nonumber\\
&~~~~~~~~~~~~~~~~~~~~~ \approx \frac{1}{n} \sum_{i=1}^n H(p(y = k\, |\, G(\b{z}_i))). 
\end{align}
The generator, on the other hand, wants to minimize the above entropy to fool the discriminator. 
We also assume uniform prior $p(y)$ for categories so we want the discriminator and generator use all categories equally. For that, they should maximize the entropy of marginal category distributions:
\begin{align}
& H_\text{data}(p(y)) = H\Big(\frac{1}{n} \sum_{i=1}^n p(y\,|\,\b{x}_i)\Big), \\
& H_g(p(y)) = H\Big(\frac{1}{n} \sum_{i=1}^n p\big(y\,|\,G(\b{z}_i)\big)\Big).
\end{align}
Overall, according to above explanations, the loss functions in CatGAN are:
\begin{equation}
\begin{aligned}
& \max_D\,\,\, V(D) := H_\text{data}(p(y)) \\
&~~~~~~~~~~~~~~~~~ - \mathbb{E}_{\b{x} \sim p_\text{data}(\b{x})}[H(p(y = k\, |\, \b{x}))] \\
&~~~~~~~~~~~~~~~~~ + \mathbb{E}_{\b{z} \sim p_z(\b{z})}[H(p(y = k\, |\, G(\b{z})))], 
\end{aligned}
\end{equation}
\begin{equation}
\begin{aligned}
& \min_G\,\,\, V(D) := - H_g(p(y)) \\
&~~~~~~~~~~~~~~~~~ + \mathbb{E}_{\b{z} \sim p_z(\b{z})}[H(p(y = k\, |\, G(\b{z})))].
\end{aligned}
\end{equation}

\textbf{-- Semi-supervised CatGAN:}
The above loss function for CatGAN is used for an unsupervised case. 
We can extend CatGAN to semi-supervised cases \cite{springenberg2016unsupervised}. Suppose we have $n_\ell$ labeled data points in addition to the $n$ unlabeled data points. 
We set $c$ (i.e., the number of categories) equal to the number of classes of the labeled data.
We denote the labeled dataset by $\mathcal{X}_L := \{(\b{x}_i^\ell, \b{y}_i^\ell)\}_{i=1}^{n_\ell}$ where $\b{y}_i^\ell \in \mathbb{R}^c$ is the one-hot encoded label for the $i$-th labeled data point. 
The discriminator should maximize the cross-entropy of the labeled data to be able to discriminate the actual classes in addition to discrimination of the categories. 
This cross-entropy is:
\begin{align}
\text{CE}(\b{y}, p(y|\b{x})) := - \sum_{k=1}^c y_i \log(p(y = y_i | \b{x})),
\end{align}
where $p(y = y_i | \b{x})$ is the logit of discriminator for the labeled data input. We regularize this cross-entropy into the loss of discriminator:
\begin{equation}
\begin{aligned}
& \max_D\,\,\, V(D) := H_\text{data}(p(y)) \\
&- \mathbb{E}_{\b{x} \sim p_\text{data}(\b{x})}[H(p(y = k\, |\, \b{x}))] \\
&+ \mathbb{E}_{\b{z} \sim p_z(\b{z})}[H(p(y = k\, |\, G(\b{z})))] + \lambda \text{CE}(\b{y}, p(y|\b{x})),
\end{aligned}
\end{equation}
where $\lambda>0$ is the regularization parameter. 

\subsubsection{Generated Data as a New Class}

We can consider the generated data to be data with an additional label $(c+1)$. This idea has appeared in two independent papers which are {\citep[Section 5]{salimans2016improved}} and \cite{odena2016semi}. 
Here, we explain {\citep[Section 5]{salimans2016improved}}.
The discriminator $D$ classifies which class the data point $\b{x}$ has. This is in contrast to the discriminator in the original GAN which has a neuron with sigmoid activation function as its last layer. Here, the last layer of discriminator has $(c+1)$ neurons with softmax activation function where the $j$-th neuron outputs the probability for $\b{x}$ belonging to the $j$-th class. The optimization of discriminator is minimization of summation of two cross-entropy costs:
\begin{align}
\min_D\, \big(V_{\text{supervised}}(D) + V_{\text{unsupervised}}(D) \big), 
\end{align}
where:
\begin{equation}
\begin{aligned}
& V_{\text{supervised}}(D) := \\
&~~~~~~~~~~~~~~ -\mathbb{E}_{\b{x}, y \sim p_\text{data}(\b{x}, y)}\big[\log (p_d(y|\b{x}))\big], \forall y < c+1, \\
& V_{\text{unsupervised}}(D) := -\mathbb{E}_{\b{x} \sim p_\text{data}(\b{x})}\big[\log (1 - p_d(y|\b{x}))\big] \\
&~~~~~~~~~~~~~~ + \mathbb{E}_{\b{x} \sim p_g(\b{x})}\big[\log (p_d(y|\b{x}))\big], \text{ for } y = c+1,
\end{aligned}
\end{equation}
where $p_d(y|\b{x})$ is the output of softmax at the last layer of discriminator. 
With with cost, discriminator learns to classify the generated (fake) data points as a new class so the generator should try to fool it to not correctly classify it as the new class. The cost of generator is the same as in Eq. (\ref{equation_GAN_loss}).

We can subtract a general function from every class label. Hence, we can subtract the output, corresponding to the labels of generated data, from all labels to make the label of fake data zero, $\ell_{c+1}=0$. Hence, the softmax output of generated data becomes $\exp(\ell_{c+1} = 0) = 1$. Therefore, according to Eq. (\ref{equation_GAN_D_optimum}) and the fact that probabilities are obtained by softmax outputs (in the form of logits), we have:
\begin{align}
D(\b{x}) &\overset{(\ref{equation_GAN_D_optimum})}{=} \frac{\sum_{j=1}^c \exp(\ell_j(\b{x}))}{\big(\sum_{j=1}^{c} \exp(\ell_j(\b{x}))\big) + \exp(\ell_{c+1}(\b{x}))} \nonumber \\
&= \frac{\sum_{j=1}^c \exp(\ell_j(\b{x}))}{\sum_{j=1}^{c} \exp(\ell_j(\b{x})) + 1}. \label{equation_semi_supervised_GAN_newClass_D}
\end{align}

\subsection{MMD GAN}

MMD GAN \cite{li2017mmd} combines the ideas of moment matching networks \cite{li2015generative} and GAN \cite{goodfellow2014generative} by using adversarial learning in Maximum Mean Discrepancy (MMD). 
MMD \cite{gretton2006kernel} is a measure of divergence of two distributions and it uses distance in the Reproducing Kernel Hilbert Space (RKHS) to measure the difference of moments of two distributions \cite{ghojogh2021reproducing}. The MMD between two distributions $p(\b{x})$ and $q(\b{x})$ is:
\begin{align*}
&M_k(p,q) := \mathbb{E}_{\b{x}_i, \b{x}_j \sim p(\b{x})}[k(\b{x}_i, \b{x}_j)] \nonumber\\
&+ \mathbb{E}_{\b{x}_i, \b{x}_j \sim q(\b{x})}[k(\b{x}_i, \b{x}_j)] - 2\mathbb{E}_{\b{x}_i \sim p(\b{x}), \b{x}_j \sim q(\b{x})}[k(\b{x}_i, \b{x}_j)],
\end{align*}
where $k(.,.)$ is a kernel function such as the Gaussian kernel. 
If $p(\b{x}) = p_\text{data}(\b{x})$ and $q(\b{x}) = p_q(\b{x})$ are the distributions of real and generated data, respectively, we want to minimize this MMD so that the generated data distribution becomes similar to the real data distribution. 

\begin{figure*}[!t]
\centering
\includegraphics[width=5.3in]{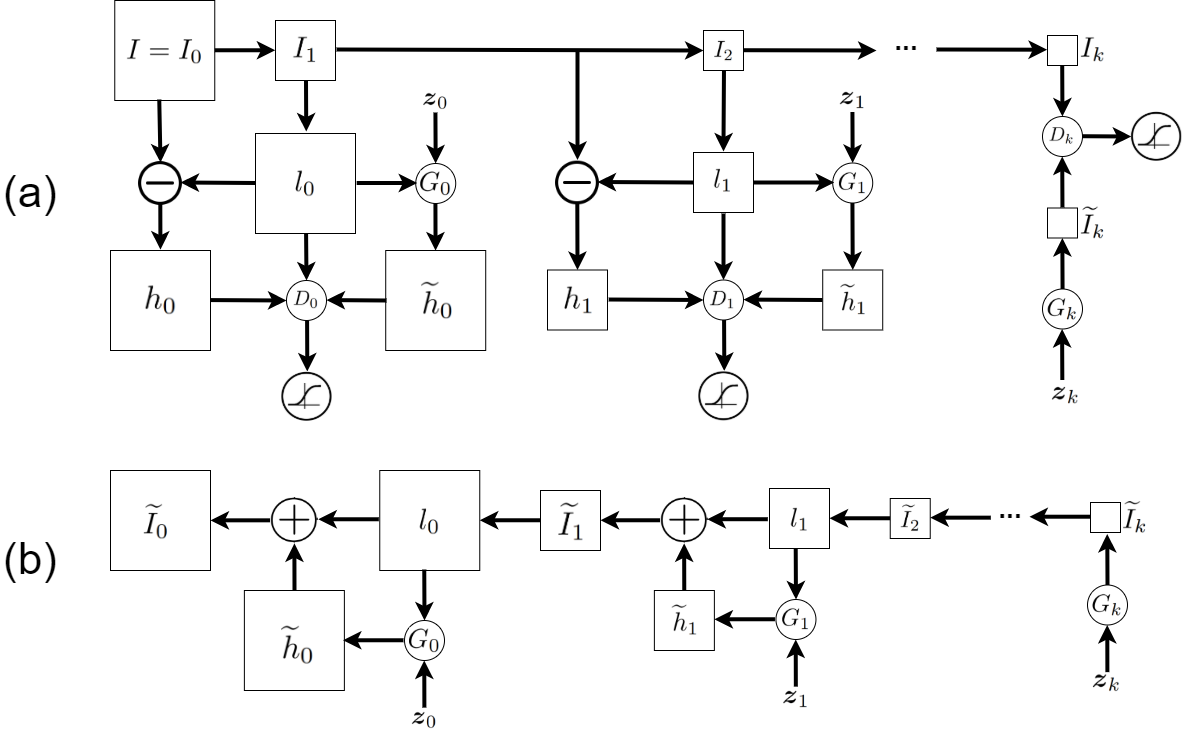}
\caption{The structure of LapGAN for (a) training and (b) test, i.e., sampling.}
\label{figure_LapGAN}
\end{figure*}

We can find the best kernel, giving the largest MMD for the worst-case scenario, from a set of valid kernel functions $\mathcal{K}$:
\begin{align*}
\min_G \max_{k \in \mathcal{K}}\,\, M_{k}(p_\text{data}(\b{x}),p_g(\b{x})).
\end{align*}
However, this optimization is difficult. 
In MMD GAN, rather than using a fixed kernel such as the Gaussian kernel, we train the kernel function by adversarial learning. We learn a function $D(.)$ to define the kernel function as:
\begin{align*}
k_D(\b{x}_i, \b{x}_j) = \exp(-\|D(\b{x}_i) - D(\b{x}_j)\|^2).
\end{align*}
We use an autoencoder for $D(.)$ with $D_e(.)$ and $D_d(.)$ as encoder and decoder, respectively. This autoencoder plays the role of discriminator in adversarial learning. The generator is denoted by $G(.)$. 
This autoencoder should reconstruct both real data, $\b{x} \in \mathcal{X}$, and generated data from latent noise, $\b{x} \in G(\b{z})$. 
The loss function of MMD GAN is:
\begin{equation}\label{equation_MMD_GAN_loss}
\begin{aligned}
&\min_{G} \max_D\,\, M_{k_D}(p_\text{data}(\b{x}),p_g(\b{x})) \\
&~~~~~~~~~~~~~~~~~~ -\lambda \mathbb{E}_{\b{x} \in \mathcal{X} \cup G(\b{z})}\big[\|\b{x} - D_d(D_c(\b{x}))\|_2^2\big], 
\end{aligned}
\end{equation}
where $\lambda>0$ is the regularization parameter. 
Both terms depend on the autoencoder $D$ while the first term depends on the generator $G$. 
Some theoretical analysis of MMD GAN can be found in \cite{mroueh2021convergence}.

\subsection{Additive GANs}

In the following, we introduce the additive GAN models which have a hierarchical or additive approach.  

\subsubsection{Laplacian GAN (LapGAN)}\label{section_LapGAN}

Laplacian GAN (LapGAN) \cite{denton2015deep} was one of the first extensions of GAN. It generates higher resolution images compared to GAN and conditional GAN. 
Inspired by the Laplacian pyramid for image \cite{burt1983laplacian}, LapGAN uses a Laplacian pyramid.
The structure of LapGAN for training is illustrated in Fig. \ref{figure_LapGAN}-a.
Let the pyramid have $k$ levels. We start with the image itself at level zero, i.e., $I = I_0$. We downsample the image to $I_1$ by a factor of two, i.e., we halve the rows and columns of image. Then, we upsample $I_1$ to $l_0$ by a factor of two, where $l_0$ is the low-pass (low-resolution) version of $I_0$. We use a conditional GAN \cite{mirza2014conditional} (see Section \ref{section_conditional_GAN}), denoted by $G_0$, which gets the noise $\b{z}_0$ as its input noise and the low-pass $l_0$ as its conditional input. The generator generates $\widetilde{h}_0$. Let $h_0 := I_0 - l_0$. We input $h_0$, $\widetilde{h}_0$, and $l_0$ to a discriminator $D_0$ whose last layer is a neuron with sigmoid activation function. The discriminator judges whether the image at this level is a real or fake (generated). This procedure is repeated for other levels until the level $(k-1)$. In each of these levels, a conditional GAN is used. In the last level $k$, a GAN (not conditional) is used which gets the noise $\b{z}_k$ as input and generates $\widetilde{I}_k$. This $\widetilde{I}_k$ and the downsampled $I_k$ are input to a discriminator $D_k$ which judges the image at that level. 

The test or sampling phase of the LapGAN is depicted in Fig. \ref{figure_LapGAN}-b. Like the training phase, all levels except the last level $k$ have conditional GANs while the last level has a GAN. At the $j$-th level, the noise $\b{z}_j$ and the low-pass image $l_j$ are fed to generator $G_j$ as its input and conditional input, respectively. The generator generates $\widetilde{h}_j$. The generated image at the $j$-th level is obtained as $\widetilde{I}_j := \widetilde{h}_j + l_j$. The generated image at the level zero, i.e. $\widetilde{I}_0$, is the generated image by the LapGAN. 

% $I = I_0\, I_1\, I_2 ~ I_3 ~ I_k$ 
% $\widetilde{I} = \widetilde{I}_0\, \widetilde{I}_1\, \widetilde{I}_2 ~ \widetilde{I}_3 ~ \widetilde{I}_k$ 
% $l_0 ~ \b{z}_0 ~ G_0 ~ h_0 ~ \widetilde{h}_0 ~ D_0$
% $l_1 ~ \b{z}_1 ~ G_1 ~ h_1 ~ \widetilde{h}_1 ~ D_1$
% $l_2 ~ \b{z}_2 ~ G_2 ~ h_2 ~ \widetilde{h}_2 ~ D_2$
% $l_3 ~ \b{z}_3 ~ G_3 ~ h_3 ~ \widetilde{h}_3 ~ D_3$
% $l_k ~ \b{z}_k ~ G_k ~ h_k ~ \widetilde{h}_k ~ D_k$

\subsubsection{Progressive GAN}

Progressive GAN \cite{karras2018progressive} starts with shallow networks for generator $G$ and discriminator $D$ and increases new layers progressively to the networks. Initially, a small convolutional layer with low spatial resolution exists in $G$ and $D$. This generates a low-resolution image. During training of GAN, we gradually add convolution layers with higher spatial resolutions to $G$ and $D$ so higher resolution images are generated. Training GAN and adding layers occur simultaneously. 

\subsection{Triple GAN}\label{section_triple_GAN}

Triple GAN \cite{li2017triple} has a discriminator $D$, a classifier $C$, and a generator $G$. In terms of having a classifier, it is similar to MGAN \cite{hoang2018mgan} (see Section \ref{section_MGAN}). 
The generator models conditional distribution of data on the label, $p_g(\b{x} | y)$, and the classifier models the opposite conditional distribution, i.e., $p_c(y | \b{x})$. The discriminator judges whether the data-label pair $(\b{x}, y)$ is real or generated (fake). 
The classifier predicts class label $y$ for the real or generated data $\b{x}$. 
Let $p_\text{data}(\b{x},y)$ denote the joint distribution of real data and labels. 
The joint distributions for data-labels in generator and classifier are $p_g(\b{x},y) = p_g(\b{x}|y) p(y)$ and $p_c(\b{x},y) = p_c(\b{x}|y) p(y)$, respectively, where $p(y)$ is the marginal distribution of labels. 
The generator gets noise $\b{z} \sim p_z(\b{z})$ and label $y$ as input and generates a data point $\b{x} = G(\b{z}, y)$, where $(G(\b{z}, y), y) \sim p_g(\b{x},y)$.
Triple GAN optimizes the loss function for a three-player game:
\begin{equation}\label{equation_Triple_GAN_loss}
\begin{aligned}
&\min_{G,C} \max_D\,\, V(D,C,G) := \mathbb{E}_{(\b{x},y) \sim p_\text{data}(\b{x},y)}\big[\log(D(\b{x},y))\big] \\
&~~~~ +\alpha \mathbb{E}_{(\b{x},y) \sim p_c(\b{x},y)}\big[\log(1 - D(\b{x},y))\big] \\
&~~~~ +(1-\alpha) \mathbb{E}_{\b{z} \sim p_z(\b{z}), y \sim p(y)}\Big[\log\!\Big(1 - D\big(G(\b{z},y), y\big)\Big)\Big] \\
&~~~~ +\mathbb{E}_{(\b{x}, y) \sim p_\text{data}(\b{x}, y)} \big[\!-\! \log (p_c(y|\b{x}))\big], 
\end{aligned}
\end{equation}
where $\alpha \in (0,1)$ is the regularization parameter ($\alpha=0.5$ is recommended). 
The last term of loss, in which $p_c(y|\b{x})$ is the predicted label by classifier, models the KL-divergence between $p_c(\b{x},y)$ and $p_\text{data}(\b{x},y)$. 
The discriminator, classifier, and generator get stronger gradually by alternating optimization \cite{ghojogh2021kkt}.

\begin{theorem}[{\citep[Lemma 3.1 and Theorem 3.3]{li2017triple}}]
The optimal discriminator of triple GAN is:
\begin{align}\label{equation_triple_GAN_optimal_D}
D^*(\b{x}, y) = \frac{p(\b{x}, y)}{p(\b{x}, y) + (1-\alpha) p_{g}(\b{x}, y) + \alpha p_{c}(\b{x}, y)}.
\end{align}
After convergence (i.e., Nash equilibrium) of triple GAN, we have:
\begin{equation}
\begin{aligned}
& p_{g^*}(\b{x}, y) = p_{c^*}(\b{x}, y) = p(\b{x}, y) \overset{(\ref{equation_triple_GAN_optimal_D})}{\implies} D^*(\b{x}, y) = 0.5.
\end{aligned}
\end{equation}
\end{theorem}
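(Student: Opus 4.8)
The plan is to mirror the two-stage argument already used for the vanilla GAN (Theorem \ref{theorem_GAN_optimal_discriminator} and the subsequent theorem establishing $p_{g^*} = p_\text{data}$), handling the extra classifier player with the standard cross-entropy/KL decomposition. First I would fix $G$ and $C$ and optimize $D$. Writing the expectations in Eq.~(\ref{equation_Triple_GAN_loss}) as integrals over the pair $(\b{x}, y)$ and pushing the third term through the generator's change of variables (as in Eq.~(\ref{equation_pg_pz_relation})), the only $D$-dependent part of $V(D,C,G)$ is $\int \big( p(\b{x},y)\log D(\b{x},y) + [(1-\alpha)p_g(\b{x},y) + \alpha p_c(\b{x},y)]\log(1 - D(\b{x},y)) \big)\, d\b{x}\, dy$. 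Setting $p_\alpha := (1-\alpha)p_g + \alpha p_c$, the integrand is pointwise of the form $a\log D + b\log(1-D)$, maximized at $D = a/(a+b)$; this yields Eq.~(\ref{equation_triple_GAN_optimal_D}) exactly, with no surviving integral since the derivative is taken at each fixed $(\b{x},y)$.

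Next I would substitute $D^*$ back into $V$. By the same algebra that produced Eq.~(\ref{equation_GAN_loss_JSD}) from the KL terms, the first three terms collapse to $2\,\text{JSD}(p \,\|\, p_\alpha) - \log(4)$, with the Jensen--Shannon divergence as in Eq.~(\ref{equation_GAN_JSD}) and mixture $m = (p + p_\alpha)/2$. The remaining classifier term $\mathbb{E}_{(\b{x},y)\sim p}[-\log p_c(y|\b{x})]$ I would split as $\mathbb{E}_{\b{x}\sim p(\b{x})}[\text{KL}(p(y|\b{x}) \,\|\, p_c(y|\b{x}))] + H_p(y|\b{x})$, so that up to the constant conditional entropy $H_p(y|\b{x})$ we are minimizing $2\,\text{JSD}(p\|p_\alpha) + \mathbb{E}_{\b{x}}[\text{KL}(p(y|\b{x})\|p_c(y|\b{x}))]$ over $C$ and $G$.

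The minimization is then pinned down by non-negativity of both divergences. The KL term vanishes iff $p_c(y|\b{x}) = p(y|\b{x})$; because the classifier acts on real inputs its $\b{x}$-marginal is $p(\b{x})$, so this forces the full joint $p_{c^*}(\b{x},y) = p(\b{x})\,p(y|\b{x}) = p(\b{x},y)$. The JSD term vanishes iff $p_\alpha = p$, i.e. $(1-\alpha)p_{g^*} + \alpha p_{c^*} = p$; substituting $p_{c^*} = p$ collapses this to $(1-\alpha)p_{g^*} = (1-\alpha)p$, hence $p_{g^*} = p$. Since both lower bounds are attained simultaneously at $p_{g^*} = p_{c^*} = p$, this is the global optimum. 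Plugging $p_{g^*} = p_{c^*} = p$ into Eq.~(\ref{equation_triple_GAN_optimal_D}) then gives $D^* = p/(p + (1-\alpha)p + \alpha p) = 0.5$.

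I expect the main obstacle to be the disentanglement step: $p_\alpha = p$ alone only constrains the convex combination $(1-\alpha)p_g + \alpha p_c$ and leaves a one-parameter family of admissible $(p_g, p_c)$, so it does not by itself yield $p_g = p_c = p$. The argument only closes because the supervised cross-entropy term fixes $p_c$ completely, and here the fact that the classifier's marginal over $\b{x}$ is the data marginal $p(\b{x})$ (rather than a separately learned marginal) is essential; it upgrades the conditional identity $p_c(y|\b{x}) = p(y|\b{x})$ to the joint identity $p_{c^*} = p$. I would state this marginal assumption explicitly, since without it the claimed unique equilibrium need not hold.
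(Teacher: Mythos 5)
Your proposal is correct, but there is nothing in the paper to compare it against line by line: the survey states this theorem purely by citation to Lemma 3.1 and Theorem 3.3 of \cite{li2017triple} and supplies no proof of its own. What you have written is essentially a reconstruction of that original argument, and it is also the natural extension of the paper's vanilla-GAN proofs (Theorem \ref{theorem_GAN_optimal_discriminator} and the theorem following it), exactly as you intended: pointwise maximization of $a\log D + b\log(1-D)$ with the mixture $p_\alpha := (1-\alpha)p_g + \alpha p_c$ in the role of the fake distribution gives Eq. (\ref{equation_triple_GAN_optimal_D}); substituting $D^*$ collapses the adversarial terms to $2\,\text{JSD}(p\,\|\,p_\alpha) - \log(4)$ (valid since $p_\alpha$ is a bona fide density, being a convex combination of densities); and the cross-entropy term splits into conditional entropy plus $\mathbb{E}_{\b{x}}\big[\text{KL}\big(p(y|\b{x})\,\|\,p_c(y|\b{x})\big)\big]$, whose vanishing pins down $p_c$ and then, via $p_\alpha = p$ and $\alpha \in (0,1)$, pins down $p_g$.

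Your closing caveat is not a side remark; it is the crux of the uniqueness claim, and it in fact exposes an inconsistency in the survey's own setup. The survey writes the classifier's joint as $p_c(\b{x},y) = p_c(\b{x}|y)\,p(y)$, which contradicts its earlier statement that the classifier models $p_c(y|\b{x})$; the construction in \cite{li2017triple} is $p_c(\b{x},y) = p(\b{x})\,p_c(y|\b{x})$, i.e., the classifier pseudo-labels \emph{real} samples, which is precisely the marginal fact you invoke. Without that constraint, $\text{KL} = 0$ fixes only the conditional, and $p_\alpha = p$ leaves the one-parameter family of pairs $(p_g, p_c)$ you describe, so the equilibrium would indeed fail to be unique (this is also why the cross-entropy term is in the loss at all: the three adversarial terms alone admit a continuum of equilibria). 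Stating the marginal assumption explicitly, as you propose, is the right call.
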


\subsection{Latent Adversarial Generator (LAG)}

Latent Adversarial Generator (LAG) \cite{berthelot2020creating} can generate high-resolution images by taking a corresponding low-resolution image as an input cue. 
In terms of getting a cue, it can be related to the conditional GAN (see Section \ref{section_conditional_GAN}).
Let $\b{z}$, $\b{x}$, and $\widetilde{\b{x}}$ denote the noise sample, the real data point, and the cue low-resolution data point, respectively. The generator $G$ takes $\widetilde{\b{x}}$ and $\b{z}$ as input and generates the high-resolution image $G(\widetilde{\b{x}}, \b{z})$. The discriminator $D$ has two parts. First, by a projection operator $\Pi$, it projects data onto a low-dimensional space, named the perceptual latent space. The operator $\Pi$ is a nonlinear neural network and gets the high and low dimensional data points as input. Then, by some other layers of network, denoted by the mapping $F(.)$, the projected data onto the perceptual latent space is mapped to a scalar after the sigmoid activation function.
Hence, the discriminator is $D(\b{x}) = F(\Pi(\b{x}, \widetilde{\b{x}}))$. 

We want to have $\Pi(G(\widetilde{\b{x}}, \b{z} = \b{0}), \widetilde{\b{x}})$ be similar to $\Pi(\b{x}, \widetilde{\b{x}})$ so we use a regularization term for it.
LAG uses WGAN (see Section \ref{section_WGAN}) whose loss is regularized. We regularize Eq. (\ref{equation_WGAN_loss}) as:
\begin{equation}\label{equation_LAG_loss}
\begin{aligned}
\min_G &\max_{\|D\|_L \leq 1} D(\b{x}, \widetilde{\b{x}}) - D(G(\widetilde{\b{x}}, \b{z}), \widetilde{\b{x}}) \\
&~~~~~~~~~~~~~ -\lambda_1 ( \nabla_{\widehat{\b{x}}} D(\widehat{\b{x}})\|_2 - 1 )^2 \\
& +\lambda_2 \|\Pi(G(\widetilde{\b{x}}, \b{z} = \b{0}), \widetilde{\b{x}}) - \Pi(\b{x}, \widetilde{\b{x}})\|_F^2,
\end{aligned}
\end{equation}
where $\lambda_1, \lambda_2 >0$ are the regularization parameters, $\|.\|_F$ is the Frobenius norm, and $\widetilde{\b{x}}$ is defined in Eq. (\ref{equation_WGAN_x_hat}).

\subsection{Ensembles of GAN Models}

In the following, we introduce some GAN models which have an ensemble of generators and/or discriminators. 
Some of them were already introduced, such as MGAN (see Section \ref{section_MGAN}) and D2GAN (see Section \ref{section_D2GAN}). Here, we explain other ensemble GAN methods. 

\subsubsection{Generative Multi-Adversarial Network (GMAN)}

Generative Multi-Adversarial Network (GMAN) \cite{durugkar2017generative} accelerates training of GAN by using several discriminators. Assume we have $n_d$ discriminators. 
The loss function of GMAN is:
\begin{align}
&\max_{D_i}\, V(D_i, G), \quad \forall i \in \{1, \dots, n_d\}, \\
&\min_G\, F\big(V(D_1, G), \dots, V(D_{n_d}, G)\big),
\end{align}
where every $V(D_i, G)$ is defined in Eq. (\ref{equation_GAN_loss}) and the function $F(.)$ can be an aggregating function such as $F(.) = \max(.)$ or $F(.) = \text{mean}(.)$. If $F(.)$ is the maximum function, generator is trained using the best discriminator at every iteration of the alternating optimization. If $F(.)$ is the mean function, an average effect of all discriminators are used for training the generator. 

\begin{figure*}[!t]
\centering
\includegraphics[width=5in]{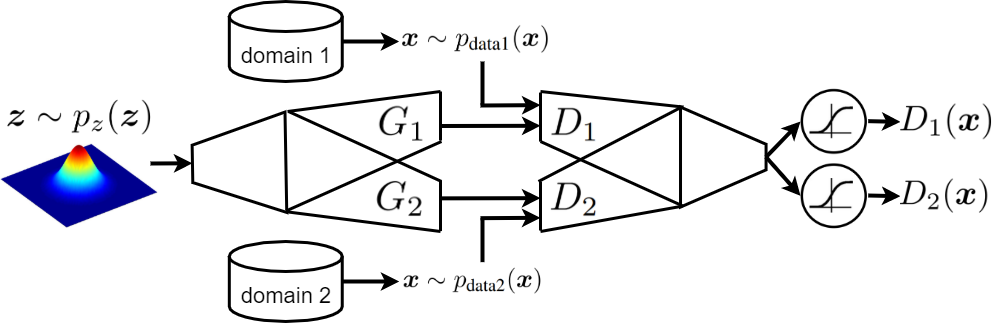}
\caption{The structure of CoGAN.}
\label{figure_CoGAN}
\end{figure*}

\subsubsection{AdaGAN: Boosting GANs}

Boosting refers to using weak models additively where every next model gives more weight to the points which were not correctly classified/regressed by the previous model \cite{ghojogh2019theory}. 
One of the most well-known boosting methods for classification and regression is AdaBoost \cite{freund1997decision}. 
AdaGAN \cite{tolstikhin2017adagan} is boosting the GAN models for generation of data points. 
Let $n$ be the number of data points.
We start with the first GAN where the weights of points are all $1/n$. 
Let the generator of the $j$-th GAN be denoted by $G_j$. We have one discriminator $D$ only as the classifier whose scalar output after sigmoid activation function is $D(\b{x})$.
For the $j$-th GAN model, we use a discriminator $D$ to discriminate between the true data and the generated data $G_{j-1}(\b{z})$ where $\b{z}$ is the latent noise. 
The weights of points are updated as:
\begin{align}\label{equation_AdaGAN_weights}
w_{i,j} := \frac{1}{n \beta_j} \big[\lambda - (1 - \beta_j) h(D(\b{x}_i))\big]_+,
\end{align}
where $w_{i,j}$ is the weight of $\b{x}_i$ for the $j$-th GAN, $[.]_+ := \max(.,0)$, $\beta_j := 1/j$ (or a fixed number in range $[0,1]$), $h(D(\b{x})) := (1 - D(\b{x})) / D(\b{x})$, and $\lambda$ is obtained by iteratively updating:
\begin{align*}
\lambda := \frac{\beta_j}{\sum_{i=1}^k (1/n)} \Big(1 + \frac{1-\beta_j}{\beta_j} \sum_{i=1}^k (1/n) h\big(D(\b{x}_i)\big)\Big),
\end{align*}
in which $k$ is the iteration of iterative updating. 
The generator of $j$-th weak GAN, $G'_t$, is trained by the weighted data points using the updated weights in Eq. (\ref{equation_AdaGAN_weights}). Finally, the $j$-th GAN is computed to be the linear combination of $G'_t$ and the previous GAN:
\begin{align*}
G_j := (1 - \beta_j) G_{j-1} + \beta_j G'_t.
\end{align*}
The proofs for the above formulas can be found in \cite{tolstikhin2017adagan}.

\subsubsection{Boosted Generative Model (BGM)}

Another similar method for boosting GAN models is the Boosted Generative Model (BGM) \cite{grover2018boosted}. We briefly introduce its idea here. 
Again, it starts with equal weights, all $1/n$, for the points. It trains the first generative model $G_1$. For the $j$-th GAN, it uses the lower bound of the f-divergence in Eq. (\ref{equation_f_divergence_lower_bound}) to estimate the next generative model based on the previous model. The formulation is inspired by the AdaBoost \cite{freund1997decision}. 

\subsection{Coupled GAN (CoGAN)}

% $\b{x} \sim p_\text{data1}(\b{x})$
% $\b{x} \sim p_\text{data2}(\b{x})$
% $G_1$ $G_2$ $D_1$ $D_2$
% $D_1(\b{x})$ $D_2(\b{x})$

Coupled GAN (CoGAN) \cite{liu2016coupled} is a generative model for several domains, where several data points are generated each of which has a different domain but the data points are related. For example, one domain can be image and another domain can be text where an image and a related caption can be generated. Another example is generation of two related images but from different domains, such as facial and nature images. If the tuples of corresponding data points are available, CoGAN can learn to generate corresponding and related images from different domains; otherwise, it can generate not-necessarily-related data points from the domains. 

Assume we have two domains. In this case, CoGAN has two coupled GAN structures as illustrated in Fig. \ref{figure_CoGAN}. Let $G_1$/$D_1$ and $G_2$/$D_2$ denote the generators/discriminators of the first and second GAN structures, respectively. 
In a generator, the first and last layers of network extract high-level and low-level features, respectively \cite{liu2016coupled}. Conversely, in a discriminator, the first and last layers of network extract low-level and high-level features, respectively \cite{krizhevsky2012imagenet}.
We want the GAN structures to share their high-level features but their low-level features should differ for capturing each domain's characteristics. Therefore, as shown in Fig. \ref{figure_CoGAN}, the first layers of generators and the last layers of discriminators are shared. 
Let the datasets of the first and second domains be denoted by $p_\text{data1}(\b{x})$ and $p_\text{data2}(\b{x})$, respectively. 
The loss function of CoGAN is:
\begin{equation}\label{equation_coGAN_loss}
\begin{aligned}
\min_{G_1,G_2} \max_{D_1, D_2}\,\,\,\, &V(D_1, D_2, G_1, G_2) := \\
&\mathbb{E}_{\b{x} \sim p_\text{data1}(\b{x})}\Big[\log\!\big(D(\b{x})\big)\Big] \\
&+ \mathbb{E}_{\b{z} \sim p_z(\b{z})}\Big[\log\!\Big(1 - D_1\big(G_1(\b{z})\big)\Big)\Big], \\
&+\mathbb{E}_{\b{x} \sim p_\text{data2}(\b{x})}\Big[\log\!\big(D(\b{x})\big)\Big] \\
&+ \mathbb{E}_{\b{z} \sim p_z(\b{z})}\Big[\log\!\Big(1 - D_2\big(G_2(\b{z})\big)\Big)\Big],
\end{aligned}
\end{equation}
subject to the fact that some layers of the generators and some layers of discriminators are shared, as shown in Fig. \ref{figure_CoGAN}.
Note that, although the paper \cite{liu2016coupled} has focused on coupling two GAN structures, the CoGAN can be easily extended to any number of structures and thus any number of domains. 

\subsection{Inverse GAN Models}\label{section_inverse_GAN_models}

We can invert generation of data points in GAN. This refers to generating a latent noise sample $\b{z}$ from some data point $\b{x}$. This latent noise is corresponding to the point $\b{x}$ in the sense that if it is fed to the generator, $\b{x}$ is generated. Some existing methods for inverse in GAN are adversarial autoencoder, BiGAN, ALI, and inverse technique. 
The adversarial autoencoder will be introduced later in Section \ref{section_AAE}. The other methods are explained in the following. 

\subsubsection{Bidirectional GAN (BiGAN)}

In GAN, the generator gets a latent noise $\b{z}$ and generates data point $\b{x}$. However, the inverse of this process, i.e. outputting a latent variable from the data point $\b{x}$, does not exist in GAN. 
Bidirectional GAN (BiGAN) \cite{donahue2017adversarial} is a version of GAN which also includes this inverse. Its structure is depicted in Fig. \ref{figure_BiGAN}.
In BiGAN, the generator $G$ gets the noise $\b{z}$ as input and generates $G(\b{z})$. The encoder $E$, as the inverse of $G$, gets $\b{x}$ as input and outputs $E(\b{x})$. 
Recall that the discriminator of GAN gets the data $\b{x}$ and the generated data $G(\b{z})$ as input (see Fig. \ref{figure_GAN}). 
However, the discriminator of BiGAN gets all $G(\b{z})$, $\b{z}$, $E(\b{x})$, and $\b{x}$ as input and judges whether the generated data $G(\b{z})$ is real or generated (fake). 
It assigns label one to each pair $(\b{x}, E(\b{x}))$ and label zero to each pair $(\b{z}, G(\b{z}))$. 
The loss function of BiGAN is:
\begin{equation}\label{equation_BiGAN_loss}
\begin{aligned}
\min_{G,E} &\max_D\,\,\,\, V(D,G,E) := \\
&\mathbb{E}_{\b{x} \sim p_\text{data}(\b{x})}\Big[\mathbb{E}_{\b{z} \sim p_E(.|\b{x})}\big[\log(D(\b{x},\b{z}))\big]\Big] \\
&+\mathbb{E}_{\b{z} \sim p_z(\b{z})}\Big[\mathbb{E}_{\b{x} \sim p_G(.|\b{z})}\big[\log(1 - D(\b{x},\b{z}))\big]\Big] \\
&=\mathbb{E}_{\b{x} \sim p_\text{data}(\b{x})}\Big[\log\!\Big(D\big(\b{x},E(\b{x})\big)\Big)\Big] \\
&+ \mathbb{E}_{\b{z} \sim p_z(\b{z})}\Big[\log\!\Big(1 - D\big(G(\b{z}), \b{z}\big)\Big)\Big].
\end{aligned}
\end{equation}
We use alternating optimization \cite{ghojogh2021kkt} by alternating between optimizing for $D$, $G$, and $E$. 

\begin{theorem}[{\citep[Theorem 2]{donahue2017adversarial}}]
After convergence (i.e., Nash equilibrium) of BiGAN, the optimal encoder $E$ and generator $G$ are inverse of each other:
\begin{align}
E^* = (G^*)^{-1}, \,\, G^*(E^*(\b{x})) = \b{x}, \,\, E^*(G^*(\b{z})) = \b{z}.
\end{align}
\end{theorem}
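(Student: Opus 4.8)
The plan is to reduce the BiGAN game to a standard GAN-style argument carried out over \emph{joint} distributions on the pair $(\b{x}, \b{z})$, and then to translate the resulting distributional equality into the pointwise inverse relationship. The key observation is that the value function in Eq.~(\ref{equation_BiGAN_loss}) pits the discriminator against two joint densities: the encoder joint $p_{EX}(\b{x}, \b{z}) := p_\text{data}(\b{x})\, p_E(\b{z}|\b{x})$, sampled by drawing $\b{x} \sim p_\text{data}(\b{x})$ and setting $\b{z} = E(\b{x})$, and the generator joint $p_{GZ}(\b{x}, \b{z}) := p_z(\b{z})\, p_G(\b{x}|\b{z})$, sampled by drawing $\b{z} \sim p_z(\b{z})$ and setting $\b{x} = G(\b{z})$. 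The nested expectations in Eq.~(\ref{equation_BiGAN_loss}) are exactly expectations over these joints, so $V(D,G,E)$ becomes formally identical to the GAN value function of Eq.~(\ref{equation_GAN_loss}), but with the enlarged variable $(\b{x}, \b{z})$ and with $(p_{EX}, p_{GZ})$ in place of $(p_\text{data}, p_g)$.

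First I would reuse the argument of Theorem~\ref{theorem_GAN_optimal_discriminator} on this enlarged variable: for fixed $G$ and $E$, taking the pointwise derivative of the integrand with respect to $D(\b{x}, \b{z})$ shows the optimal discriminator is
\begin{align*}
D^*(\b{x}, \b{z}) = \frac{p_{EX}(\b{x}, \b{z})}{p_{EX}(\b{x}, \b{z}) + p_{GZ}(\b{x}, \b{z})}.
\end{align*}
Next, substituting $D^*$ back into $V$ and repeating the algebra that produced Eq.~(\ref{equation_GAN_loss_JSD}) yields $V(D^*, G, E) = 2\,\text{JSD}(p_{EX} \| p_{GZ}) - \log(4)$. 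Since the encoder and generator jointly minimize this quantity and the JSD is nonnegative and vanishes only when its two arguments coincide, the Nash equilibrium forces $p_{EX}(\b{x}, \b{z}) = p_{GZ}(\b{x}, \b{z})$ almost everywhere.

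The final and hardest step is to convert this distributional equality into $E^* = (G^*)^{-1}$. When $E$ and $G$ are deterministic, the conditionals degenerate to $p_E(\b{z}|\b{x}) = \delta(\b{z} - E(\b{x}))$ and $p_G(\b{x}|\b{z}) = \delta(\b{x} - G(\b{z}))$, so $p_{EX}$ is supported on the graph $\{(\b{x}, E(\b{x}))\}$ while $p_{GZ}$ is supported on the graph $\{(G(\b{z}), \b{z})\}$. Equality of the two measures forces these graphs to coincide up to a set of measure zero: for $p_\text{data}$-almost every $\b{x}$ there exists $\b{z}$ with $(\b{x}, E(\b{x})) = (G(\b{z}), \b{z})$, which reads $\b{z} = E(\b{x})$ together with $\b{x} = G(\b{z}) = G(E(\b{x}))$, giving $G \circ E = \text{id}$ on the data support; the symmetric argument on the $\b{z}$-marginal gives $E \circ G = \text{id}$ on the noise support. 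I expect the main obstacle to be making this support-matching argument rigorous, since the degenerate delta conditionals do not admit the naive pointwise derivative used for $D^*$. The clean route is to argue that two probability measures of equal total mass, each concentrated on its respective graph, must share support, so that the matching of marginals ($E^*$ pushes $p_\text{data}(\b{x})$ forward to $p_z(\b{z})$ and $G^*$ pushes $p_z(\b{z})$ forward to $p_\text{data}(\b{x})$) together with the graph structure yields the inversion in an almost-everywhere sense; as in the original reference, mild regularity assumptions on $E$ and $G$ are what make this precise.
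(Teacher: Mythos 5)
The paper itself does not prove this theorem: it states the result with a citation to the original BiGAN paper and moves on, so there is no in-paper proof to compare against. Judged on its own, your proposal is correct in outline, and it is essentially the strategy of the cited reference \cite{donahue2017adversarial}: rewrite the nested expectations in Eq.~(\ref{equation_BiGAN_loss}) as expectations over the two joint distributions $p_{EX}(\b{x},\b{z}) = p_\text{data}(\b{x})\,p_E(\b{z}|\b{x})$ and $p_{GZ}(\b{x},\b{z}) = p_z(\b{z})\,p_G(\b{x}|\b{z})$, reuse the pointwise optimal-discriminator computation of Theorem~\ref{theorem_GAN_optimal_discriminator} on the enlarged variable $(\b{x},\b{z})$, conclude via the JSD identity (the analogue of Eq.~(\ref{equation_GAN_loss_JSD})) that the equilibrium forces $p_{EX} = p_{GZ}$, and then extract the inversion from the fact that each joint is concentrated on the graph of a deterministic map. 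One refinement worth adopting for your final step: rather than arguing that the two measures must ``share support'' (which runs into closure technicalities when $E$ and $G$ are merely measurable, as you anticipate), evaluate both measures on the measurable set $R := \{(\b{x},\b{z}) : \b{x} \neq G(\b{z})\}$. By construction $p_{GZ}(R) = 0$, so equality of the joints gives $p_{EX}(R) = 0$; but under $p_{EX}$ one has $\b{z} = E(\b{x})$, so this says exactly that $G(E(\b{x})) = \b{x}$ for $p_\text{data}$-almost every $\b{x}$. The symmetric argument with $R' := \{(\b{x},\b{z}) : \b{z} \neq E(\b{x})\}$ gives $E(G(\b{z})) = \b{z}$ for $p_z$-almost every $\b{z}$. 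This null-set argument delivers the almost-everywhere inversion directly and avoids any appeal to supports or regularity of $E$ and $G$.
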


\begin{figure}[!t]
\centering
\includegraphics[width=3.2in]{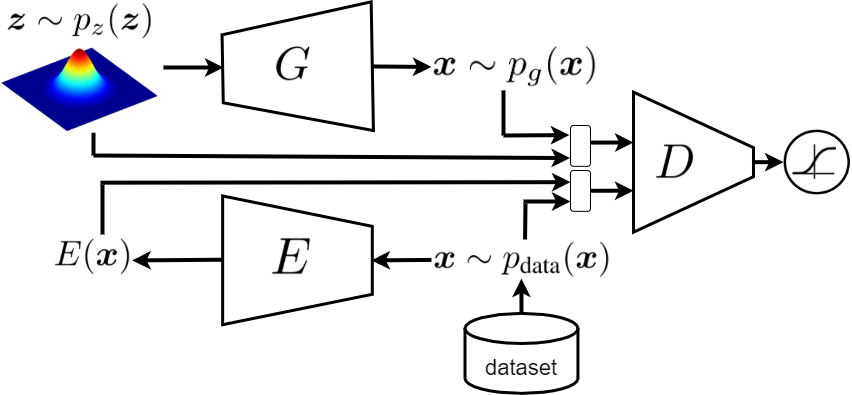}
\caption{The structure of BiGAN.}
\label{figure_BiGAN}
\end{figure}

\subsubsection{Adversarially Learned Inference (ALI)}

Adversarially Learned Inference (ALI) \cite{dumoulin2017adversarially} is one of the methods for having inverse in GAN. The generator $G$ of ALI is an autoencoder whose encoder $G_x(\b{z})$ and decoder $G_z(\b{x})$ are called the generator network and the inference network, respectively. 
The generator network $G_x(\b{z})$ maps latent noise sample $\b{z}$ to a generated data point $\widetilde{\b{x}} := G_x(\b{z})$. The inference network $G_z(\b{x})$ maps a data point $\b{x}$ to its corresponding latent noise sample $\widetilde{\b{z}} := G_z(\b{x})$. 
The discriminator $D(\b{x},\b{z})$ tries to distinguish the pairs $(\widetilde{\b{x}}, \b{z})$ and $(\b{x}, \widetilde{\b{z}})$, obtained from the generator and inference networks, respectively. 
The loss function of ALI is:
\begin{equation}
\begin{aligned}
\min_G \max_D\,\, &\mathbb{E}_{\b{x} \sim p_\text{data}(\b{x})}\big[\log(D(\b{x}, G_z(\b{x})))\big] \\
&+ \mathbb{E}_{\b{x} \sim p_\text{data}(\b{x})}\big[\log(1 - D(G_x(\b{z}), \b{z}))\big].
\end{aligned}
\end{equation}

\subsubsection{The Inversion Technique}

Another approach for having inverse in GAN is the inversion technique \cite{creswell2018inverting}. For this, after training a GAN model, we find a noise sample which results in the generated data point:
\begin{align}
\max_{\b{z}}\,\, \mathbb{E}\big[\log(G(\b{z}))\big] + \lambda \log(p_z(\b{z})),
\end{align}
where $p_z(\b{z})$ is the desired prior distribution of latent space (e.g., $\mathcal{N}(\b{0}, \b{I})$) and $\lambda>0$ is the regularization parameter. This optimization can be performed using gradient descent. 

\subsection{Self-Attention GAN (SAGAN)}

Attention mechanism \cite{vaswani2017attention} is weighting the features of data in a way that machine attends to the more important features by giving them larger weights \cite{ghojogh2020attention}. 
The weights are calculated by measuring the similarity of features with respect to each other using inner product. 
In self-attention, the similarities of features of every data point with other features of the same data point are calculated. These inner produces are implemented within the convolutional layers of network. 
Self-Attention GAN (SAGAN) \cite{zhang2019self} uses self-attention mechanism in the networks of both generator and discriminator. 
For the mathematical details of attention mechanism and SAGAN, refer to \cite{ghojogh2020attention} and \cite{zhang2019self}, respectively.

\subsection{Few-shot GAN Models}

In the following, we introduce the GAN models which learn from few number of training data points. 

\subsubsection{Transfer Learning in GAN}

Consider a GAN $(G_s, D_s)$ which is already trained on some data in a source domain. Few-shot GAN \cite{ojha2021few} can do transfer learning where the trained GAN on the source domain also generates images from another target domain. In this method, we have an adapted generator $G_{s \rightarrow t}$ which is aimed to generate data points from the target domain. As the target domain has few data points in few-shot learning, it is prone to overfitting \cite{ghojogh2019theory}. Hence, we try to preserve the pairwise similarities before and after adaptation. For this, we draw a mini-batch of $(b+1)$ noise samples $\{\b{z}_i\}_{i=1}^{b+1}$ from the latent space. We feed these to the generators $G_s$ and $G_{s \rightarrow t}$. At the $\ell$-th layer, we calculate:
\begin{align*}
& y_{s,i}^{\ell} := \text{softmax}(\text{sim}(G_s^\ell(\b{z}_i), G_s^\ell(\b{z}_j))), \\
& y_{s \rightarrow t,i}^{\ell} := \text{softmax}(\text{sim}(G_{s \rightarrow t}^\ell(\b{z}_i), G_{s \rightarrow t}^\ell(\b{z}_j))),
\end{align*}
for all $i \neq j, i,j \in \{1, \dots, b+1\}$ where sim(.) denotes the cosine similarity. We want the adapted generator to have similar distributions across layers; hence we define the loss: 
\begin{align*}
V(G_{s \rightarrow t}, G_s) := \mathbb{E}_{\b{z}_i \sim p_z(\b{z})}\Big[\sum_\ell \sum_i \text{KL}(y_{s \rightarrow t,i}^{\ell} \| y_{s,i}^{\ell})\Big],
\end{align*}
where KL(.) denotes the KL-divergence. 

We then sample $k$ number of random noises and call them the anchor points $Z_\text{anchor}$. This anchor space is a subset of the whole latent space $Z$. 
We have two discriminators which are $D_\text{image}$ for judging the whole image and $D_\text{patch}$ for judging an image patch. Let:
\begin{align*}
&V(D_\text{image}, D_\text{patch}, G_{s \rightarrow t}) := \\
&\mathbb{E}_{\b{x} \sim \mathcal{D}_t} \big[\mathbb{E}_{\b{z} \sim Z_\text{anchor}}[D_\text{image}(G_{s \rightarrow t}(\b{z})) - D_\text{image}(\b{x}_\text{image})] \\
&+\mathbb{E}_{\b{z} \sim p_z(\b{z})}[D_\text{patch}(G_{s \rightarrow t}(\b{z})) - D_\text{patch}(\b{x}_\text{patch})] \big],
\end{align*}
where $\mathcal{D}_t$ denotes the target domain. 
The overall loos function is:
\begin{equation}\label{equation_transferLearning_GAN_loss}
\begin{aligned}
\min_{G_{s \rightarrow t}} &\max_{D_\text{image}, D_\text{patch}}\,\,\,\, \\
&~~~ V(D_\text{image}, D_\text{patch}, G_{s \rightarrow t}) + \lambda V(G_{s \rightarrow t}, G_s), 
\end{aligned}
\end{equation}
where $\lambda > 0$ is the regularization parameter. 
In this loss, the first term gives freedom to the structure of patches in the image and the second term takes care of transfer learning. 

\subsubsection{GAN with Single Image (SinGAN)}

GAN with Single Image (SinGAN) \cite{shaham2019singan} learns to generate images by being trained on one image only. 
It generates images which are all related texture-wise to the training image. 
It learns the distributions of patches within the image in different scales and uses multi-scale adversarial learning. In the sens of using multiple scales in a Laplacian pyramid, it is similar to the LapGAN \cite{denton2015deep} (see Section \ref{section_LapGAN}). Assume we have $(k+1)$ levels $\{0, \dots, k\}$ in the Laplacian pyramid where the level $0$ is the image itself and the image is downsampled in other levels. At every $j$-th level, we have a GAN $(G_j, D_j)$. Training is from the $k$-th to the $0$-th level. If $\b{z}_j$ is the latent noise at level $j$, the generations are:
\begin{align*}
& \b{x}_k = G_k(\b{z}_k), \\
& \b{x}_j = G_j(\b{z}_j, \b{x}'_{j+1}), \quad \forall j < k, 
\end{align*}
where $\b{x}'_{j+1}$ is the upsampled version of the generated image $\b{x}_{j+1}$. 
The GANs are trained sequentially and the previously trained GANs are kept fixed while training the next GAN. The loss function is regularized by a reconstruction error to make the model generate better images. 

\subsection{Training Triplet Network with GAN}

A Siamese network \cite{bromley1993signature} is a network composed of multiple networks sharing their weights. If the number of networks is three, the Siamese network is a triplet network. Adversarial learning can be used for training a triplet network \cite{zieba2017training}. 
Consider triplets $(\b{x}_a, \b{x}_p, \b{x}_n)$ where $\b{x}_a$ is the anchor point, $\b{x}_p$ is the positive point having the same class as anchor, and $\b{x}_n$ is the negative point having a different class from anchor. 
For this, the loss function can be:
\begin{equation}\label{equation_triplet_GAN_loss}
\begin{aligned}
\min_{\theta}\,\,\, &-\log\big(\frac{\exp(\|\b{x}_a - \b{x}_p\|_2^2)}{\exp(\|\b{x}_a - \b{x}_p\|_2^2) + \exp(\|\b{x}_a - \b{x}_n\|_2^2)}\big) \\
&- V(D,G),
\end{aligned}
\end{equation}
where $\theta$ is the weights of network, the first term is the Neighborhood Component Analysis (NCA) \cite{goldberger2004neighbourhood}, and the second term is the adversarial loss function. Paper \cite{zieba2017training} uses Eq. (\ref{equation_semi_supervised_GAN_newClass_D}) for the discriminator $D$. 

\section{Sampling and Interpolation in GAN}

After training a GAN, we can generate new data points by sampling noise from the latent space and feeding it to the generator. 
There may exist two problems in sampling from the latent space \cite{white2016sampling}. First, we should avoid sampling from the locations in the latent space which are highly unlikely. Secondly, as the latent space is usually high dimensional, there often exist some dead-zone locations in the latent space which are not trained during the training \cite{makhzani2015adversarial}. 
In the following, we introduce some techniques for sampling and interpolation in the latent space. Note that these techniques can also be used for other generative models such as variational autoencoder \cite{kingma2014auto,ghojogh2021factor}. 

\begin{figure*}[!t]
\centering
\includegraphics[width=5.5in]{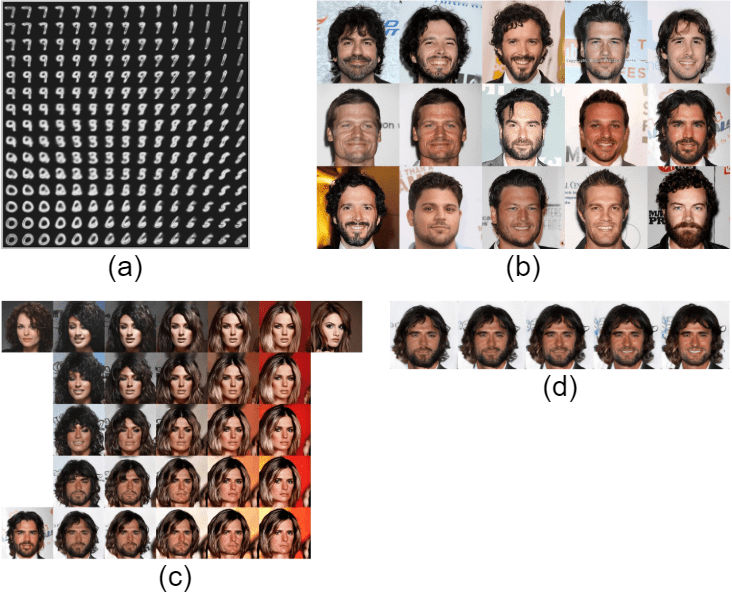}
\caption{(a) Interpolation in the latent space of VAE trained on MNIST data (image is from \url{https://blog.keras.io/building-autoencoders-in-keras.html}), (b) MINE for VAE trained on the CelebA dataset \cite{liu2015deep}, (c) J-diagram by interpolation in the latent space of a GAN trained on the CelebA dataset, and (d) traversal along the smile vector for a GAN trained on the CelebA dataset (Image for (b), (c), and (d) are from \cite{white2016sampling}).}
\label{figure_interpolation}
\end{figure*}

\subsection{Interpolation in the Latent Space}

For showing that the GAN model has not memorized the training data and the latent space is meaningful for the trained GAN, we can traverse different locations in the latent space and see what data points are generated from the sampled noises. Traversing different locations in the latent space with some step is usually called interpolation in the latent space. 
A problem with linear interpolation, which has fixed step size, is that we traverse some highly unlikely priors. This can result in strange generated data points. 
Therefore, rather than the linear interpolation, we can use spherical linear interpolation \cite{white2016sampling}, called \textit{slerp}, to traverse a path on a $p$-dimensional hypersphere in the $p$-dimensional latent space. Assume we want to sample noises between locations $\b{z}_1$ and $\b{z}_2$ in the latent space. The interpolated locations are obtained as \cite{shoemake1985animating}:
\begin{align}
\text{slerp}(\b{z}_1, \b{z}_2, \mu) := \frac{\sin ((1-\mu) \theta)}{\sin (\theta)} \b{z}_1 + \frac{\sin (\mu \theta)}{\sin (\theta)} \b{z}_2,
\end{align}
where $\mu$ is swept in range $[0,1]$ and $\theta := \cos^{-1}(\b{z}_1^\top \b{z}_2)$. 

We can have generated data points from the sampled noises by interpolation in the latent space. 
If we do interpolation across two perpendicular axes in the latent space, we can put the generations in a two dimensional table
An example for linear interpolation is shown in Fig. \ref{figure_interpolation}-a. Interpolation shows how the latent space is covering generation of various data points and what the shared features are between data points. 

\subsection{Manifold Interpolated Neighbor Embedding}

Rather than reporting the generated data points from the sampled latent vectors in interpolation, we can find the nearest neighbor of the generated point among the training data points. The nearest neighbors for the generated points are then shown in a two dimensional grid. This is called the \textit{Manifold Interpolated Neighbor Embedding} (MINE) \cite{white2016sampling}. An example grid for MINE is shown in Fig. \ref{figure_interpolation}-b. 

\subsection{Analogy and J-Diagram}

We can have vector arithmetic in the latent space (see Section \ref{section_vector_arithmetic_DCGAN}). 
The vector arithmetic shows analogy relation between vectors. Let $\b{a}$, $\b{b}$, $\b{c}$, and $\b{d}$ be the latent vectors associated with four generated data points by the generator. 
We want to find the vector $\b{d}$ to satisfy the analogy relation:
\begin{align}
\b{a}:\b{b}::\b{c}:\b{d} \implies (\b{b} - \b{a}) = (\b{d} - \b{c}).
\end{align}
In the natural language processing models, a famous analogy relation is ``man : king :: woman : queen" \cite{mikolov2013distributed}. 
\textit{J-diagram} \cite{white2016sampling} is a J-shape diagram whose top left corner, top right corner, bottom left corner, and bottom right corner are the generated images for the source vector $\b{a}$, analogy target vector $\b{b}$, analogy target vector $\b{c}$, and the result vector $\b{d}$, respectively. The other images inside the diagram are obtained by linear or slerp interpolation between these vectors. 
This diagram shows how an image is obtained from another by changing its features. 
An example J-diagram, for a GAN trained on the CelebA dataset \cite{liu2015deep}, is shown in Fig. \ref{figure_interpolation}-c.
As can be seen, moving along an axis changes some specific features of generated images. In this figure, the vertical axis takes care of gender and the horizontal axis is responsible for hair color, hair type, and facial pose. 

\subsection{Attribute Vector}

We can obtain attribute vectors for an embedding space as follows \cite{white2016sampling}. For example, a smile vector \cite{larsen2016autoencoding} can be obtained by subtracting the latent vector for a neutral face from the latent vector for the smiling face of the same person. The resulted vector can be considered as the latent vector for smiling. Other attribute vectors can be obtained similarly. An attribute vector can be used to change a neutral image to an image having that attribute. For example, we can add the smiling latent vector, denoted by $\b{z}_s \in \mathbb{R}^p$, to the latent vector of a (neutral) face, denoted by $\b{z}_n \in \mathbb{R}^p$, to obtain a new latent vector which results in generation of a smiling face of that person, after being fed to the generator. Let $\eta \in \mathbb{R}$ be the weight for smiling. The vector $\b{z}_n + \eta \b{z}_n$ is the latent vector for face with different levels of smiling. A negative $\eta$ makes a smiling face neutral. 
An example of traversal along the smile vector is shown in Fig. \ref{figure_interpolation}-d. 

\subsection{Evaluation of Generated Images}

\begin{remark}[The Inception score {\citep[Section 4]{salimans2016improved}}]
A score, named the Inception score, can be used to assess the quality of generated images by GAN models. For this, we feed the generated images $\b{x}$ to the Inception network \cite{szegedy2016rethinking} which outputs predicted labels $p(y|\b{x})$ where $y$ is the label. On one hand, we desire this conditional label distribution to have low entropy. 
On the other hand, we want the generator to generate various images; hence, the marginal $p(y) = \int p(y | \b{x}) d\b{z}$ for $\b{x} = G(\b{z})$ should be large. The Inception score combines these two as:
\begin{align}
\text{Inception score} = \exp\!\Big(\mathbb{E}_{\b{x}} \Big[\text{KL}\big(p(y|\b{x}) \| p(y)\big)\Big]\Big).
\end{align}
The higher this score, the more quality the generated image has. It has been observed that this score is very similar to human's evaluation of the generated images \cite{salimans2016improved}. 
\end{remark}

Note that there exists another method for quantitative analysis of GAN results \cite{wu2017quantitative} which is based on the annealed importance sampling \cite{neal2001annealed}. 

\section{Applications of GAN}

We already saw that GAN can be used for data generation for any data type such as image. 
In the following, we introduce some other applications of GAN. 

\subsection{Image-to-Image Translation by GAN}

There exist some methods, based on GAN, for image-to-image translation where an image is generated corresponding to an input image. The correspondence can be any relation in different applications. In the following, we introduce these methods. 

\subsubsection{PatchGAN}

PatchGAN \cite{isola2017image} uses conditional GAN \cite{mirza2014conditional} (see Section \ref{section_conditional_GAN}) with a regularized loss function. It uses $\ell_1$ norm between data and generated data for regularization because $\ell_1$ norm encourages less blurring compared to $\ell_2$ norm. The loss is:
\begin{equation}\label{equation_PatchGAN_loss}
\begin{aligned}
\min_G \max_D\,\,\,\, &V'_C(D,G) + \lambda\, \mathbb{E}_{\b{x}, \b{z}, \b{y}}\big[\|\b{x} - G(\b{z},\b{y})\|_1\big],
\end{aligned}
\end{equation}
where $\lambda > 0$ is the regularization parameter and $V'_C(D,G)$ is a slightly modified version of Eq. (\ref{equation_conditional_GAN_loss}):
\begin{equation}\label{equation_PatchGAN_loss_V_c}
\begin{aligned}
V'_C(D,G) := &\,\mathbb{E}_{\b{x}, \b{y}}\Big[\log\!\big(D(\b{x}, \b{y})\big)\Big] \\
&+ \mathbb{E}_{\b{z}, \b{y}}\Big[\log\!\Big(1 - D\big(G(\b{z}, \b{y}), \b{y}\big)\Big)\Big],
\end{aligned}
\end{equation}
in which $\b{x}$ is the data, $\b{y}$ is the label of data, and $\b{z} \sim p_z(\b{z})$ is the noise. 
The generator $G$ takes the noise $\b{z}$ and label $\b{y}$ as input and generates data denoted by $G(\b{z}, \b{y})$. The discriminator takes the data point $\b{x}$ and its label $\b{y}$ as input. It judges whether the data point $\b{x}$ is real or generated. 

For the generator $G$, PatchGAN uses skips or connections between every layer $\ell$ and layer $(L-\ell)$ where $L$ is the number of layers. This is inspired by the structure of U-Net \cite{ronneberger2015u}. 
Moreover, the $\ell_1$ norm, used in Eq. (\ref{equation_PatchGAN_loss}), takes care of the low-frequency features of generated image \cite{isola2017image}. Therefore, the discriminator should take care of the high-frequency features. For this, the discriminator $D$ classifies the image patch-wise rather than the whole image. Every patch is judged to be whether it is real or generated (fake). We average the judgments of patches to have model averaging for classifying the whole image. This patch-wise classification of an image models the image as a Markov random field because it assumes that every patch of pixels is independent of other patches. 

The PatchGAN has been used for image-to-image translation $I_1 \mapsto I_2$, i.e., translating image $I_1$ to image $I_2$.
For this, we use $\b{x} = I_2$, $\b{y} = I_1$, and noise $\b{z} \sim p_z(\b{z})$ in Eqs. (\ref{equation_PatchGAN_loss}) and (\ref{equation_PatchGAN_loss_V_c}). In other words, the image $I_1$ is used as the label in conditional GAN, while the image $I_2$ is the data point. 
The generator takes $I_1$ and noise as the input, then generates a generated $I_2$. The discriminator takes $I_1$ and $I_2$ as input and judges whether $I_2$ is a real translation of $I_1$ or a generated translation. The generator and discriminator make each other stronger gradually. For training PatchGAN, we need a dataset with pairs of $(I_1, I_2)$ images. 
Some results of PatchGAN are shown in Figs. \ref{figure_Image_to_image}-a to \ref{figure_Image_to_image}-d.

\begin{figure}[!t]
\centering
\includegraphics[width=3.2in]{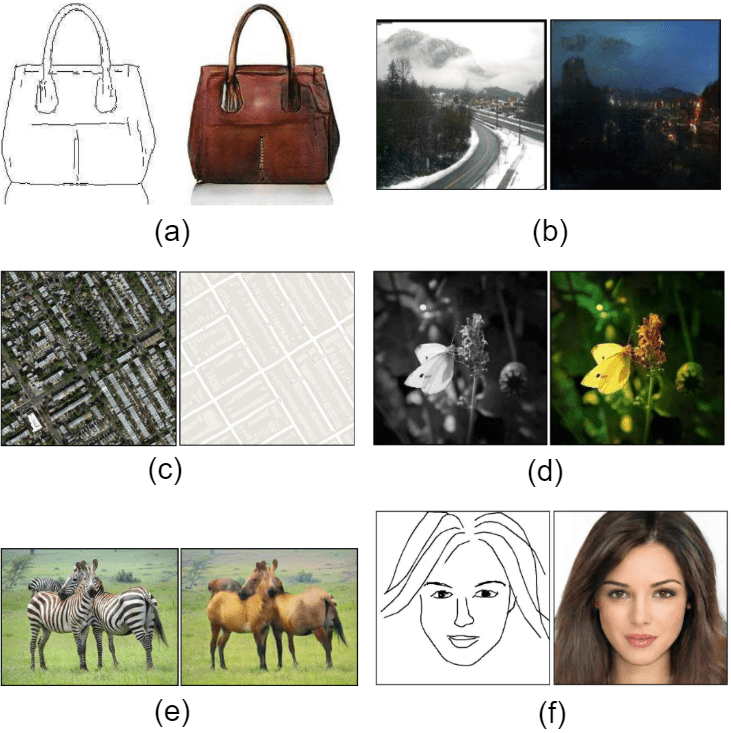}
\caption{Image-to-image translation: (a) coloring a sketch, (b) changing daylight to night darkness in image, (c) changing an aerial image to a map, (d) coloring a black-and-white image, (e) transforming zebra to horse and vice versa, and (f) generating a facial image from a facial sketch. 
Transformations in (a), (b), (c), and (d) are by PatchGAN whose credits are for \cite{isola2017image}.
Transformations in (e) and (f) are by CycleGAN (credit: \cite{zhu2017unpaired}) and DeepFaceDrawing (credit: \cite{chen2020deepfacedrawing}), respectively. 
}
\label{figure_Image_to_image}
\end{figure}

\subsubsection{CycleGAN}

% https://towardsdatascience.com/translate-a-horse-to-a-zebra-cyclegan-6c3e12e40f53

CycleGAN (Cycle-Consistent Generative Adversarial Networks) \cite{zhu2017unpaired} is a method for image-to-image translation without the need to pairs of training images (in contrast to PatchGAN which needs pairs of images). Let the two domains of image translation be $X$ and $Y$. In cycleGAN, we have two generators $G:X \rightarrow Y$ and $F:Y \rightarrow X$. 
Two discriminators also exist; one is $D_X$ for judging images in $X$ and $F(Y)$ and the other is $D_Y$ for judging images in $Y$ and $G(X)$.
Hence, we have two GAN losses:
\begin{equation*}
\begin{aligned}
&V(D_Y,G,X,Y) := \,\mathbb{E}_{\b{y} \sim p_\text{data}(\b{y})}\Big[\log\!\big(D_Y(\b{y})\big)\Big] \\
&~~~~~~~~~~~~~~~~~~ + \mathbb{E}_{\b{x} \sim p_\text{data}(\b{x})}\Big[\log\!\Big(1 - D_Y\big(G(\b{x})\big)\Big)\Big], \\
&V(D_X,F,X,Y) := \,\mathbb{E}_{\b{x} \sim p_\text{data}(\b{x})}\Big[\log\!\big(D_X(\b{x})\big)\Big] \\
&~~~~~~~~~~~~~~~~~~ + \mathbb{E}_{\b{y} \sim p_\text{data}(\b{y})}\Big[\log\!\Big(1 - D_X\big(F(\b{y})\big)\Big)\Big].
\end{aligned}
\end{equation*}
We also define the following cycle consistency loss to have $F(G(\b{x})) \approx \b{x}$ and $G(F(\b{y})) \approx \b{y}$:
\begin{equation*}
\begin{aligned}
&V_\text{cyc}(G,F) := \,\mathbb{E}_{\b{x} \sim p_\text{data}(\b{x})}\Big[\|F(G(\b{x})) - \b{x}\|_1\Big] \\
&~~~~~~~~~~~~~~~~~~ + \mathbb{E}_{\b{y} \sim p_\text{data}(\b{y})}\Big[\|G(F(\b{y})) - \b{y}\|_1\Big].
\end{aligned}
\end{equation*}
The overall loss function of CycleGAN is:
\begin{equation}
\begin{aligned}
\min_{G,F} \max_{D_X, D_Y}\,\,\,\, &V(D_Y,G,X,Y) + V(D_X,F,X,Y) \\
&+ \lambda V_\text{cyc}(G,F),
\end{aligned}
\end{equation}
where $\lambda>0$ is the regularization parameter. 
A result of CycleGAN is shown in Fig. \ref{figure_Image_to_image}-e.

\subsubsection{Deep Face Drawing}

DeepFaceDrawing \cite{chen2020deepfacedrawing} generates high-quality facial images from input sketches of faces. 
For training data, automatic sketches have been created using the Canny edge detection \cite{canny1986computational}. 
DeepFaceDrawing has three modules. The first one is the component embedding module which takes different facial patches as input and learns embedding vectors for them. Then, these vectors are fed to the feature mapping module which transform the embedding vectors to 2D facial features patches. These feature patches are then fed to the image synthesis module which is a conditional GAN (see Section \ref{section_conditional_GAN}), generating facial images from the feature patches. 
A result of DeepFaceDrawing is shown in Fig. \ref{figure_Image_to_image}-f.

\subsubsection{Simulated GAN (SimGAN)}

Simulated GAN (SimGAN) \cite{shrivastava2017learning} is an unsupervised method for transforming simulated images to real-world images while preserving the annotation information of images, such as image landmarks and pose of image. 
This transformation is performed by a refiner $R(.)$.
Let $\b{y}_j$'s, $\b{x}_i$'s, and $\widetilde{\b{x}}_i$'s denote the training unlabeled real-world images, the training simulated images, and the transformation of the training simulated images to real world, i.e., $\widetilde{\b{x}}_i = R(\b{x}_i)$.
In SimGAN, we train a discriminator $D$ by minimizing the loss:
\begin{align*}
\min_D\, -\sum_i \log(D(\widetilde{\b{x}}_i)) -\sum_j \log(1 - D(\b{y}_j)),
\end{align*}
so $D$ generates labels close to one and zero for the real-world and simulated images, respectively. 
After the discriminator is trained, we use it in the loss function of refiner. The refiner acts like the generator in GAN so it tries to confuse the discriminator; hence, the loss of refiner is:
\begin{align*}
\min_R\, -\sum_i \log(1 - D(R(\b{x}_i))) + \lambda\, \|\psi(R(\b{x}_i)) - \psi(\b{x}_i)\|_1, 
\end{align*}
where $\lambda>0$ is the regularization parameter, $\psi(.)$ is a mapping from the pixel space to a feature space, and the second term tries to minimize the reconstruction error in the feature space. 

\subsubsection{Interactive GAN (iGAN)}

% https://github.com/junyanz/iGAN
% iGAN: Interactive Image Generation via Generative Adversarial Networks

Interactive GAN (iGAN) \cite{zhu2016generative} allows users to edit the image interactively while the edited image remains realistic. 
In iGAN, we first project the image onto the manifold of image. The manifold of image is the manifold of latent noise in GAN. This projection is done by finding the closest latent noise which can generate the image:
\begin{align*}
\b{z}^* := \arg \min_{\b{z}} \|G(\b{z}) - \b{x}\|_2^2.
\end{align*}
In this sense, this projection is similar to the approach of inverse GAN models (see Section \ref{section_inverse_GAN_models}). 
Then, we edit the projected image, i.e., $\b{z}^*$, by different brushing and editing tools. Then, we add back the geometric and color changes to re-obtain the image, but edited this time. 

\subsection{Text-to-Image Generation}

% text-to-image: PPGAN \cite{nguyen2017plug}, 

There exist several methods for text-to-image generation where an image is generated from some descriptive caption. Some of these methods are \cite{reed2016learning,reed2016generative,zhang2017stackgan,reed2017generating,nguyen2017plug,zhang2018stackganPlusPlus}. 
Here, we introduce Stacked GAN (StackGAN) \cite{zhang2017stackgan} for text-to-image generation.

In StackGAN, we first generate embedding of texts by a pre-trained autoencoder. Let the text and the embedding of text be denoted by $\b{t}$ and $\b{\phi}_t$, respectively.
We have a stack of two stages of adversarial learning where the first stage generates a low-resolution image by drawing merely the shapes and colors. The loss function of the first stage is:
\begin{align}
& \min_D\, \mathbb{E}_{(\b{x}, \b{t}) \sim p_\text{data}(\b{x}, \b{t})}[\log(D(\b{x}, \b{\phi}_t))] \nonumber\\
&~~~~~~~ + \mathbb{E}_{\b{t} \sim p_\text{data}(\b{t}), \b{z} \sim p_z(\b{z})}[\log(1 - D(G(\b{z}), \b{\phi}_t))], \label{equation_loss_stackGAN_D}\\
& \min_G\, \mathbb{E}_{\b{t} \sim p_\text{data}(\b{t}), \b{z} \sim p_z(\b{z})}[\log(1 - D(G(\b{z}), \b{\phi}_t))] \nonumber\\
&~~~~~~~ + \lambda\, \text{KL}(q_z(\b{z}) \| p_z(\b{z})), \label{equation_loss_stackGAN_G}
\end{align}
where $q_z(\b{z})$ is the distribution of the latent code from the encoder of an autoencoder and $p_z(\b{z})$ is the prior on the latent noise. 
The next stage takes the low-resolution generated image from the first stage, denoted by $\b{s}$, as well as the text embedding as input and generates a high-resolution image. The adversarial loss of the second stage is the same as Eqs. (\ref{equation_loss_stackGAN_D}) and (\ref{equation_loss_stackGAN_G}) but it has $G(\b{s})$ rather than $G(\b{z})$ because the low-resolution image is fed to its generator. 
Some results of StackGAN are shown in Fig. \ref{figure_Text_to_image}.
An improved version of StackGAN is StackGAN++ \cite{zhang2018stackganPlusPlus}.

\begin{figure}[!t]
\centering
\includegraphics[width=2.3in]{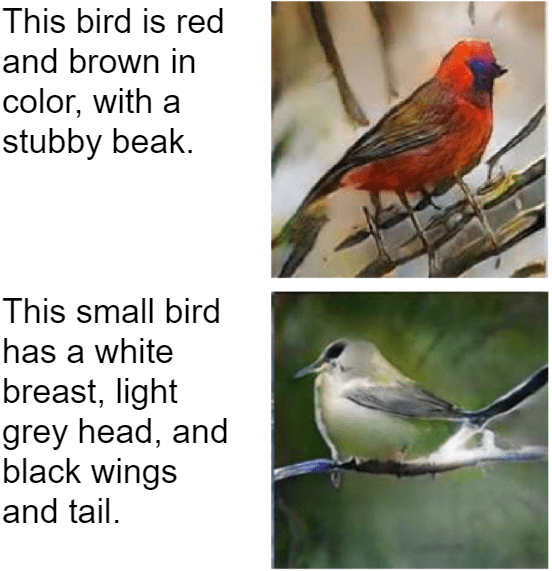}
\caption{Text-to-image translation by StackGAN. Images are from \cite{zhang2017stackgan}.
}
\label{figure_Text_to_image}
\end{figure}

\subsection{Mixing Image Characteristics}

\subsubsection{FineGAN}\label{section_FineGAN}

FineGAN \cite{singh2019finegan} is an unsupervised GAN model which disentangles the features of the generated image to background, shape, and color/texture. For this, we have three separate latent noise samples, i.e., the background code $\b{b}$, the parent code $\b{p}$, and the child code $\b{c}$, responsible for the background, shape, and color/texture, respectively. 
We assume we have $n_b$, $n_p$, and $n_c$ unknown categories (classes) for the background, shape, and color/texture, respectively, which will be learned by the FineGAN. The priors for the latent codes are categorical distribution where the probability of every class is $1/n_b$, $1/n_p$, and $1/n_c$, respectively. As every shape of some object may have several various textures in different images, we take $n_p < n_c$. 

FineGAN generates an image hierarchically. It starts with generating the background. For training data, we use a pre-trained detector to detect the background patches. 
We also use a continuous latent code $\b{z}_b$ which controls the background details within every category of background. 
The generator $G_b$ takes both $\b{b}$ and $\b{z}_b$ as input and $D_b$ is the discriminator for judging the generated background. 
We also use another discriminator $D'_b$ which is a binary classifier to two classes of foreground and background. This discriminator is pre-trained by cross entropy on the background and foreground patches. 
The loss of the background stage is:
\begin{equation}
\begin{aligned}
& \min_{G_b} \max_{D_b}\, \mathbb{E}_{\b{x}}[\log(D_b(\b{x}))] \\
&~~~~~~~~~~~~~~~~ + \mathbb{E}_{\b{b}, \b{z}_b}[\log(1 - D_b(G_b(b,\b{z}_b)))] \\
&~~~~~~~~~~~~~~~~ + \lambda\, \mathbb{E}_{\b{b}, \b{z}_b}[\log(1 - D'_b(G_b(b,\b{z}_b)))],
\end{aligned}
\end{equation}
where $\lambda>0$ is the regularization parameter. 

In the parent stage, we have two generators $G_{p,m}$ and $G_{p,f}$ generating the mask and initial texture of the object, respectively. 
A network $G_p$ takes the categorical $\b{p}$ and continuous $\b{z}_p$ as input and outputs $\b{z}'_p$ which is the input code for $G_{p,m}$ and $G_{p,f}$.
The $\b{z}_p$ controls the initial texture. 
The two generations of $G_{p,m}$ and $G_{p,f}$ are glued together to obtain the shape of object with some initial texture, which we denote by $\b{x}_p$. Then, we stitch it to the background obtained before. If the discriminator of this stage is $D_p$, the loss of this stage maximizes the mutual information between $\b{p}$ and $\b{x}_p$ as:
\begin{equation}
\begin{aligned}
& \max_{D_p, G_{p,m}, G_{p,f}}\, \mathbb{E}_{\b{p}, \b{z}_p}[\log(D_p(\b{p} | \b{x}_p))].
\end{aligned}
\end{equation}

In the child stage, we have two generators $G_{c,m}$ and $G_{c,f}$ generating the mask and color/texture of the object, respectively. 
A network $G_c$ takes $\b{c}$ and $\b{z}'_p$ as input and outputs $\b{z}'_c$ which is the input code for $G_{c,m}$ and $G_{c,f}$. 
The two generations of $G_{c,m}$ and $G_{c,f}$ are glued together to obtain the shape of object with color/texture, which we denote by $\b{x}_c$. Then, we stitch it to $\b{x}_p$, obtained before, to have the final generated image $\b{x}_f$.
The loss of the background stage is:
\begin{equation}
\begin{aligned}
& \min_{G_c} \max_{D_c}\, \mathbb{E}_{\b{x}}[\log(D_b(\b{x}))] + \mathbb{E}_{\b{c},\b{p}, \b{z}_p}[\log(1 - D_c(\b{x}_f))] \\
&~~~~~~~~~~ + \max_{D_c, G_{c,m}, G_{c,f}}\, \mathbb{E}_{\b{b}, \b{z}_b}[\log(1 - D'_c(\b{c}|\b{x}_c))],
\end{aligned}
\end{equation}
where the first two terms are for adversarial learning and the last term is for maximizing the mutual information. 
Some results of FineGAN are illustrated in Fig. \ref{figure_Mixing_image_characteristics}-a. 

\begin{figure}[!t]
\centering
\includegraphics[width=3.2in]{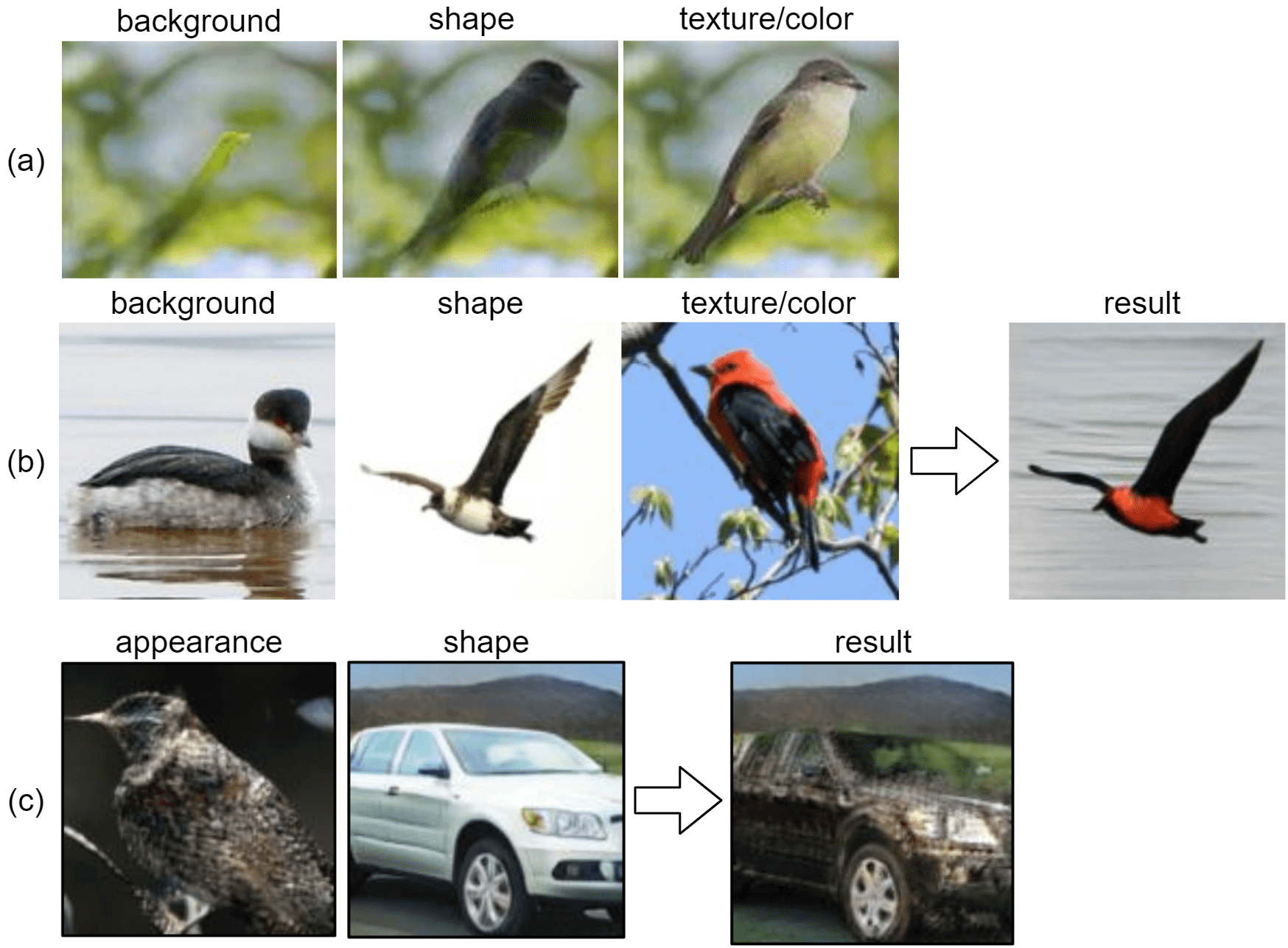}
\caption{Mixing image characteristics using GAN: (a) Generating an image with background, shape, and color characteristics by FineGAN, (b) generating an image by borrowing its characteristics from three images using MixNMatch, and (c) generating an image by borrowing its characteristics from different domains using improved MixNMatch. Images are from \cite{singh2019finegan}, \cite{li2020mixnmatch}, and \cite{ojha2021generating}, respectively.}
\label{figure_Mixing_image_characteristics}
\end{figure}

\subsubsection{MixNMatch}

MixNMatch \cite{li2020mixnmatch} is built upon FineGAN introduced in Section \ref{section_FineGAN}. It gives the user the opportunity to choose the background, shape, and color/texture from three pictures and it generates an image with the chosen characteristics. For this, we need an encoder network $E(\b{x})$ which gets three images for their background, shape, and color/texture characteristics and outputs the three latent codes $\b{b}$, $\b{p}$, and $\b{c}$. These codes are then fed to FineGAN. 

In MixNMatch, we use the idea of inverse in GAN (see Section \ref{section_inverse_GAN_models}) to have the input of the encoder and FineGAN networks. 
The input/output pair of encoder is $(\b{x} \sim p_\text{data}(\b{x}), \widetilde{\b{y}} = E(\b{x}))$ where $\widetilde{\b{y}}$ is the codes $\b{b}$, $\b{p}$, and $\b{c}$. 
The output/input pair of the FineGAN is $(\widetilde{\b{x}} = G(\b{y}), \b{y} \sim p_\text{code}(\b{y}))$ where $G(.)$ denotes the FineGAN and $p_\text{code}(\b{y})$ is the prior distribution of the latent codes $\b{b}$, $\b{p}$, and $\b{c}$. 
We have a discriminator $D$ which takes an image-code pair and judges whether it is the pair of encoder or the FineGAN. The loss of MixNMatch is:
\begin{equation}
\begin{aligned}
& \min_{G,E} \max_{D}\, \mathbb{E}_{\b{x} \sim p_\text{data}(\b{x})}\big[ \mathbb{E}_{\widetilde{\b{y}} = E(\b{x})} [\log(D(\b{x}, \widetilde{\b{y}}))] \big] \\
&~~~~~~~~~ + \mathbb{E}_{\b{y} \sim p_\text{code}(\b{y})}\big[ \mathbb{E}_{\widetilde{\b{x}} = G(\b{y})} [\log(1 - D(\widetilde{\b{x}}, \b{y}))] \big].
\end{aligned}
\end{equation}

MixNMatch lets the user choose image characteristics from the same domain and the generated image is from that domain. An example result of MixNMatch is shown in Fig. \ref{figure_Mixing_image_characteristics}-b.
Recently, an improved version of MixNMatch \cite{ojha2021generating} can take the characteristics from multiple domains and generate a new image having those characteristics. An example result of this version is also shown in Fig. \ref{figure_Mixing_image_characteristics}-c.

\subsection{Other Applications}

There are some other applications for GAN. One of the applications is inpainting some lost parts of image with GAN \cite{pathak2016context}. GAN learns to inpaint the lost part based on the available pixels in the image. 
A medical application of GAN is generating histopathology images which can give insight into cancer diagnosis from pathology whole slide images \cite{levine2020synthesis}. 
GAN has also been used for NLP \cite{li2018text,wang2019deep}, speech processing \cite{pascual2017segan,sriram2018robust}, network embedding \cite{dai2018adversarial}, logic \cite{nagisetty2021xai}, and sketch retrieval \cite{creswell2016adversarial}.  

\section{Autoencoders Based on Adversarial Learning}

Previously, variational Bayes was used in an autoencoder setting to have variational autoencoder \cite{kingma2014auto,ghojogh2021factor}. Likewise, adversarial learning can be used in an autoencoder setting \cite{makhzani2018unsupervised}. Several adversarial-based autoencoders exist which we introduce in the following. 

\subsection{Adversarial Autoencoder (AAE)}\label{section_AAE}

\subsubsection{Unsupervised AAE}

Adversarial Autoencoder (AAE) was proposed in \cite{makhzani2015adversarial}.
In contrast to variational autoencoder \cite{kingma2014auto,ghojogh2021factor} which uses KL divergence and evidence lower bound, AAE uses adversarial learning for imposing a specific distribution on the latent variable in its coding layer. 
The structure of AAE is depicted in Fig. \ref{figure_AAE_unsupervised}.
Each of the blocks $B_1$, $B_2$, and $B_3$ in this figure has several network layers with nonlinear activation functions.
AAE has an encoder (i.e., block $B_1$) and a decoder (i.e., block $B_2$). The input of encoder is a real data point $\b{x} \in \mathbb{R}^d$ and the output of decoder is the reconstructed data point $\widehat{\b{x}} \in \mathbb{R}^d$. One of the low-dimensional middle layers is the latent (or code) layer, denoted by $\b{z} \in \mathbb{R}^p$, where $p \ll d$. The encoder and decoder model conditional distributions $p(\b{z}|\b{x})$ and $p(\b{x}|\b{z})$, respectively. Let the distribution of the latent variable in the autoencoder be denoted by $q(\b{z})$. This is the posterior distribution of latent variable. The blocks $B_1$ and $B_3$ are the generator $G$ and discriminator $D$ of adversarial network, respectively. We also have a prior distribution, denoted by $p(\b{z})$, on the latent variable which is chosen by the user. This prior distribution can be a $p$-dimensional normal distribution $\mathcal{N}(\b{0}, \b{I})$, for example. 
The encoder of the autoencoder (i.e., block $B_1$) is the generator $G$ which generates the latent variable from the posterior distribution:
\begin{align}\label{equation_AAE_G}
G(\b{x}) = \b{z} \sim q(\b{z}).
\end{align}
The discriminator $D$ (i.e., block $B_3$) has a single output neuron with sigmoid activation function. It classifies the latent variable $\b{z}$ to be a real latent variable from the prior distribution $p(\b{z})$ or a generated latent variable by the encoder of autoencoder:
\begin{align}\label{equation_AAE_D}
D(\b{z}) := 
\left\{
    \begin{array}{ll}
        1 & \mbox{if } \b{z} \text{ is real, i.e., } \b{z} \sim p(\b{z}) \\
        0 & \mbox{if } \b{z} \text{ is generated, i.e., } \b{z} \sim q(\b{z}).
    \end{array}
\right.
\end{align}

As was explained, the block $B_1$ is shared between the autoencoder and the adversarial network. 
This adversarial learning makes both autoencoder and adversarial network stronger gradually because the autoencoder tries to generate the latent variable which is very similar to the real latent variable from the prior distribution. In this way, it tries to fool the discriminator. The discriminator, on the other hand, tries to become stronger not to be fooled by the encoder of autoencoder. 

\begin{figure}[!t]
\centering
\includegraphics[width=3.2in]{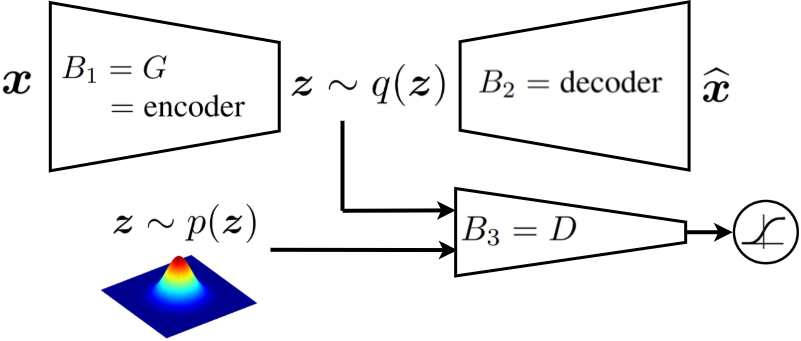}
\caption{The structure of unsupervised AAE.}
\label{figure_AAE_unsupervised}
\end{figure}

\begin{figure*}[!t]
\centering
\includegraphics[width=6in]{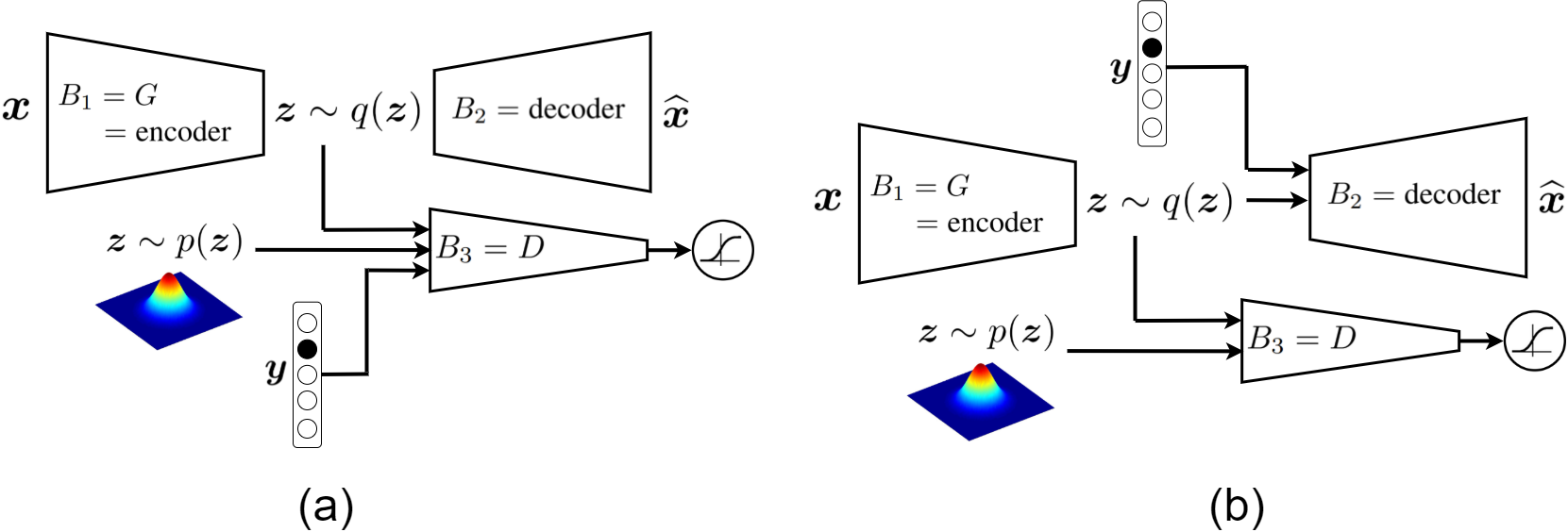}
\caption{Two structures for supervised AAE.}
\label{figure_AAE_supervised}
\end{figure*}

In AAE, we have alternating optimization \cite{ghojogh2021kkt} where reconstruction and regularization phases are repeated iteratively. 
In the reconstruction phase, the mean squared error is minimized between the data $\b{x}$ and the reconstructed data $\widehat{\b{x}}$. In the regularization phase, the discriminator and generator are updated using the GAN approach. For each of these updates, we use stochastic gradient descent \cite{ghojogh2021kkt} with backpropagation. 
Overall, the two phases are performed as:
\begin{align}
&B'_1, B_2^{(k+1)} := \arg \min_{B_1, B_2} \|\widehat{\b{x}} - \b{x}\|_2^2, \label{equation_AAE_alternating_opt_MSE} \\
&\left\{
    \begin{array}{ll}
        B_3^{(k+1)} := B_3^{(k)} - \eta^{(k)} \frac{\partial }{\partial B_3} \Big(V(B_3,B'_1)\Big), \\ 
        B_1^{(k+1)} := B'_1 - \eta^{(k)} \frac{\partial }{\partial B_1} \Big(V(B_3^{(k+1)},B_1)\Big), 
    \end{array}
\right. \label{equation_AAE_alternating_opt_B3_B1}
\end{align}
where $B_1 = G$ and $B_3 = D$ (see Fig. \ref{figure_AAE_unsupervised}). Eq. (\ref{equation_AAE_alternating_opt_MSE}) is the reconstruction phase and Eq. (\ref{equation_AAE_alternating_opt_B3_B1}) is the regularization phase. 

\subsubsection{Sampling the Latent Variable}

There are several approaches for sampling the latent variable $\b{z}$ from the coding layer of autoencoder with posterior $q(\b{z})$. In the following, we explain these approaches \cite{makhzani2015adversarial}:
\begin{itemize}
\item Deterministic approach: the latent variable is the output of encoder directly, i.e., $\b{z}_i = B_1(\b{x}_i)$. The stochasticity in $q(\b{z})$ is in the distribution of dataset, $p_\text{data}(\b{x})$.
\item Gaussian posterior: this approach is similar to what we have in variational autoencoder \cite{kingma2014auto,ghojogh2021factor}. The encoder outputs the mean $\b{\mu}$ and covariance $\b{\Sigma}$ and the latent variable is sampled from the Gaussian distribution, i.e., $\b{z}_i \sim \mathcal{N}(\b{\mu}(\b{x}_i), \b{\Sigma}(\b{x}_i))$. The stochasticity in $q(\b{z})$ is in both $p_\text{data}(\b{x})$ and the Gaussian distribution as output of encoder.
\item Universal approximator posterior: we concatenate the data point $\b{x}$ and some noise $\b{\eta}$, with a fixed distribution such as Gaussian, as input to the encoder. Hence, the latent variable is $\b{z}_i = B_1(\b{x}_i, \b{\eta}_i)$ where $\b{\eta}_i \sim \mathcal{N}(\b{0}, \b{I})$. The stochasticity in $q(\b{z})$ is in both $p_\text{data}(\b{x})$ and the noise $\b{\eta}$.
\end{itemize}

\subsubsection{Supervised AAE}

We have two variants for supervised AAE \cite{makhzani2015adversarial} where the class labels are utilized.  These two structures are illustrated in Fig. \ref{figure_AAE_supervised}. 
Let $c$ denote the number of classes. 
In the first variant, we feed the one-hot encoded label $\b{y} \in \mathbb{R}^c$ to the discriminator, i.e. block $B_3$, by concatenating it to the the latent variable $\b{z}$. In this way, the discriminator learns the label of point $\b{x}$ as well as discrimination of real and generated latent variables. This makes the generator or the encoder to generate the latent variables corresponding to the label of point for fooling the discriminator. 

In the second variant of supervised AAE, the one-hot encoded label $\b{y}$ is fed to the decoder, i.e. block $B_2$, by concatenating it to the latent variable $\b{z}$. In this way, the decoder learns to reconstruct the data point by using the label of point. This also makes the encoder, which is also the generator, generate the latent variable $\b{z}$ based on the label of point. Hence, the discriminator also gets stronger for competing the generator, in adversarial learning. 
Note that the two variants can also be combined, i.e., we can feed the one-hot encoded label can be fed to both the discriminator and the decoder. 

% $B_1 = G \\
% ~~~~~~= \text{encoder}$
% $B_2 = \text{decoder}$
% $B_3 = D_1$
% $B_4 = D_2$
% $\b{y} \sim \text{Cat}(\b{y})$
% $\b{W}$
% $\widetilde{\b{x}}$
% $\b{z} \sim p_z(\b{z})$
% $\b{x} \sim p_\text{data}(\b{x})$
% $\b{x} \sim p_g(\b{x})$

\subsubsection{Semi-supervised AAE}\label{section_semi_supervised_AAE}

Consider a partially labeled dataset. The labeled part of data has $c$ number of classes. AAE can be used for semi-supervised learning with partially labeled dataset. The structure for semi-supervised AAE is depicted in Fig. \ref{figure_AAE_semi_supervised}. 
This structure includes an autoencoder (blocks $B_1$ and $B_2$), an adversarial learning for generating latent variable (blocks $B_1$ and $B_3$), and an adversarial learning for generating class labels (blocks $B_1$ and $B_4$).
The encoder generates both label $\b{y} \in \mathbb{R}^c$ and latent variable $\b{z} \in \mathbb{R}^p$. The last layer of encoder for label has softmax activation function to output a $c$-dimensional vector whose entries sum to one (behaving as probability). The last layer of encoder for latent variable has linear activation function.

It has three phases which are reconstruction, regularization, and semi-supervised classification. 
In the reconstruction phase, we minimize the reconstruction error. The regularization phase trains the discriminator and generator for generating the latent variable $\b{z}$. The semi-supervised classification phase generates the one-hot encoded class label $\b{y}$ for the point $\b{x}$. If the point $\b{x}$ has a label, we use its label for training $B_1$ and $B_4$. However, if the point $\b{x}$ does not have any label, we randomly sample a label $\b{y} \in \mathbb{R}^c$ from a categorical distribution, i.e., $\b{y} \sim \text{Cat}(\b{y})$. This categorical distribution gives a one-hot encoded vector where the prior probability of every class is estimated by the proportion of class's population to the total number of labeled points. 
An iteration of the alternating optimization for semi-supervised learning is:
\begin{align}
& B'_1, B^{(k+1)}_2 := \arg \min_{B_1, B_2} \|\widehat{\b{x}} - \b{x}\|_2^2, \label{equation_semisupervised_AAE_alternating_opt_MSE} \\
&\left\{
    \begin{array}{ll}
        B_3^{(k+1)} := B_3^{(k)} - \eta^{(k)} \frac{\partial }{\partial B_3} \Big(V_z(B_3,B'_1)\Big), \\ 
        B''_1 := B'_1 - \eta^{(k)} \frac{\partial }{\partial B_1} \Big(V_z(B_3^{(k+1)},B_1)\Big), 
    \end{array}
\right. \label{equation_AAE_alternating_opt_B3_B1_prime_prime} \\
&\left\{
    \begin{array}{ll}
        B_4^{(k+1)} := B_4^{(k)} - \eta^{(k)} \frac{\partial }{\partial B_4} \Big(V_y(B_4,B''_1)\Big), \\ 
        B_1^{(k+1)} := B''_1 - \eta^{(k)} \frac{\partial }{\partial B_1} \Big(V_y(B_4^{(k+1)},B_1)\Big),
    \end{array}
\right. \label{equation_AAE_alternating_opt_B4_B1} 
\end{align}
where $V_z(D,G)$ and $V_y(D,G)$ are the loss functions defined in Eq. (\ref{equation_GAN_loss}) in which the generated variables are the latent variable $\b{z}$ and the one-hot encoded label $\b{y}$, respectively. 

\begin{figure}[!t]
\centering
\includegraphics[width=3.2in]{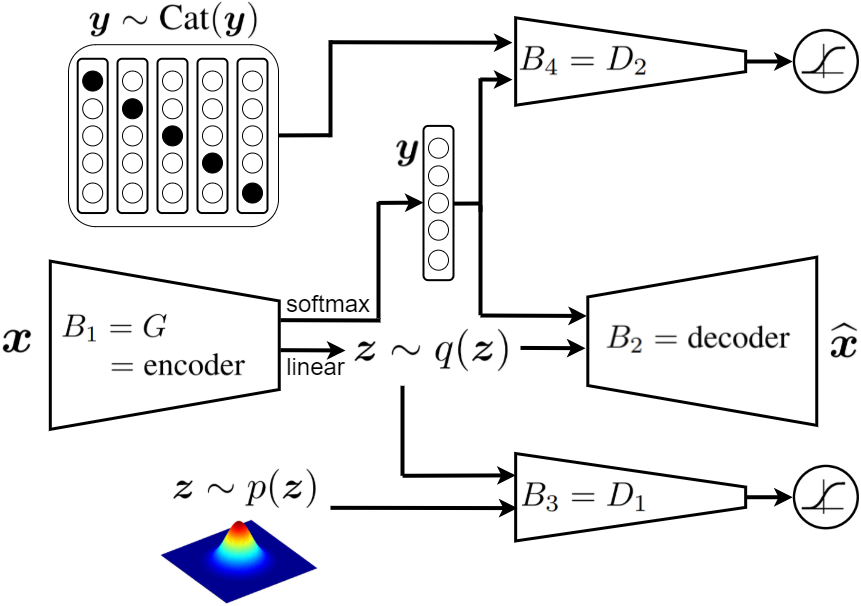}
\caption{The structure of semi-supervised AAE.}
\label{figure_AAE_semi_supervised}
\end{figure}

\subsubsection{Unsupervised Clustering with AAE}

We can use the structure of Fig. \ref{figure_AAE_semi_supervised} for clustering but rather than the classes, we assume we have $c$ number of clusters. We do not have a partially labeled part of dataset. All points are unlabeled and the cluster indices are sampled randomly by the categorical distribution. The cluster labels and the latent code are both trained in the three phases which were explained in Section \ref{section_semi_supervised_AAE}.

\subsubsection{Dimensionality Reduction with AAE}

The AAE can be used for dimensionality reduction and representation learning. The structure of AAE for this purpose is depicted in Fig. \ref{figure_AAE_dim_reduction}.
The encoder generates both label $\b{y} \in \mathbb{R}^c$ and latent variable $\b{z} \in \mathbb{R}^p$ where $p \ll d$.
Everything is similar to what we had before but a network layer $\b{W} \in \mathbb{R}^{c \times p}$ is added after the generated label by the encoder. The low-dimensional representation $\widetilde{\b{x}} \in \mathbb{R}^p$ is obtained as:
\begin{align}
\mathbb{R}^p \ni \widetilde{\b{x}} = \b{W}^\top \b{y} + \b{z},
\end{align}
where $\b{z}$ is the latent variable generated by the encoder. 
The three phases explained in Section \ref{section_semi_supervised_AAE} trains the AAE for dimensionality reduction. 

\begin{figure}[!t]
\centering
\includegraphics[width=3.3in]{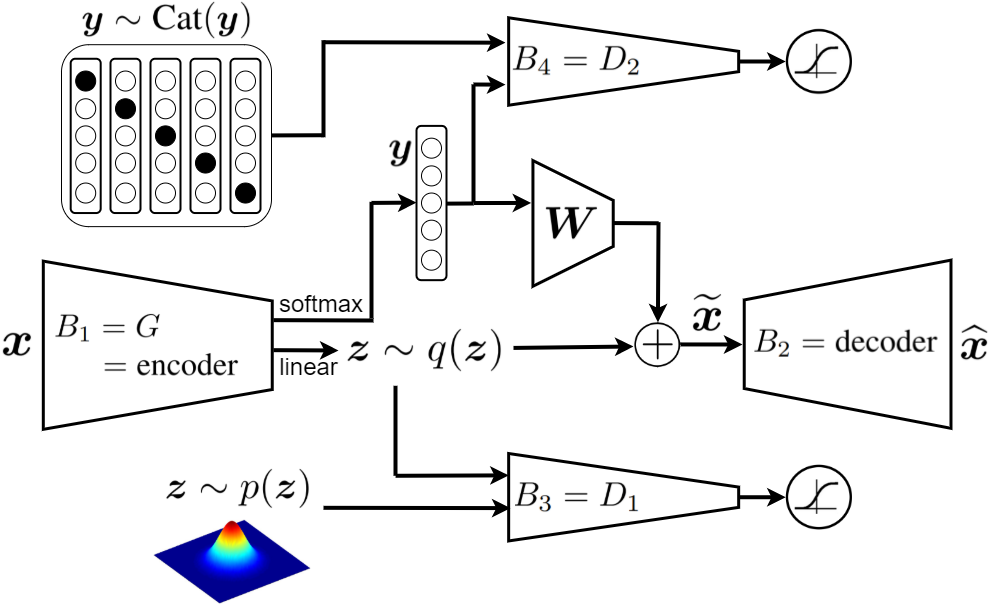}
\caption{The structure of AAE for dimensionality reduction.}
\label{figure_AAE_dim_reduction}
\end{figure}

\subsection{PixelGAN Autoencoder}

% $B_1 = G \\
% ~~~~~~= \text{encoder}$
% $B_2 = \text{decoder} \\
% ~~~~~~= \text{PixelCNN}$
% $B_3 = D$
% $\widehat{\b{x}}$
% $\b{z} \sim p_z(\b{z})$
% $\b{z} \sim q(\b{z}|\b{x})$
% $\b{n}$
% $\b{x} \sim p_\text{data}(\b{x})$
% recognition path
% generative path (PixelCNN) PixelCNN

\begin{figure}[!t]
\centering
\includegraphics[width=3.3in]{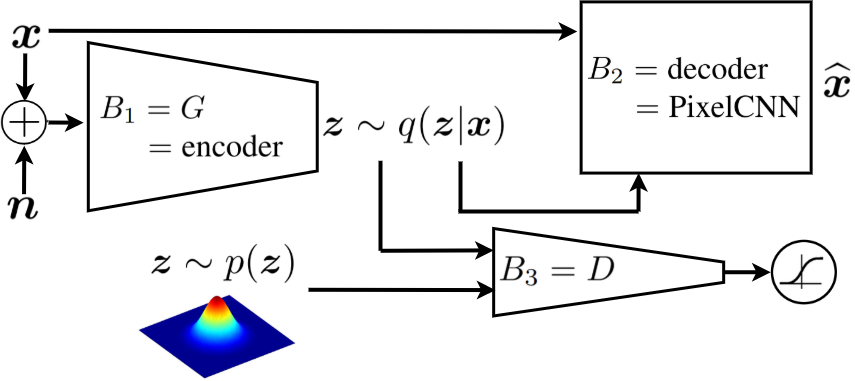}
\caption{The structure of PixelGAN.}
\label{figure_PixelGAN}
\end{figure}

In variational inference \cite{ghojogh2021factor}, the Evidence Lower Bound (ELBO) can be restated as \cite{hoffman2016elbo}:
\begin{equation}
\begin{aligned}
&\mathbb{E}_{\b{x} \sim p_\text{data}(\b{x})} [\log(p(\b{x}))] > \\
&~~~~~~~~~~~~~ -\mathbb{E}_{\b{x} \sim p_\text{data}(\b{x})} [\mathbb{E}_{q(\b{z}|\b{x})}[-\log(p(\b{x}|\b{z}))]] \\
&~~~~~~~~~~~~~ -\text{KL}(q(\b{z}) \| p(\b{z})) - \mathbb{I}(\b{z}; \b{x}),
\end{aligned}
\end{equation}
where $\mathbb{I}(.;.)$ denotes the mutual information. 
The first and second terms in this lower bound are the reconstruction error and the marginal KL divergence on the latent space. 
The PixelGAN autoencoder \cite{makhzani2017pixelgan} uses this lower bound but ignores its third term which is the mutual information because optimization of that term makes $\b{z}$ be independent of $\b{x}$. 
The reconstruction error is minimized in a reconstruction phase of training and the KL divergence part is taken care of by an adversarial learning. 

The structure of PixelGAN is shown in Fig. \ref{figure_PixelGAN}. The block $B_1$ is the encoder which gets the data point $\b{x}$ added with some noise $\b{n}$ as input and outputs the latent code $\b{z} \sim q(\b{z}|\b{x})$. The block $B_2$ is the decoder which is a PixelCNN network \cite{oord2016conditional} from which PixelGAN has borrowed its name. This decoder outputs the reconstructed data $\widehat{\b{x}}$. The generated latent code $\b{z}$ is used as the adaptive biases of layers in the PixelCNN. 
The blocks $B_1$ and $B_3$ are the generator and discriminator of adversarial learning, respectively, where we try to make the distribution of the generated latent code $\b{z}$ similar to some prior distribution $p(\b{z})$. 
In summary, blocks $B_1$ and $B_2$ are used for the reconstruction phase and blocks $B_1$ and $B_3$ are used for the adversarial learning phase. 

\subsection{Implicit Autoencoder (IAE)}

% $B_1 = G_1 = G_2 \\
% ~~~~~~= \text{encoder}$
% $B_2 = \text{decoder}$
% $B_3 = D_1$
% $B_4 = D_2$
% $\widehat{\b{x}}$
% $\b{z} \sim p_z(\b{z})$
% $\b{z} \sim q(\b{z}|\b{x})$
% $\b{z} \sim q(\b{z})$
% $\b{n}_1$
% $\b{n}_2$
% $\b{x} \sim p_\text{data}(\b{x})$

\begin{figure*}[!t]
\centering
\includegraphics[width=5in]{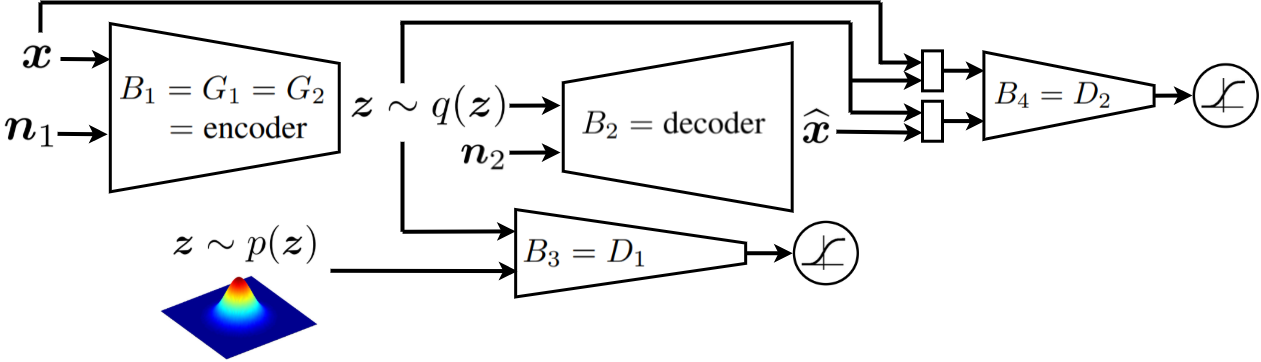}
\caption{The structure of IAE.}
\label{figure_IAE}
\end{figure*}

In variational inference \cite{ghojogh2021factor}, the Evidence Lower Bound (ELBO) can be restated as \cite{makhzani2018implicit}:
\begin{equation}\label{equation_ELBO_IAE}
\begin{aligned}
\mathbb{E}_{\b{x} \sim p_\text{data}(\b{x})} [\log(p(\b{x}))] \geq &-\text{KL}(q(\b{x}, \b{z})\|q(\widehat{\b{x}}, \b{z})) \\
&-\text{KL}(q(\b{z})\|p(\b{z})) - H_\text{data}(\b{x}),
\end{aligned}
\end{equation}
where $H_\text{data}(\b{x})$ is the entropy of data, $\widehat{\b{x}}$ is the reconstructed data, and $\b{z}$ is some latent factor. The proof is straightforward and can be found in {\citep[Appendix A]{makhzani2018implicit}}.
The first and second terms are the reconstruction and regularization terms, respectively. 
The Implicit Autoencoder (IAE) \cite{makhzani2018implicit} implements the above distributions in Eq. (\ref{equation_ELBO_IAE}), implicitly using networks. The structure of IAE is shown in Fig. \ref{figure_IAE}. The block $B_1$ is the encoder which takes data $\b{x}$ and noise $\b{n}_1$ as input and outputs the latent code $\b{z} \sim q(\b{z})$. The block $B_2$ takes the generated latent code $\b{z}$ as well as some noise $\b{n}_2$ and outputs the reconstructed data $\widehat{\b{x}}$. The blocks $B_1$ and $B_3$ are the generator $G_1$ and discriminator $D_1$ of the first adversarial learning used for making the distribution of latent code $\b{z}$ similar to some prior distribution $p(\b{z})$. 
The blocks $B_1$ and $B_4$ are the generator $G_2$ and discriminator $D_2$ of the second adversarial learning used for making the distribution of reconstructed data $\widehat{\b{x}}$ similar to data $\b{x}$. The inputs of $B_4$ are the pairs $(\b{x}, \b{z})$ and $(\widehat{\b{x}}, \b{z})$ to model the distributions $q(\b{x}, \b{z})$ and $q(\widehat{\b{x}}, \b{z})$ in Eq. (\ref{equation_ELBO_IAE}). 
In summary, three phases of training are performed which are the reconstruction phase and the two adversarial learning phases. 

% \begin{align*}
% &\mathbb{E}_{\b{x} \sim p_\text{data}(\b{x})} [\log(p(\b{x}))] \geq \mathbb{E}_{\b{x} \sim p_\text{data}(\b{x})} [\mathbb{E}_{q(\b{z}|\b{x})} [\log(\frac{p(\b{x},\b{z})}{q(\b{z}|\b{x})})]] \\
% &= \int q(\b{x}, \b{z}) \log(\frac{p(\b{x},\b{z})}{q(\b{z}|\b{x})})\, d\b{x}\, d\b{z} \\
% &= \int q(\b{x}, \b{z}) \log(\frac{p(\b{x}|\b{z}) p(\b{z})}{q(\b{z}|\b{x})})\, d\b{x}\, d\b{z} \\
% &= -\int q(\b{x}, \b{z}) \log(\frac{q(\b{z}|\b{x})}{p(\b{x}|\b{z})})\, d\b{x}\, d\b{z}
% \end{align*}

\section{Conclusion}\label{section_conclusion}

This was a tutorial and survey paper on GAN, adversarial learning, adversarial autoencoder, and their variants. We covered various aspects and theories of the methods as well as applications of GAN. 

% \section*{Acknowledgement}

% The authors hugely thank the instructors of ... course at the ... University (you can see their YouTube channel) whose lectures partly covered some materials mentioned in this tutorial paper. 

% \appendix

\bibliography{References}
\bibliographystyle{icml2016}

\end{document}